\setlist[enumerate]{leftmargin=.5in}
\setlist[itemize]{leftmargin=.5in}
\crefname{hypothesis}{Hypothesis}{Hypotheses}
\title{Convergence of Gradient Descent for Recurrent Neural Networks: \\A Nonasymptotic Analysis}
\author{Semih Cayci\thanks{Department of Mathematics, RWTH Aachen University, Aachen, Germany
  (\email{cayci@mathc.rwth-aachen.de}).}
\and Atilla Eryilmaz\thanks{Department of Electrical and Computer Engineering, The Ohio State University, Columbus, OH 
  (\email{eryilmaz.2@osu.edu}).}}
\newcommand{\ms}{\mu_{0,t}}
\newcommand{\mb}{\mu_{1,t}}
\newcommand{\Sf}[1]{\mathsf{#1}}
\newcommand{\pact}[1]{\Gamma_{#1}^\uind{i}}
\newcommand{\qact}[1]{\check{\Gamma}_{#1}^\uind{i}}
\newcommand{\Zt}{Z_t^\uind{i}(\X;\Phi(0))}
\newcommand{\vpt}{\Gamma_t^\uind{i}}
\newcommand{\vvpt}{\check{\Gamma}_t^\uind{i}}
\newcommand{\exit}{\mathsf{exit}}
\newcommand{\bd}{\varphi_T^\mathsf{max}}
\newcommand{\norm}[2]{\|\cdot\|_{#1,#2}}
\newcommand{\tF}{\tilde{F}}
\newcommand{\iwl}{\cF_{\bm{\nb}}}
\newcommand{\er}{\widehat{\mathcal{R}}_\cS}
\newcommand{\LL}{\left}
\newcommand{\RR}{\right}
\newcommand{\W}{\mathbf{W}}
\newcommand{\U}{\mathbf{U}}
\newcommand{\Lin}{F_t^\mathsf{Lin}}
\newcommand{\nb}{\bar{\nu}}
\newcommand{\del}{\partial}
\newcommand{\ind}{\mathbbm{1}}
\newcommand{\bE}{\mathbb{E}}
\newcommand{\bN}{\mathbb{N}}
\newcommand{\bP}{\mathbb{P}}
\newcommand{\bR}{\mathbb{R}}
\newcommand{\X}{\mathbf{X}}
\newcommand{\bZ}{\mathbb{Z}}
\newcommand{\Xt}{{[\X]_t}}
\newcommand{\calF}{\mathcal{F}}
\newcommand{\fK}{\kappa}
\newcommand{\cB}{\mathscr{B}}
\newcommand{\cC}{\mathscr{C}}
\newcommand{\cF}{\mathscr{F}}
\newcommand{\cH}{\mathscr{H}}
\newcommand{\cL}{\mathscr{L}}
\newcommand{\cO}{\mathcal{O}}
\newcommand{\cS}{\mathcal{S}}
\newcommand{\bv}{\bm{v}}
\newcommand{\gz}{{>0}}
\newcommand{\Wii}{{W_{ii}}}
\newcommand{\uind}[1]{{({#1})}}
\newtheorem*{theoreminf}{Theorem}
\newtheorem*{corollaryinf}{Corollary}
\begin{document}

\maketitle

% REQUIRED
\begin{abstract}
We analyze recurrent neural networks with {diagonal hidden-to-hidden weight matrices}, trained with gradient descent in the supervised learning setting, and prove that gradient descent can achieve optimality \emph{without} massive overparameterization. Our in-depth nonasymptotic analysis (i) provides {improved} bounds on the network size $m$ in terms of the sequence length $T$, sample size $n$ and ambient dimension $d$, and (ii) identifies the significant impact of long-term dependencies in the dynamical system on the convergence and network width bounds characterized by a cutoff point that depends on the Lipschitz continuity of the activation function. Remarkably, this analysis reveals that an appropriately-initialized recurrent neural network trained with $n$ samples can achieve optimality with a network size $m$ that scales only logarithmically with $n$. This sharply contrasts with the prior works that require high-order polynomial dependency of $m$ on $n$ to establish strong regularity conditions. Our results are based on an explicit characterization of the class of dynamical systems that can be approximated and learned by recurrent neural networks via norm-constrained transportation mappings, and establishing local smoothness properties of the hidden state with respect to the learnable parameters.
\end{abstract}

% REQUIRED
\begin{keywords}
recurrent neural networks, neural tangent kernel, gradient descent
\end{keywords}

% REQUIRED
\begin{MSCcodes}
68Q32, 68T05
\end{MSCcodes}

\section{Introduction}
Recurrent neural networks (RNNs) have become one of the most widely-used approaches in learning dynamical systems that inherit memory, which have been widely used across a myriad of applications including natural language processing, speech recognition, time-series analysis, text generation, visual attention \cite{rumelhart1986learning, goodfellow2016deep, pascanu2013construct, graves2005framewise, mnih2014recurrent, sutskever2011generating, graves2013speech, graves2013generating}. In particular, the possibility of deploying large recurrent neural networks enabled their widespread use in practice by resolving the stability and of gradient-descent-based training of RNNs \cite{goodfellow2016deep, jaeger2002adaptive, pascanu2013construct}. Despite the wide popularity and success of RNNs in practice, a concrete theoretical understanding of RNNs is still in a nascent stage. In particular, the ability of RNNs in efficient learning of dynamical systems, and the associated challenges such as the so-called exploding gradient problem and the required number of neurons for provably-good performance necessitate theoretical characterization. Recently, there has been a surge of interest in the analysis of generalization properties \cite{allen2019can, boutaib2022path} and the kernel limit of recurrent neural networks \cite{alemohammad2020recurrent}. A number of theoretical studies \cite{allen2019can, allen2019convergence} have recently investigated the optimization and generalization performance of RNNs trained with gradient descent in the kernel regime, where the neural network width $m$ scales polynomially with the training set size $n$ and sequence-length $T$ with extremely large exponents as large as 100. Since contemporary neural networks typically employ much more modest number of hidden units, such large dependencies of the network width $m$ on $n, T$ and {error probability} $\delta$ create a significant gap between theory and practice. In this work, we address this problem with the ultimate goal of narrowing the gap between theory and practice for the standard RNNs.

We ask the following questions in this work:
\begin{itemize}
    \item Can recurrent neural networks trained with gradient descent achieve optimality with number of neurons that grows logarithmically with $n$, $1/\delta$ and polynomially with $T$? 
    % What kind of temporal structure enables efficient training of RNNs?
    \item  What is the impact of memory (long-term dependencies) in the dynamical system on the network width and convergence rate?
    \item What class of dynamical systems can be expressed by the reproducing kernel Hilbert space of the neural tangent kernel? How efficiently such neural networks can be approximated and learned by finite-width neural networks?
    \item What is the impact of regularization on the iteration complexity and scalability of training RNNs?
    
\end{itemize}
To answer these questions, we perform a finite-time and finite-width analysis of RNNs and establish sharp convergence bounds.

\textbf{Neural tangent kernel analysis.} An important paradigm on the analysis of overparameterized neural networks is the so-called neural tangent kernel (NTK) analysis, which considers wide randomly-initialized neural networks in the near-initialization regime \cite{jacot2018neural, du2019gradient, chizat2019lazy}. The original works required massive overparameterization, which implies the number of neurons $m$ scales polynomially with the sample size $n$ \cite{du2019gradient, chizat2019lazy, oymak2020toward}. An alternative approach was proposed by \cite{ji2019polylogarithmic}, proving that shallow feedforward ReLU networks can achieve global optimality with only $\log(n)$ width for binary classification under early stopping. Our analysis based on early stopping builds on, and utilizes tools from \cite{ji2019polylogarithmic}. We note that these works consider only feedforward neural networks, thus they are not applicable for RNNs, which are significantly more challenging to analyze due to the weight-sharing and the existence of memory.

\textbf{Recurrent neural networks.} RNNs have been shown to be universal approximators for dynamical systems, which prove their effectiveness in the learning context \cite{schafer2007recurrent, gonon2023approximation, bishop2023recurrent}. On the other hand, these works on the approximation properties of RNNs are existence results, which do not prescribe methods to find the optimal weights for a given learning problem. As such, empirical risk minimization paradigm is predominantly used in supervised learning for dynamical systems as well, and training RNNs by using descent-type optimization algorithms has been the standard method due to their impressive empirical performance \cite{graves2012supervised, robinson1987utility, goodfellow2016deep}. On the other hand, a concrete theoretical understanding of the training dynamics of RNNs with descent-type methods remains elusive due to the specific challenges of RNNs, including weight-sharing and long-term dependencies. A convergence analysis for linear RNNs with identity function as the activation function was given in \cite{hardt2018gradient}. In order to extend the NTK analysis to RNN architectures with nonlinear activation functions, the existence of the neural tangent kernel was proved for RNNs in \cite{alemohammad2020recurrent}. In \cite{allen2019convergence}, the convergence of GD for RNNs was analyzed in the near-initialization regime, but the required number of neurons was polynomial in $n$ with exponents as big as 100, leading to a massive overparameterization bounds that are far from practical observations. In \cite{lam2023kernel}, the infinite-width limit of RNNs was analyzed in the kernel regime in the online setting with ergodic input data. Unlike these existing works, in our work, we establish a finite-time, finite-sample and finite-width analysis of gradient descent for recurrent neural networks to prove sharp bounds on $m$ in terms of system parameters including the number of samples $n$ and the sequence-length $T$. 

\subsection{Our Contributions}
The goal in our paper is to establish sharp bounds on the required number of neurons $m$ and the number of iterations $\tau$ of gradient descent for \emph{standard} Elman-type RNNs to achieve near-optimality in learning dynamical systems on a sequence of length $T$. In particular, we have the following main contributions.
\begin{itemize}
    \item \textbf{Sharp bounds for RNNs trained with gradient descent.} 
    We show that, a randomly-initialized RNN of
    \begin{itemize}
        \item[$\star$] $m=\Omega\LL(\Sf{poly}(\mu_T, T)\frac{\log(nT/\delta)}{\epsilon^2}\RR)$ neurons trained with $\tau = \Omega(\Sf{poly}(\mu_T)/\epsilon^2)$ iterations of \textit{projected} gradient descent {and stochastic gradient descent}, and
        \item[$\star$] $m=\Omega(\Sf{poly}(\mu_T, T)\log(nT/\delta)/\epsilon^3)$ neurons trained with $\tau = \Omega(\Sf{poly}(\mu_T,T)/\epsilon^3)$ iterations of gradient descent (without projection),
    \end{itemize} achieves $\epsilon>0$ training error with probability at least $1-\delta$ over the random initialization. Notably, the required neural network size $m$ depends only logarithmically on the number of samples $n$.
    
    \item \textbf{Impact of memory on the required network size and iteration complexity.} Our results reflect the impact of long-term dependencies (i.e., memory) in the dynamical system and the resulting exploding gradient problem in supervised learning for dynamical systems. The quantity $\mu_T$ that scale both $m$ and $\tau$ is $\cO(1)$ for dynamical systems with short-term memory, while it becomes $e^{\Omega(T)}$ for systems with long-term memory, which is characterized by a cutoff that depends on the regularity of the activation function.
    \item \textbf{Characterization of dynamical systems in the infinite-width limit.} We provide an explicit characterization of the class of dynamical systems that can be characterized by the infinite-size limit of randomly-initialized RNNs in the kernel regime.
    \item \textbf{Impact of explicit regularization on RNNs.} We show the impact of explicit regularization on the performance of RNNs for learning dynamical systems in terms of the required network size and the number of iterations. In particular, we prove that $\norm{2}{\infty}$-regularization improves both the iteration complexity and the required network width by a factor of $\cO(1/\epsilon)$ in learning the same class of dynamical systems.
\end{itemize}
We propose new proof techniques based on the local smoothness properties of the hidden state for RNNs, which enable a sharp analysis by addressing the unique complications of RNNs in the near-initialization regime.

% In this paper, we show that RNNs with  trained with gradient descent can achieve $\epsilon$-optimality with probability at least $1-\delta$ over the random initialization. Unlike the existing works, the neural network width $m$ is fully logarithmic in $n$ and $\delta$, and scales with $T^2$. Our bounds on $m$ and $\tau$ also depends on the RKHS norm of the underlying dynamical system. Our analysis reveals the notorious impact of the memory in the dynamical system, as measured by the $\ell_\infty$-norm of the weight matrix for the hidden state, in terms of a cutoff point characterized by the inverse of $\sigma_1$, the modulus of Lipschitz continuity of the activation function. In particular, the dependence of $m$ and $\tau$ on the sequence length $T$ shifts from $\cO(T^2)$ to $\cO(\exp(T))$ if the $\ell_\infty$-norm exceeds $1/\sigma_1$

% In particular, their ability to exploit the inherent memory structures in a dynamical system for \emph{efficiently} learning temporal dependencies, and the complex training dynamics in the 

% and the existing theoretical works in the near-initialization or kernel regime fall short in explaining these phenomena in RNNs. The existing theoretical works state that the neural network width should scale polynomially with the training set size and sequence-length with huge exponents as large as $100$, which is far from practice \cite{allen2019convergence, allen2019can}. In this paper, we 

\subsection{Notation}
For any $\textbf{X}=[X_1,X_2,\ldots,X_n]\in\bR^{m\times n}$ and $t\leq n$, we denote $[\textbf{X}]_t=[X_0,X_1,\ldots,X_{t-1}]\in\bR^{m\times t}$. For any $x\in\bR^n$, $\cB_{p}(x,\rho)=\LL\{y\in\bR^{n}:\|x-y\|_{p}\leq \rho\RR\}.$ For a finite set $A$, $\mathsf{Unif}(A)$ denotes uniform distribution over $A$, and $\mathsf{Rad}_\alpha$ denotes $\Sf{Unif}(\{-\alpha,\alpha\})$ for $\alpha\in\bR_+$. $\varsigma(X_j,j\in\mathcal{J})$ denotes the $\sigma$-algebra generated by a collection of random variables $X_j,j\in\mathcal{J}$. A differentiable function $f:\bR^d\rightarrow\bR$ is $\beta$-smooth if it has $\beta$-Lipschitz gradients.
% For two vectors $u,v\in\bR^d$, $u\otimes v=(u_1v_1,\ldots,u_dv_d)^\top$ denotes their Kronecker product.
% \section{Main Questions}

\section{Learning Dynamical Systems with Empirical Risk Minimization}
In this section, we formally define the supervised learning setting for dynamical systems, describe the RNN architecture that we will study, and provide the algorithms to solve the empirical risk minimization.
\subsection{Empirical Risk Minimization for Dynamical Systems}
We consider a causal discrete-time dynamical system that evolves according to 
\begin{align*}
    h_t&=g\LL(h_{t-1},X_{t-1}\RR),\\
    Y_t&=\phi_t(h_t),
\end{align*}
for $t=1,2,\ldots,T$ with the initial condition $h_0\in\bR$, where we have an $\bR^{d\times T}\times\bR^T$-valued random variable $({\mathbf{X}},Y)$ with the input matrix ${\mathbf{X}}=[X_0,X_1,\ldots,X_{T-1}]$ with distribution $P$, and the corresponding output vector $Y=[Y_1,Y_2,\ldots,Y_T]^\top\in\bR^T$. We denote the input-output relation more compactly as  $$Y_t=F_t^\star([{\mathbf{X}}]_t),~t=1,2,\ldots,T.$$ We denote $\bm{F}^\star = (F_t^\star)_{t>0}$. Given a class of predictors $F_t(\cdot;\Phi),t=1,2,\ldots,T$ parameterized by $\Phi\in\bR^p$, our goal is to find $\Phi^\star\in\bR^p$ that minimizes the empirical risk 
\begin{equation}
\er(\Phi)=\frac{1}{n}\sum_{j=1}^n\sum_{t=1}^T\Big(F_t([\mathbf{X}^{(j)}]_t;\Phi)-Y_t^{(j)}\Big)^2,
\label{eqn:erm}
\end{equation}
where $\cS=\Big((\mathbf{X}^{(1)},Y^{(1)}),\ldots,(\mathbf{X}^{(n)},Y^{(n)})\Big)$ is the training set with $n$ independent and identically distributed (iid) input-output pairs $(\mathbf{X}^{(j)},Y^{(j)})$ such that $\mathbf{X}^\uind{j}\sim P$ and $Y_t^\uind{j}=F_t^\star([\mathbf{X}^\uind{j}]_t)$ for $j=1,2,\ldots,n$ and $t=1,2,\ldots,T$. Throughout the paper, we assume that $P$ is compactly supported such that $\max_{0\leq t < T}\|X_t\|_2\leq 1$ almost surely. We usually drop the notation $[\X]_t$ and write $F_t^\star([\X]_t)$ as $F_t^\star(\X)$ and $F_t(\Xt;\Phi)$ as $F_t(\X;\Phi)$ instead.

\subsection{Elman-Type Recurrent Neural Networks}
The input data is denoted as the matrix $\mathbf{X}=[X_0, X_1,\ldots]$ where $X_t\in\bR^d$ for the ambient dimension $d\in\bN$. We consider an Elman-type recurrent neural network (RNN) of width $m\in\bN$ with learnable parameters $\W\in\bR^{m\times m}$ and $\U\in\bR^{m\times d}$, where the rows of $\U$ are denoted as $U_i^\top$ for $i=1,2,\ldots,m$. 
% We denote the learnable parameters compactly as $\Theta=\LL[\W~\U\RR]\in\bR^{m\times(m+d)}$, and use $\Theta$ and $[\W~\U]$ interchangeably throughout the paper. 

In this work, we consider \emph{diagonal} {hidden-to-hidden weight matrix} $\W\in\bR^m\times\bR^m$ and general {input-to-hidden weight matrix} $\U\in\bR^{m\times d}$, which facilitates the analysis considerably in the large-network setting, while still capturing the essence of RNNs. We note that diagonal weight matrices are widely considered in the literature to understand the dynamics of deep neural networks \cite{woodworth2020kernel, chou2023robust, nacson2022implicit}. {We denote $\Theta_i := \begin{bmatrix}
    \Wii\\U_i
\end{bmatrix}\in\bR^{d+1}$ for $i=1,2,\ldots,m$, and $\Theta^\top := (\Theta_1^\top~\Theta_2^\top~\ldots~\Theta_m^\top)$.}

The central structure in an RNN is the sequence of hidden states $H_t\in\bR^m$, which evolves according to the following rule:
\begin{equation}
    H_t^{(i)}(\X;\Theta)=\sigma\Big(W_{ii}H_{t-1}^{(i)}(\mathbf{X};\Theta)+U_i^\top X_{t-1}\Big),~i=1,2,\ldots,m,
\end{equation}
for $t=1,2,\ldots$ where $H_0=0$, and $\sigma:\bR\rightarrow\bR$ is the (nonlinear) activation function. The evolution of the hidden state $(H_t)_{t>0}$ over time is illustrated in Figure \ref{fig:rnn}.
\begin{figure}[hbt]
    \centering
    \includegraphics[width=.7\textwidth]{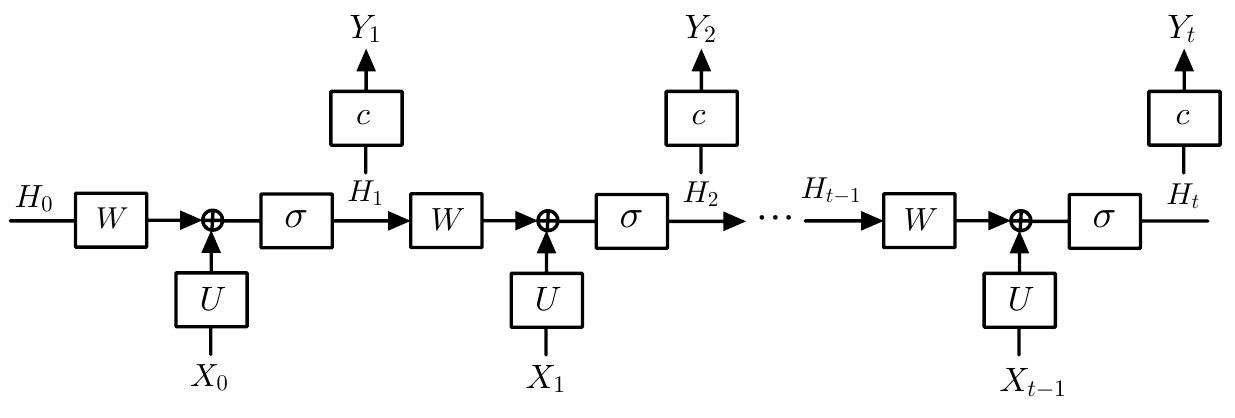}
    \caption{Unfolded representation of an Elman-type recurrent neural network in the matrix notation $Y_t = c^\top\vec{\sigma}(\W H_{t-1}+\U X_{t-1})$, where $\vec{\sigma}:\bR^m\mapsto\bR^m$ applies $\sigma$ pointwise to each component of its input.}
    \label{fig:rnn}
\end{figure}
We consider a linear read-out layer with weights $c\in\bR^m$, which leads to the output
\begin{align}
    \begin{aligned}
    F_t(\mathbf{X};\Phi) &= \frac{1}{\sqrt{m}}\sum_{i=1}^mc_iH_t^{(i)}(\X;\Theta),\\
    &= \frac{1}{\sqrt{m}}\sum_{i=1}^mc_i\sigma\Big(W_{ii}H_{t-1}^{(i)}(\X;\Theta)+U_i^\top X_{t-1}\Big),
    \label{eqn:rnn}
    \end{aligned}
\end{align}
{where $\Phi := \begin{bmatrix}
    \Theta\\c
\end{bmatrix}$ is the parameter set of the recurrent neural network.}
The characteristic property of RNNs is weight-sharing: throughout all time-steps $t\in\bZ_+$, the same weights $\Phi$ are utilized, which enables the hidden state $(H_t)_{t
\in\bZ_+}$ to summarize the entire history $[\mathbf{X}]_t$. Also, note that the neural network $F_t$ reduces to a two-layer feedforward neural network of width $m$ if $\W=0$, therefore the distinctive property of RNNs is the history-dependence established through $\W\neq 0$ and weight-sharing.

In this work, we consider a smooth activation function $\sigma\in\cC^2(\bR,\bR)$ such that
\begin{align}
    \sup_{z\in\bR}|\sigma(z)| \leq \sigma_0,\mbox{~}
    \sup_{z\in\bR}|\sigma'(z)| \leq \sigma_1,\mbox{~and~}
    \sup_{z\in\bR}|\sigma''(z)| \leq \sigma_2.
    \label{eqn:sigma}
\end{align}
Many widely-used activations including $\tanh$, logistic, Gaussian have these properties \cite{goodfellow2016deep}.

\textbf{Notation.} For any $g(\Phi)=g(\Theta,c)$, we use the notation 
% \begin{align*}
%     \partial_\W^\top g(\Theta)&=\LL[\partial_{W_{11}}g(\Theta),\ldots,\partial_{W_{mm}}g(\Theta)\RR],\\
%     \partial_\U^\top g(\Theta)&=\LL[\partial_{U_{1}}^\top g(\Theta),\ldots,\partial_{U_{m}}^\top g(\Theta)\RR],\\
%     \partial_\Theta g(\Theta)&=\LL[\substack{\partial_\W g(\Theta)\\\partial_\U g(\Theta)}\RR]\mbox{ and }\partial_{\Theta_i}g(\Theta)=\LL[\substack{\partial_{\Wii} g(\Theta)\\\partial_{U_i} g(\Theta)}\RR],1\leq i\leq m,
% \end{align*}
% throughout the paper.
\begin{align*}
    % \partial_\W^\top g(\Theta)&=\LL[\partial_{W_{11}}g(\Theta),\ldots,\partial_{W_{mm}}g(\Theta)\RR],\\
    % \partial_\U^\top g(\Theta)&=\LL[\partial_{U_{1}}^\top g(\Theta),\ldots,\partial_{U_{m}}^\top g(\Theta)\RR],
    \partial_{\Theta_i}g(\Phi) &= \begin{bmatrix}
        \partial_{W_{ii}}g(\Phi)\\
        \del_{U_i}g(\Phi)
    \end{bmatrix},~\partial_\Theta g(\Phi) = \begin{bmatrix}
        \partial_{\Theta_1}g(\Phi)\\
        \vdots\\
        \partial_{\Theta_m}g(\Phi)
    \end{bmatrix},\del_c g(\Phi)=\begin{bmatrix}\del_{c_1}g(\Phi)\\\vdots\\\del_{c_m}g(\Phi)
    \end{bmatrix}.
\end{align*}

\subsection{Gradient Descent for Recurrent Neural Networks}
The standard method to solve empirical risk minimization problem \eqref{eqn:erm} is to use gradient descent. In this section, we present the basic gradient descent algorithm for Elman-type RNNs and a projected gradient descent that incorporates projection as a form of regularization in the near-initialization regime.

One key concept is random initialization, which is widely used in practice \cite{goodfellow2016deep} and yields the basis of the kernel analysis \cite{jacot2018neural, chizat2019lazy, allen2019convergence}. In this paper, we assume that $m$ is even without any loss of generality, and use the following symmetric initialization \cite{chizat2019lazy}. 
\begin{definition}[Symmetric random initialization]
Let ${c_i(0)}\sim\Sf{Rad}_1,V_i\sim\Sf{Rad}_\alpha,U_i(0)\sim\mathcal{N}(0,I_d)$ independently for all $i\in\{1,2,\ldots,m/2\}$ and independently from each other, and  for $i\in\{m/2+1,\ldots,m\}$, ${c_{i}(0)=-c_{i-m/2}(0)},V_i=V_{i-m/2}$ and $U_i(0)=U_{i-m/2}(0)$. {Then, we call $\Phi(0)=\big(\Phi_1^\top(0)\ldots\Phi_m^\top(0)\big)^\top$ a symmetric random initialization where $\Phi_i(0)=\begin{bmatrix}
    \Theta_i(0)\\
    c_i(0)
\end{bmatrix}$, $\Theta_i(0)=\begin{bmatrix}
    V_i\\U_i(0)
\end{bmatrix}$ for $i=1,2,\ldots,m$, and $\Theta^\top(0)=\LL(\Theta_1^\top(0),\Theta_2^\top(0),\ldots,\Theta_m^\top(0)\RR).$}

% $\W(0)=\textbf{diag}(V)$ and $U_i^\top(0)$ is the $i^{th}$-row of $\U(0)$.
    % Let $(c,\W(0),\U(0))$ be such that $$\LL[c_{1:m/2},\textbf{W}_{1:m/2,:}(0), \textbf{U}_{1:m/2,:}(0)\RR]\sim \LL[\mathsf{Rad}_{1}\otimes\mathsf{Rad}_{\alpha}\otimes\cN(0,I_d)\RR]^{m/2},$$ and $\LL[c_{m/2+1:m},\mathbf{W}_{m/2+1:m,:}(0),\textbf{U}_{m/2+1:m,:}(0)\RR]=\LL[-c_{1:m/2},\mathbf{W}_{1:m/2,:}(0),\textbf{U}_{1:m/2,:}(0)\RR]$. Then, we call $(c,\W(0),\U(0))$ a symmetric random initialization.
    \label{def:sym-initialization}
\end{definition} 
\noindent The symmetrization ensures that $F_t([\textbf{X}]_t;\Phi(0))=0$ for any $t\geq0$ and input $\mathbf{X}\in\bR^{d\times T}$.

\begin{itemize}
    \item \textbf{Gradient descent.} For a step-size $\eta>0$ and symmetric random initialization $\Phi(0)$, the gradient descent works as follows: for $s\in\bZ_+$,
\begin{align*}
    \Phi(s+1)&=\Phi(s)-\eta\cdot\del_\Phi\er\LL(\Phi(s)\RR).
\end{align*}
\item \textbf{Projected gradient descent.} For a step-size $\eta > 0$ and projection radii $\bm{\rho}=(\rho_w,\rho_u,\rho_c)^\top\in\bR^{3}_+$, the projected gradient descent works as follows:
\begin{align*}
    {\Wii}(s+1)&=\pi_{\cB_{1}(\Wii(0),\rho_w/\sqrt{m})}\LL\{\Wii(s)-\eta\partial_\Wii\er(\Phi(s))\RR\},~i=1,\ldots,m,\\
    U_i(s+1)&=\pi_{\cB_{2}(U_i(0),\rho_u/\sqrt{m})}\LL\{U_i(s)-\eta\partial_{U_i}\er(\Phi(s))\RR\},~i=1,\ldots,m,\\
    {{c_i}(s+1)}&={\pi_{\cB_{1}(c_i(0),\rho_c/\sqrt{m})}\LL\{c_i(s)-\eta\partial_{c_i}\er(\Phi(s))\RR\},~i=1,\ldots,m.}
\end{align*}
\item {\textbf{Projected stochastic gradient descent.} Let $\{I_s\sim\mathsf{Unif}(\{1,\ldots,n\}):s\in\bN\}$ be an independent and identically distributed (iid) sequence. For any $s\in\bN$, let $\bm{G}_{\Phi_i}(\Phi(s),s)=[G_{\Wii}(\Phi(s),s),G_{U_i}^\top(\Phi(s),s),G_{c_i}(\Phi(s),s)]^\top,~i=1,\ldots,m$ be defined as
\begin{equation*}
\bm{G}_{\Phi_i}(\Phi(s),s)
    % \begin{bmatrix}
        % G_{\Wii}(\Phi(s),s)\\
        % G_{U_i}(\Phi(s),s)\\
        % G_{c_i}(\Phi(s),s)
    % \end{bmatrix} 
    := 2\sum_{t=1}^T\LL(F_t(\X^\uind{I_s};\Phi(s))-F_t^\star(X^\uind{I_s})\RR)\del_{\Phi_i}F_t(\X^\uind{I_s};\Phi(s)).
\end{equation*}
For step-size $\eta > 0$ and projection radii $\bm{\rho}\in\bR_+^3$,
\begin{align*}
    {\Wii}(s+1)&=\pi_{\cB_{1}(\Wii(0),\rho_w/\sqrt{m})}\LL\{\Wii(s)-\eta \cdot G_{\Wii}(\Phi(s),s)\RR\},~i=1,\ldots,m,\\
    U_i(s+1)&=\pi_{\cB_{2}(U_i(0),\rho_u/\sqrt{m})}\LL\{U_i(s)-\eta \cdot G_{U_i}(\Phi(s),s)\RR\},~i=1,\ldots,m,\\
    {c_i}(s+1)&=\pi_{\cB_{1}(c_i(0),\rho_c/\sqrt{m})}\LL\{c_i(s)-\eta \cdot G_{c_i}(\Phi(s),s)\RR\},~i=1,\ldots,m.
\end{align*}
}
\end{itemize}
As a consequence of the projection, it holds surely for all $s\geq 0$ that $$\max_{1\leq i\leq m}|\Wii(s)-\Wii(0)|\leq \frac{\rho_w}{\sqrt{m}},\max_{1\leq i\leq m}\|U_i(s)-U_i(0)\|_2\leq\frac{\rho_u}{\sqrt{m}}\mbox{ and }{\max_{1\leq i\leq m}|c_i(s)-c_i(0)|\leq\frac{\rho_c}{\sqrt{m}}},$$ where all inequalities will be fundamental in establishing the convergence results, and the first inequality will also be important to ensure stability. Also note that $\|\W(s)-\W(0)\|_F\leq\rho_w$, $\|\U(s)-\U(0)\|_F\leq \rho_u$, and $\|c(s)-c(0)\|_2\leq \frac{\rho_c}{\sqrt{m}}$ for all $s\geq 0$. This projection scheme, which is commonly known as the max-norm projection, was proposed in \cite{srebro2004maximum}, and has been a very popular regularization method in practice \cite{srivastava2014dropout, goodfellow2013empirical,goodfellow2013maxout}.

\subsection{Main Results: An Overview}
In the following, we present the main results of this paper informally. For the full statements of these results, we refer to Sec. \ref{sec:analysis}. For details on the realizable dynamical systems in the infinite-width limit, denoted as $\iwl$, see Sec. \ref{sec:kernel}.
\begin{theoreminf}[Theorem \ref{thm:proj-gd}{~(Informal}) (Convergence of projected gradient descent for RNNs)]
    For large \hfill\break enough $\bm{\rho} \succ 0$, for any $\delta\in(0,1)$, $\tau\geq 1$ iterations of projected-gradient descent with the step-size $\eta=\frac{1}{T\sqrt{\tau}}$ yields
    \begin{equation*}
        \min_{0\leq s < \tau}\er(\Phi(s)) \leq \mathsf{poly}(\mu_T,{\|\bm{\rho}\|_2, T})\cO\LL(\frac{1}{\sqrt{\tau}}+\sqrt{\frac{\log(2Tn/\delta)}{m}}\RR)
    \end{equation*}
    with probability at least $1-\delta$ over the random initialization for $\bm{F}^\star\in\iwl$ with some $\alpha \in (0,1)$, where $\mu_T = \cO(1)$ if $\alpha_m=\alpha+\frac{\rho_w}{\sqrt{m}}<\frac{1}{\sigma_1}$ and $\mu_T=e^{\Omega(T)}$ if $\alpha_m \geq \frac{1}{\sigma_1}$.
    \label{thminf:proj-gd}
\end{theoreminf}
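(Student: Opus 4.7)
The plan is to combine a near-linearization of the RNN around random initialization with the expressibility of $\bm{F}^\star\in\iwl$, and then close the loop with a standard projected gradient descent regret argument against a convex surrogate. Let
\[
\mathsf{B}:=\LL\{\Phi:|\Wii-\Wii(0)|\leq \rho_w/\sqrt{m},\,\|U_i-U_i(0)\|_2\leq\rho_u/\sqrt{m},\,|c_i-c_i(0)|\leq\rho_c/\sqrt{m},\,i\in\M\RR\}
\]
denote the projection ball, so that by construction the PGD iterates $\Phi(s)$ lie in $\mathsf{B}$ for all $s$. Inside $\mathsf{B}$ one has $|\Wii|\leq\alpha+\rho_w/\sqrt{m}=\alpha_m$. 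The first step is to establish, by induction on $t$, uniform bounds of the form
\[
\max_{1\leq i\leq m}\LL(|H_t^{(i)}(\X;\Theta)|+\|\del_{\Theta_i}H_t^{(i)}(\X;\Theta)\|_2+\|\del_{\Theta_i}^2H_t^{(i)}(\X;\Theta)\|_{\mathrm{op}}\RR)\leq \mathsf{poly}(\mu_T)
\]
for all $\Phi\in\mathsf{B}$, where $\mu_T=\cO(1)$ when $\sigma_1\alpha_m<1$ (the geometric series $\sum_{k}(\sigma_1\alpha_m)^k$ converges) and $\mu_T=e^{\Omega(T)}$ otherwise. This dichotomy arises because weight-sharing forces the recursion $\del_{\Theta_i}H_t^{(i)}=\sigma'(\cdot)\LL(\Wii\,\del_{\Theta_i}H_{t-1}^{(i)}+(H_{t-1}^{(i)},X_{t-1}^\top)^\top\RR)$, and it is the origin of the cutoff on memory.

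Next, I would compare $F_t(\X;\Phi)$ with its tangent-plane approximation
\[
\Lin(\X;\Phi):=\LL\langle\del_\Phi F_t(\X;\Phi(0)),\,\Phi-\Phi(0)\RR\rangle,
\]
which coincides with the first-order Taylor expansion because symmetric initialization gives $F_t(\X;\Phi(0))=0$. A second-order Taylor remainder combined with the Hessian bound from Step~1 yields, uniformly on $\mathsf{B}$,
\[
\bigl|F_t(\X;\Phi)-\Lin(\X;\Phi)\bigr|\leq C\,\mathsf{poly}(\mu_T)\,\|\Phi-\Phi(0)\|_2^2/\sqrt{m}.
\]
For the target functional, the characterization of $\iwl$ in Section~\ref{sec:kernel} furnishes a transportation mapping of norm at most $\|\bm{\rho}\|_2$ that represents $\bm{F}^\star$ in the infinite-width limit; sampling this mapping at the random initialization and applying a Hoeffding-type concentration over the $m$ symmetric neurons yields, with probability at least $1-\delta$, a comparator $\Phi^\star\in\mathsf{B}$ such that
\[
\max_{j\leq n,\,t\leq T}\bigl|\Lin(\X^\uind{j};\Phi^\star)-F_t^\star(\X^\uind{j})\bigr|\leq \mathsf{poly}(\mu_T,\|\bm{\rho}\|_2)\sqrt{\log(2Tn/\delta)/m}.
\]

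For the final step, the map $\Phi\mapsto\Lin(\X;\Phi)$ is affine, so the surrogate empirical risk $\widetilde{\mathcal{R}}_\cS(\Phi):=\frac{1}{n}\sum_{j=1}^n\sum_{t=1}^T(\Lin(\X^\uind{j};\Phi)-Y_t^\uind{j})^2$ is convex with $\mathsf{poly}(\mu_T,T)$-bounded gradients on $\mathsf{B}$. A standard online projected-gradient regret inequality, applied to the actual iterates $\Phi(s)$ driven by $\del_\Phi\er$ but compared against $\Phi^\star$ via convexity of $\widetilde{\mathcal{R}}_\cS$, produces a bound of order $\|\Phi^\star-\Phi(0)\|_2^2/(\eta\tau)+\eta\,\mathsf{poly}(\mu_T,T)$; choosing $\eta=1/(T\sqrt{\tau})$ optimizes this to $\cO(\mathsf{poly}(\mu_T,\|\bm{\rho}\|_2,T)/\sqrt{\tau})$. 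Converting $\widetilde{\mathcal{R}}_\cS$ back to $\er$ via the Step~2 estimate and adding on the Step~3 approximation error delivers the claim. The main obstacle is Step~1: because of weight sharing, $\del_\Theta H_t$ is a sum of products of up to $t$ factors of the form $\sigma'(\cdot)\Wii$, so a uniform bound over $\mathsf{B}$ requires exactly the restriction $|\Wii|\leq\alpha_m$ produced by the projection, and the dichotomy between $\sigma_1\alpha_m<1$ and $\sigma_1\alpha_m\geq 1$ is forced at this point. A secondary subtlety is that PGD acts on the nonlinear $\er$, not on $\widetilde{\mathcal{R}}_\cS$; the mismatch, again controlled by the Step~2 Hessian bound, must be tracked through the regret argument so that it is absorbed into the final $1/\sqrt{m}$ term.
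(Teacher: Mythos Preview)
Your proposal is correct and follows essentially the same route as the paper: the paper also establishes the Lipschitz/smoothness bounds on the hidden state (Lemmas~\ref{lemma:lipschitz}--\ref{lemma:smoothness}), the linearization error (Proposition~\ref{prop:lin-ntk}), the transportation-mapping comparator $\bar{\Phi}$ with Hoeffding concentration (Proposition~\ref{prop:lin-ntrf}), and then runs a Lyapunov drift argument for PGD against $\bar{\Phi}$. The only cosmetic difference is that the paper does not introduce the convex surrogate $\widetilde{\mathcal{R}}_\cS$ explicitly but instead decomposes the inner product directly as $\del_\Phi^\top\er(\Phi(s))(\bar{\Phi}-\Phi(s))=-2\er(\Phi(s))+\epsilon^{\mathsf{Lin}}+\epsilon^{\mathsf{CoF}}$, where the change-of-feature term $\epsilon^{\mathsf{CoF}}$ is precisely your gradient-mismatch error; this yields the same $1/\sqrt{\tau}+1/\sqrt{m}$ rate with the same dichotomy on $\alpha_m\sigma_1$.
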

In particular, to establish $\min_{0\leq s<\tau}\er(\Phi(s))\leq \epsilon$, one needs an RNN with $m = \Omega\LL({\frac{\log\LL(n/\delta\RR)}{\epsilon^2}}\RR)$ neurons and $\tau = \Omega\LL(\frac{1}{\epsilon^2}\RR)$ iterations of gradient descent.

The above theorem can also be used to bound the empirical risk for the average iterate. 
% The asymptotic rates are basically the same up to some universal constant factors.
\begin{corollaryinf}[Corollary \ref{cor:avg-iterate-1} ({Informal})(Average-iterate convergence of projected gradient descent)] 
    For any \hfill \break$\delta\in(0,1)$, projected gradient descent with the step-size choice $\eta=\frac{1}{T\sqrt{\tau}}$ yields
    \begin{equation*}
        \er\LL(\frac{1}{\tau}\sum_{s<\tau}\Phi(s)\RR) \leq \mathsf{poly}(\mu_T,{\|\bm{\rho}\|_2,T})\cO\LL(\frac{1}{\sqrt{\tau}}+\frac{\sqrt{\log(2Tn/\delta)}}{\sqrt{m}}\RR),
    \end{equation*}
    for sufficiently large $\bm{\rho} \succ 0$ with probability at least $1-\delta$ for $\bm{F}^\star\in\iwl$.
    \label{cor:avg-iterate}
\end{corollaryinf}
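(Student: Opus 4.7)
The plan is to extract from the proof of Theorem~\ref{thminf:proj-gd} a Cesàro-average version of its conclusion, and then transfer that bound to the average iterate $\bar\Phi := \frac{1}{\tau}\sum_{s<\tau}\Phi(s)$ through an approximate-convexity argument that is valid on the max-norm constraint set of projected gradient descent.

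\textbf{Step 1.} I would first revisit the proof of Theorem~\ref{thminf:proj-gd}. The natural descent inequality for projected gradient descent applied to any comparison point $\Phi^\dagger$ lying in the product of projection balls reads
\[
    \|\Phi(s+1)-\Phi^\dagger\|_2^2 \le \|\Phi(s)-\Phi^\dagger\|_2^2 - 2\eta\langle\del_\Phi\er(\Phi(s)),\Phi(s)-\Phi^\dagger\rangle + \eta^2\|\del_\Phi\er(\Phi(s))\|_2^2.
\]
The approximate convexity of $\er$ on the constraint set (quantified in Step~2) lower bounds the inner product by $\er(\Phi(s))-\er(\Phi^\dagger)-\mathsf{err}_m$, with $\mathsf{err}_m=\mathsf{poly}(\mu_T,\|\bm\rho\|_2,T)/\sqrt m$; combined with a uniform gradient bound $G=\mathsf{poly}(\mu_T,\|\bm\rho\|_2,T)$, telescoping over $s=0,\ldots,\tau-1$ and dividing by $\tau$ produces
\[
    \frac{1}{\tau}\sum_{s<\tau}\er(\Phi(s)) \le \er(\Phi^\dagger) + \frac{\|\Phi^\dagger-\Phi(0)\|_2^2}{2\eta\tau} + \frac{\eta G^2}{2} + \mathsf{err}_m.
\]
Choosing $\Phi^\dagger$ to be the near-initialization parameter realizing $\bm F^\star\in\iwl$ makes $\er(\Phi^\dagger)=0$, while $\eta=\frac{1}{T\sqrt\tau}$ balances the remaining two deterministic terms. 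This reproduces the rate of Theorem~\ref{thminf:proj-gd} but now as an upper bound on the Cesàro average of $\er(\Phi(s))$, not just its minimum.

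\textbf{Step 2.} Because every iterate $\Phi(s)$ lies in the convex product ball $\cK := \prod_i\cB_1(\Wii(0),\rho_w/\sqrt m)\times\cB_2(U_i(0),\rho_u/\sqrt m)\times\cB_1(c_i(0),\rho_c/\sqrt m)$, so does the average $\bar\Phi$. The local-smoothness results advertised by the paper provide a linearization $\Lin$ of $F_t$ around $\Phi(0)$ satisfying $\sup_{\Phi\in\cK}\sup_{\|X_t\|_2\le 1}|F_t(\X;\Phi)-\Lin(\X;\Phi)|\le\mathsf{poly}(\mu_T,T,\|\bm\rho\|_2)/\sqrt m$. Since $\Phi\mapsto\Lin(\X;\Phi)$ is affine and the squared loss is convex in its first argument, the linearized empirical risk $\widetilde{\mathcal R}(\Phi):=\frac{1}{n}\sum_{j,t}(\Lin(\X^{(j)};\Phi)-Y_t^{(j)})^2$ is convex on $\cK$, so Jensen's inequality gives $\widetilde{\mathcal R}(\bar\Phi)\le\frac{1}{\tau}\sum_{s<\tau}\widetilde{\mathcal R}(\Phi(s))$. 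Swapping $\widetilde{\mathcal R}$ for $\er$ on both sides incurs an additive error of order $\mathsf{poly}(\mu_T,T,\|\bm\rho\|_2)/\sqrt m$, which is absorbed into the right-hand side of Step~1; the same linearization estimate also justifies the approximate-convexity inequality used there. Chaining Steps~1 and~2 produces exactly the bound claimed in Corollary~\ref{cor:avg-iterate}.

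\textbf{Main obstacle.} The delicate part will be obtaining the linearization error estimate $|F_t-\Lin|\le\mathsf{poly}(\mu_T,T,\|\bm\rho\|_2)/\sqrt m$ and the uniform gradient bound $G$ with the right $T$-dependence, i.e.\ with a $\mathsf{poly}(T)$ rather than exponential-in-$T$ prefactor whenever the effective diagonal magnitude $\alpha_m=\alpha+\rho_w/\sqrt m$ stays strictly below the cutoff $1/\sigma_1$. This stability is exactly what the projection onto $\cB_1(\Wii(0),\rho_w/\sqrt m)$ is designed to guarantee and what produces the trichotomy $\mu_T=\cO(1)$ versus $\mu_T=e^{\Omega(T)}$; once it is in hand, the Jensen step and the descent-inequality telescoping are standard.
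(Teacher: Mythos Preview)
Your approach is essentially the paper's: the Ces\`aro-average bound of Step~1 is already established as \eqref{eqn:proj-gd-error} in the proof of Theorem~\ref{thm:proj-gd} (derived there by a direct drift computation rather than an abstract approximate-convexity inequality, but with the same content), and Step~2---swap $\er$ for the convex linearized risk via Prop.~\ref{prop:lin-ntk}, apply Jensen, swap back, absorbing a multiplicative constant and an additive $O(T/m)$---is the paper's argument verbatim. The only minor slip is that the comparison point $\bar\Phi$ yields $\er(\bar\Phi)=O(1/m)$ rather than exactly zero (Prop.~\ref{thm:approx}), but this is absorbed into your $\mathsf{err}_m$.
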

{
We have the following finite-time guarantees for the stochastic gradient descent.
\begin{corollaryinf}[Corollary \ref{cor:sgd-avg-iterate-1} ({Informal})(Convergence of projected stochastic gradient descent)] 
    For any \hfill \break$\delta\in(0,1)$, projected stochastic gradient descent with the step-size choice $\eta=\frac{1}{T\sqrt{\tau}}$ yields
    \begin{equation*}
        \bE\Big[\er\Big(\frac{1}{\tau}\sum_{s<\tau}\Phi(s)\Big)\Big|\Phi(0)\Big] \leq \mathsf{poly}(\mu_T,{\|\bm{\rho}\|_2,T})\cO\LL(\frac{1}{\sqrt{\tau}}+\frac{\sqrt{\log(2Tn/\delta)}}{\sqrt{m}}\RR),
    \end{equation*}
    for sufficiently large $\bm{\rho} \succ 0$ with probability at least $1-\delta$ for $\bm{F}^\star\in\iwl$.
    \label{cor:sgd-avg-iterate}
\end{corollaryinf}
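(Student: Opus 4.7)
The plan is to reproduce the average-iterate convergence argument of Corollary \ref{cor:avg-iterate} with the exact gradient replaced by the unbiased sample gradient $\bm{G}(\Phi(s),s)$, and supplemented by standard projected-stochastic-gradient tools. I would first verify unbiasedness: since $I_s\sim\mathsf{Unif}(\{1,\ldots,n\})$ is drawn independently of $\cF_s := \varsigma(I_0,\ldots,I_{s-1})$, a direct calculation gives $\bE\LL[\bm{G}_{\Phi_i}(\Phi(s),s)\mid\cF_s\RR] = \del_{\Phi_i}\er(\Phi(s))$, so each SGD step coincides, in conditional expectation, with a projected-gradient step on $\er$.

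Next, I would invoke from Sec. \ref{sec:analysis} the local-smoothness and local-boundedness lemmas that control $F_t(\X;\Phi)$ and $\del_\Phi F_t(\X;\Phi)$ uniformly over the product of balls $\cB_{\bm\rho}$ enforced by the projection. These lemmas depend on $\Phi$ only through its distance to $\Phi(0)$, so they hold along every SGD trajectory without modification. Combined with $\max_t\|X_t\|_2\leq 1$ and boundedness of $\sigma$, they yield a uniform almost-sure second-moment bound $\bE\LL[\|\bm{G}(\Phi(s),s)\|_2^2\mid\cF_s\RR]\leq V$ with $V=\mathsf{poly}(\mu_T,\|\bm\rho\|_2,T)$, along with a per-iterate bound of the same order on the excess loss. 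The cutoff behaviour of $\mu_T$ (polynomial in $T$ for $\alpha_m<1/\sigma_1$, exponential otherwise) is inherited directly from the stability estimates used in the deterministic case.

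The convergence argument would then follow the approximate-convexity pattern used for projected GD. Linearizing $F_t$ around $\Phi(0)$ produces a remainder of order $R_m=\mathsf{poly}(\mu_T,\|\bm\rho\|_2,T)\sqrt{\log(2Tn/\delta)/m}$, and since $\bm{F}^\star\in\iwl$ there exists $\Phi^\star\in\cB_{\bm\rho}$ at which the linearized empirical risk vanishes on a high-probability initialization event (the logarithmic factor comes from a union bound across $n$ trajectories and $T$ time steps). Non-expansiveness of the projection onto the product of balls together with the unbiased-gradient identity gives the one-step inequality $\bE\LL[\|\Phi(s+1)-\Phi^\star\|_2^2\mid\cF_s\RR]\leq \|\Phi(s)-\Phi^\star\|_2^2 - 2\eta\langle\del_\Phi\er(\Phi(s)),\Phi(s)-\Phi^\star\rangle+\eta^2 V$. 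Plugging in the approximate-convexity bound $\er(\Phi(s))-\er(\Phi^\star)\leq \langle\del_\Phi\er(\Phi(s)),\Phi(s)-\Phi^\star\rangle + R_m$, summing over $s<\tau$, dividing by $\tau$, and transferring the $s$-average inside $\er$ via the (exactly convex) linearized risk yields, with $\eta=1/(T\sqrt\tau)$, a bound of the claimed form $\mathsf{poly}(\mu_T,\|\bm\rho\|_2,T)\cdot\cO(1/\sqrt\tau)+R_m$.

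The hard part is neither the stochasticity (handled by the one-step SGD inequality above) nor the projection (which is non-expansive); it is, as in the full-gradient case, obtaining the variance bound $V$ and the approximate-convexity remainder $R_m$ with the correct $T$-dependence. A naive unrolling of the recursion for $\del_\Phi F_t$ loses a factor of $\sigma_1^T$, and the polynomial-in-$T$ behaviour of $\mu_T$ below the cutoff $\alpha_m<1/\sigma_1$ relies essentially on the hidden-state smoothness lemmas from the body of the paper. Once those are in place for the deterministic analysis, the extension to projected SGD reduces to the routine variance-telescope computation sketched above.
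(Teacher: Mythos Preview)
Your proposal is correct and follows essentially the paper's route: unbiasedness of $\bm{G}$, an (almost-sure, in the paper) bound on $\|\bm{G}\|_2^2$ from the projection, the one-step projected-SGD Lyapunov inequality with the reference point $\bar{\Phi}$, the same linearization and change-of-feature error terms as in Theorem~\ref{thm:proj-gd}, telescoping via the tower property, and finally Jensen on the convex linearized risk to pass to the average iterate as in Corollary~\ref{cor:avg-iterate-1}. The only technical point to fix is to include $\Phi(0)$ in your filtration (take $\cF_s=\varsigma(\Phi(0),I_0,\ldots,I_{s-1})$ with $\cF_0=\varsigma(\Phi(0))$), since the high-probability linearization event is $\cF_0$-measurable and the final bound is a conditional expectation given $\Phi(0)$ that holds on that event.
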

}
Now, we present the convergence results for (projection-free) gradient descent.
\begin{theoreminf}[Theorem \ref{thm:gd} {(Informal}) (Convergence of gradient descent for RNNs)]
    For any $\epsilon > 0$, there exist $\tau_0 = \Omega\LL(\frac{1}{\epsilon^3}\RR)$ and $m_0=\Omega(\frac{1}{\alpha^2\epsilon }+\frac{\mathsf{poly}(\mu_T,T)\log(2nT/\delta)}{\epsilon^3})$ such that for any $\tau > \tau_0$ and $m>m_0$, gradient descent with step-size $\eta = \cO(\epsilon^2)$ yields
    \begin{equation}
        \min_{0\leq s < \tau}\er(\Phi(s)) \leq \epsilon,
    \end{equation}
    with probability at least $1-\delta$ over the random initialization for $\bm{F}^\star\in\iwl$ with some $\alpha\in(0,1)$, and $\mu_T = \cO(1)$ if $\alpha < \frac{1}{\sigma_1}$ and $\mu_T = e^{\Omega(T)}$ otherwise.
    \label{thminf:gd}
\end{theoreminf}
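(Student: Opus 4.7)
The natural plan is to reduce the projection-free case to the projected-gradient theorem (Theorem~\ref{thm:proj-gd}) by showing that, for the prescribed choice of $\eta$ and $\tau$, plain gradient descent never triggers the max-norm projection. Once the iterates are certified to stay in the ball $\cB_2(\Phi(0),\rho/\sqrt{m})$, I reuse from the projected analysis the NTK linearization, the existence of a near-optimal parameter $\Phi^\star$ representing $\bm{F}^\star\in\iwl$ with $\|\Phi^\star-\Phi(0)\|_2=\cO(1)$, and the convex-like descent bound on the linearized surrogate loss.

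\textbf{Key technical steps.} First, I would establish a local smoothness estimate: on $\cB_2(\Phi(0),\rho/\sqrt{m})$ with a fixed $\rho=\cO(1)$, the empirical risk $\er$ is $\beta$-smooth with $\beta=\Sf{poly}(\mu_T,T)$, building on the bounds for $\|\del_\Phi F_t\|_2$ and for the operator norm of $\del_\Phi^2 F_t$ that must already underlie the proof of Theorem~\ref{thm:proj-gd}. Under $\eta\le 1/(2\beta)$, the descent lemma yields $\er(\Phi(s+1))\le\er(\Phi(s))-(\eta/2)\|\del_\Phi\er(\Phi(s))\|_2^2$ as long as the iterates remain in the ball, and telescoping gives $\sum_{s<\tau}\|\del_\Phi\er(\Phi(s))\|_2^2\le 2\er(\Phi(0))/\eta$. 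Second, symmetric initialization forces $F_t(\X;\Phi(0))=0$ and hence $\er(\Phi(0))=\cO(T)$, so Cauchy--Schwarz gives
\begin{equation*}
    \|\Phi(s)-\Phi(0)\|_2 \le \eta\sum_{s'<s}\|\del_\Phi\er(\Phi(s'))\|_2 \le \sqrt{2s\eta\er(\Phi(0))} = \cO\LL(\sqrt{\eta\tau T}\RR),\qquad s\le\tau.
\end{equation*}
Third, a forward induction on $s$ seeded by the fixed $\rho$ and this a priori trajectory bound closes provided $m$ is large enough that $\sqrt{\eta\tau T}\cdot\Sf{poly}(\mu_T,T)\le\rho\sqrt{m}$ deterministically along the trajectory.

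\textbf{Rate analysis and main obstacle.} Inside the ball, the NTK linearization has error of order $\|\Phi(s)-\Phi(0)\|_2\cdot\cO(\sqrt{\log(nT/\delta)/m})$ on the good event, so the standard convex-style analysis of GD on the linearized surrogate yields
\begin{equation*}
    \min_{0\le s<\tau}\er(\Phi(s)) \le \cO\LL(\frac{\|\Phi^\star-\Phi(0)\|_2^2}{\eta\tau}\RR) + \Sf{poly}(\mu_T,T)\cdot\cO\LL(\sqrt{\eta\tau T}\cdot\sqrt{\frac{\log(nT/\delta)}{m}}\RR).
\end{equation*}
Choosing $\eta=\Theta(\epsilon^2)$ and $\tau=\Theta(1/\epsilon^3)$, so that $\eta\tau=\Theta(1/\epsilon)$, drives the first term to $\cO(\epsilon)$; forcing the second term to $\cO(\epsilon)$ then requires $m=\Omega(\Sf{poly}(\mu_T,T)\log(nT/\delta)/\epsilon^3)$, matching the stated bound. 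The additive $1/(\alpha^2\epsilon)$ term reflects a separate concentration requirement on $V_i\sim\Sf{Rad}_\alpha$ so that the diagonal entries $\Wii(s)$ do not cross the stability cutoff $1/\sigma_1$ during training, within the larger trajectory tolerance inherited from step three. The main obstacle is the circularity of the containment induction---the smoothness constant depends on $\rho$, and the ball the iterates remain in depends on how many smooth steps we can take---which I would resolve by fixing $\rho$ a priori from the movement bound $\sqrt{2\eta\tau\cdot\cO(T)}$ and checking that, for the prescribed $m$, the smoothness and NTK estimates hold on a strictly larger ball, so the induction never touches its boundary.
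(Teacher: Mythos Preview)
Your plan diverges from the paper's argument in a way that creates a real gap, and the containment geometry is internally inconsistent.

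The paper does \emph{not} run a descent-lemma argument on $\er$. It reuses the Lyapunov potential $\cL(\Phi)=\|\Phi-\bar\Phi\|_2^2$ from the projected analysis, but only for $s<\tau_\exit$, the first time any single coordinate $\Wii(s),U_i(s),c_i(s)$ exits its per-neuron ball of radius $\lambda_\bullet/\sqrt m$. For $s<\tau_\exit$ the drift bound reads $\Delta(\Phi(s))\le-4\eta\bigl(\sqrt{\er(\Phi(s))}-\tfrac{\sqrt\epsilon}{2}\bigr)^2+\cO(\eta\epsilon)$ once $m$ is large enough to make the linearization and change-of-feature errors at most $\sqrt{\epsilon/T}$. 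Telescoping and Jensen give the self-bounding estimate $\sum_{s<\tau_\exit}\sqrt{\er(\Phi(s))}\le(\sqrt 2+1)\sqrt\epsilon\,\tau_0$. This sum---not a gradient-norm sum---is then plugged into the per-coordinate bound $|\partial_{\Wii}\er|\le\cO(L_T\sqrt T)\sqrt{\er}/\sqrt m$ to show $|\Wii(\tau_\exit)-\Wii(0)|\le\lambda_w/\sqrt m$, contradicting $\tau_\exit\le\tau_0$. The same telescoped drift then yields $\min_{s<\tau_0}\er\le 2\epsilon$.

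Your proposal breaks at the containment step. You control only the global quantity $\|\Phi(s)-\Phi(0)\|_2=\cO(\sqrt{\eta\tau T})$, but every regularity constant in this paper ($L_t,\beta_t,\Lambda_t,\gamma_t$) is built from $\alpha_m=\alpha+\rho_w/\sqrt m$, i.e., from a \emph{per-coordinate} bound $|\Wii-\Wii(0)|\le\rho_w/\sqrt m$. An $\ell_2$ movement bound of order $\sqrt{T/\epsilon}$ only gives $|\Wii(s)-\Wii(0)|\le\sqrt{T/\epsilon}$, which does not shrink with $m$; this forces $\alpha_m\ge\alpha+\sqrt{T/\epsilon}>1/\sigma_1$ for small $\epsilon$ and destroys the benign-memory case altogether. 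Your text itself oscillates between the ball $\cB_2(\Phi(0),\rho/\sqrt m)$ and the condition $\sqrt{\eta\tau T}\le\rho\sqrt m$, which are incompatible scalings. The paper avoids this by bounding each coordinate separately using the $1/\sqrt m$ factor in $\partial_{\Wii}F_t$ and the risk-controlled sum $\sum_s\sqrt{\er}$. Two smaller points: the $1/(\alpha^2\epsilon)$ contribution to $m_0$ is not a ``concentration requirement on $V_i\sim\Sf{Rad}_\alpha$''---$|V_i|=\alpha$ deterministically---but comes from the constraint $m\ge\lambda_w^2/(2\epsilon)$ that forces $\lambda_w/\sqrt m\le\sqrt{2\epsilon}$ and hence $\alpha_m=\alpha+\sqrt{2\epsilon}$; and your linearization-error model (movement times $\sqrt{\log/m}$) does not match Propositions~\ref{prop:lin-ntrf}--\ref{prop:lin-ntk}, where the $\sqrt{\log(nT/\delta)/m}$ term is movement-independent while the linearization term scales as $\|\Theta-\Theta(0)\|_2^2/\sqrt m$.
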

% For the characterization of the dynamical systems $\iwl$ that can be learned in the infinite-width limit of recurrent neural networks, we refer to Section \ref{sec:kernel}.

\begin{remark}
    Below we provide interpretations of the outcomes of the above results.
    \begin{itemize}
        \item \textbf{Bounds on the network size.} In Theorems \ref{thminf:proj-gd}-\ref{thminf:gd}, the number of neurons scales logarithmically in the sample size $n$ and the (inverse) failure probability $1/\delta$. We follow a considerably different approach to prove the above convergence results than the existing analysis in \cite{allen2019convergence}, which requires $m$ to depend on $n$ polynomially with degree as large as 100 to guarantee strong regularity conditions such as Polyak-Łojasiewicz condition to establish convergence. {Avoiding the need to establish Polyak-Łojasiewicz condition results in improved network width bounds, which comes at the expense of sublinear convergence rates.}
        
        % For sharp bounds in terms of $n$, one needs to avoid establishing strong conditions such as Polyak-Łojasiewicz as in \cite{allen2019convergence}, which comes at the expense of slow convergence rate in terms of best iterate (or average iterate), akin to the contrast between convex and strongly convex optimization \cite{bubeck2015convex, nesterov2018lectures}. 
        
        % This phenomenon was also observed in the simpler case of feedforward neural networks, where linear convergence rate necessitates massive overparameterization on the order of $\tilde{\cO}(n^2)$, while mild overparameterization results can be established albeit with slower best-iterate convergence rates \cite{ji2019polylogarithmic, chizat2019lazy, du2019gradient}.
        \item \textbf{Impact of long-term dependencies: exploding gradient problem.} In these results, $\alpha > 0$ is a term that quantifies the long-term dependence of the system at time $t$ to the previous data points $X_k,k<t$. The term $\mu_T$, which is a multiplicative factor for both $m$ and $\tau$, has a cutoff point at $1/\sigma_1$, where $\sigma_1$ is the modulus of Lipschitz continuity of the activation function. Remarkably, if $\alpha<\sigma_1$, then $\mu_T=\cO(1)$ for sufficiently large $m$, while $\mu_T=\exp(\Omega(T))$ for any $m\in\bN$ if $\alpha\geq\sigma_1$. This result captures theoretically the impact of long-term dependencies on the performance of gradient descent, which has been widely observed in practice as the \emph{exploding gradient problem} \cite{goodfellow2016deep, bengio1993problem, bengio1994learning, hochreiter1991untersuchungen}. 

        \item \textbf{Impact of regularization on the stability and convergence rate.} Gradient descent without projection requires $\tau=\Omega(1/\epsilon^3)$ iterations with a neural network of $m=\Omega(1/\epsilon^3)$ neurons to achieve a target error of $\epsilon>0$, while projected gradient descent requires an improved convergence rate with $\tau=\Omega(1/\epsilon^2)$ iterations with a neural network of $m=\Omega(1/\epsilon^2)$ neurons in the same setting. 
        
        % \item \textbf{Best-iterate and average-iterate convergence.} Note that both theorems provide guarantees on the best iterate. Theorem \ref{thminf:proj-gd} can automatically be turned into an average-iterate convergence result by exploiting the linearization result in Proposition \ref{prop:lin-ntrf}, as we state in Corollary \ref{cor:avg-iterate}.
        
    \end{itemize} 
\end{remark}

\section{Infinite-Width Limit of Recurrent Neural Networks}\label{sec:kernel}
A fundamental question in the study of RNNs is which class of dynamical systems can be \textit{efficiently} (i) approximated, and (ii) learned by RNNs. By exploiting the random feature approach in \cite{rahimi2008uniform,ji2019neural}, we will first characterize the neural tangent kernel (NTK) for RNNs using first principles, and then use the concept of transportation mappings to characterize the class of systems that can be represented by infinite-width limit of randomly-initialized RNNs. The following result facilitates a clutter-free analysis of the neural tangent kernel and the associated reproducing kernel Hilbert space.

\begin{proposition}
    Let $\sigma\in\cC^2(\bR,\bR)$. For any $\mathbf{X}\in\bR^{d\times T}$, $i,j\in\{1,2,\ldots,m\}$ and $t>0$,
     \begin{align*}
    \partial_{U_i}H_t^{(j)}(\X;\Theta) &= \delta_{ij}\sum_{k=0}^{t-1}W_{ii}^kX_{t-k-1}I^\uind{i}_{t-1}(\X;\Theta)\ldots I_{t-k-1}^\uind{i}(\X;\Theta),\\
    \partial_{\Wii}H_t^{(j)}(\mathbf{X};\Theta) &= \delta_{ij}\sum_{k=0}^{t-1}W_{ii}^kH_{t-k-1}^{(i)}(\mathbf{X};\Theta)I^\uind{i}_{t-1}(\X;\Theta)\ldots I^\uind{i}_{t-k-1}(\X;\Theta),
    \end{align*}
    where $\delta_{ij}$ is the Kronecker delta, and $$I_{t}^\uind{i}(\mathbf{X}; \Theta)=\sigma'\big(\Wii H_t^{(i)}(\mathbf{X}; \Theta)+U_i^\top X_t\big),~t\in\bZ_+,i=1,2,\ldots,m.$$ Consequently, {we can characterize the gradients of $F_t$ as 
\begin{align}
\begin{aligned}
    \del_{\Theta_i}F_t(\X;\Phi) &= \frac{c_i}{\sqrt{m}}\sum_{k=0}^{t-1}W^k_{ii} \begin{bmatrix}
    H_{t-k-1}^\uind{i}(\X;\Theta)\\
    X_{t-k-1}
\end{bmatrix}\prod_{s=0}^kI_{t-s-1}^\uind{i}(\X;\Theta),\\
    \del_{c_i}F_t(\X;\Phi) &= \frac{1}{\sqrt{m}}H_t^\uind{i}(\X;\Theta).
    \label{eqn:output-gradient}
    \end{aligned}
\end{align}    
}
    \label{prop:gradient}
\end{proposition}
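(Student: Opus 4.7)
The proposition is a direct computation based on the recursion for $H_t^{(i)}$, so the plan is unwinding the recursion carefully with induction on $t$, exploiting the diagonal structure of $\W$ to separate coordinates.

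First, I would argue that because $\W$ is diagonal, the recursion
$H_t^{(j)}(\X;\Theta) = \sigma(W_{jj}H_{t-1}^{(j)}(\X;\Theta) + U_j^\top X_{t-1})$ couples $H_t^{(j)}$ only to its own parameters $W_{jj}$ and $U_j$. A straightforward induction on $t$ (with the base case $H_0=0$ free of parameters) shows that $H_t^{(j)}$ is a function of $(W_{jj},U_j)$ and the input alone, independent of $(W_{ii},U_i)$ for $i\neq j$. This immediately gives $\partial_{U_i}H_t^{(j)}=0$ and $\partial_{W_{ii}}H_t^{(j)}=0$ whenever $i\neq j$, producing the Kronecker $\delta_{ij}$.

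Next, fix $i=j$ and define $A_t := \partial_{U_i}H_t^{(i)}(\X;\Theta)\in\bR^d$. Chain rule applied to the recursion yields
\begin{equation*}
A_t = I_{t-1}^\uind{i}(\X;\Theta)\bigl(W_{ii}\,A_{t-1} + X_{t-1}\bigr),\qquad A_0 = 0,
\end{equation*}
with $I_{t-1}^\uind{i}$ as defined in the statement (noting the convention $I_{t-1}^\uind{i}(\X;\Theta)=\sigma'(W_{ii}H_{t-1}^\uind{i}+U_i^\top X_{t-1})$ so that $H_t^\uind{i}=\sigma(\cdot)$ has derivative $I_{t-1}^\uind{i}$). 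Unrolling this linear (in $A$) recursion gives the telescoping sum
\begin{equation*}
A_t = \sum_{k=0}^{t-1} W_{ii}^{\,k}\,X_{t-k-1}\prod_{s=0}^{k} I_{t-s-1}^\uind{i}(\X;\Theta),
\end{equation*}
which is exactly the claimed expression. The formula for $\partial_{W_{ii}}H_t^{(i)}$ follows identically: setting $B_t := \partial_{W_{ii}}H_t^{(i)}$ and applying chain rule gives $B_t = I_{t-1}^\uind{i}(W_{ii}B_{t-1} + H_{t-1}^\uind{i})$, so $X_{t-k-1}$ in the unrolled sum is replaced by $H_{t-k-1}^\uind{i}$.

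Finally, applying these coordinate derivatives to $F_t(\X;\Phi) = \tfrac{1}{\sqrt{m}}\sum_{i=1}^m c_i H_t^\uind{i}(\X;\Theta)$ by linearity, together with the obvious $\partial_{c_i} F_t = H_t^\uind{i}/\sqrt{m}$, and stacking $\partial_{W_{ii}}F_t$ and $\partial_{U_i}F_t$ into the column vector $\partial_{\Theta_i}F_t$ following the convention introduced before the statement, yields \eqref{eqn:output-gradient}. The only subtle point is the bookkeeping of the product $\prod_{s=0}^k I_{t-s-1}^\uind{i}$: one should verify the index range inductively to make sure the activation derivatives entering at step $t$ and the ones inherited from step $t-1$ combine correctly. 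This is mechanical but the main place an off-by-one error could appear, so I would double-check the induction step by comparing $A_{t}$ computed via the recursion against the formula, showing the new factor $I_{t-1}^\uind{i}$ multiplies every term in the sum while the shift $W_{ii}A_{t-1}$ produces the $k\geq 1$ contributions and $X_{t-1}$ produces the $k=0$ term.
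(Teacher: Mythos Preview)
Your proposal is correct and follows essentially the same approach as the paper: derive the chain-rule recursion for $\partial_{U_i}H_t^{(j)}$ (and similarly for $\partial_{W_{ii}}H_t^{(j)}$) and then unroll it by induction on $t$. The only cosmetic difference is that you dispose of the $i\neq j$ case up front by noting $H_t^{(j)}$ depends only on $(W_{jj},U_j)$, whereas the paper carries the $\delta_{ij}$ factor through the recursion; both are straightforward.
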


\begin{remark}
    Note that the result in Proposition \ref{prop:gradient} holds for differentiable activation functions. On the other hand, the result can be extended to ReLU by defining
     $$I_{t}^\uind{i}(\mathbf{X}; \Theta):=\mathbbm{1}\big\{\Wii H_t^{(i)}(\X; \Theta)+U_i^\top X_t \geq 0\big\},$$ since $\sigma'(z)=\ind\{z\geq 0\}$ everywhere except $z=0$. However, RNNs with ReLU activation functions will require a special treatment in the analysis, as ReLU is not a smooth function. Throughout this paper, we concentrate on smooth activation functions of type \eqref{eqn:sigma}.
\end{remark}

\begin{proof}[Proof of Proposition \ref{prop:gradient}]
    For notational simplicity, we will use $I_{t}^\uind{i}$ and $H_t^{(i)}$ throughout the proof. For $t\geq 1$, by the chain rule, \begin{equation}
    \partial_{U_i}H_t^{(j)}=(\Wii \partial_{U_i}H_{t-1}^{(j)}+\delta_{ij}X_{t-1})I_{t-1}^\uind{j}.
    \label{eqn:gradient}
    \end{equation} 
    If $t=1$, then $H_1^{(j)}=\sigma(U_j^\top X_0)$, which would imply $\del_{U_i}H_t^{(j)}=\delta_{ij}X_0I_{0}^\uind{j},$ since $H_0=0$. {For $t > 1$, assuming $\del_{U_i}H_{t-1}^{(j)}=\delta_{ij}\sum_{k=0}^{t-2}W_{ii}^k X_{t-k-2}I_{t-2}^\uind{j}\ldots I_{t-k-2}^\uind{j}$, \eqref{eqn:gradient} implies $$\del_{U_i}H_t^{(j)}=\delta_{ij}I^\uind{j}_{t-1}\big(\sum_{k=0}^{t-2}W_{ii}^{k+1} X_{t-k-2}I^\uind{j}_{t-2}\ldots I^\uind{j}_{t-k-2}+X_{t-1}\big)=\delta_{ij}\sum_{k=0}^{t-1}W_{ii}^kX_{t-k-1}I_{t-1}^\uind{j}\ldots I_{t-k-1}^\uind{j},$$ which would conclude the proof by induction. For $t\geq 1$, by the chain rule, we have $$\del_\Wii H_t^\uind{j}=\LL(\delta_{ij}H_{t-1}^\uind{j}+\Wii\del_\Wii H_{t-1}^\uind{j}\RR)I_{t-1}^\uind{j}.$$ For $t=1$, we have $\del_\Wii H_t^\uind{j}=0$, verifying the base case. Assuming the statement for $t > 1$, $\del_\Wii H_t^\uind{j}=\delta_{ij}I_{t-1}^\uind{j}\big(H_{t-1}^\uind{j}+\sum\limits_{k=0}^{t-2}W_{ii}^{k+1}H_{t-k-2}^\uind{j}I_{t-2}^\uind{j}\ldots I_{t-k-2}^\uind{j}\big)=\delta_{ij}\sum\limits_{k=0}^{t-1}H_{t-k-1}^\uind{j}I_{t-1}^\uind{j}\ldots I_{t-k-1}^\uind{j},$ concluding the proof by induction.}
    % The proof for $\del_{\Wii}H_t^{(j)}$ follows by similar inductive steps.
\end{proof}

\subsection{Neural Tangent Kernel for Elman-Type Recurrent Neural Networks}
The key component of the NTK analysis is the following linear mapping of the learnable parameter:
\begin{equation*}
    F_t^{\mathsf{Lin}}(\X;\Phi)=\sum_{i=1}^m\big\langle\del_{\Theta_i} F_t(\X;\Phi(0)), \Theta_i-\Theta_i(0)\big\rangle{+\sum_{i=1}^m\del_{c_i}F_t(\X;\Phi(0))\LL(c_i-c_i(0)\RR)},
\end{equation*}
for any $\Phi$, where $F_t(\X;\Phi(0))$ is the RNN with a symmetric random initialization $\Phi(0)$. In the following, we characterize the NTK for the Elman-type RNNs.
\begin{proposition}[NTK for Diagonal RNNs]
    Let $\mathbf{X},\mathbf{X}'\in\bR^{d\times T}$, and $\Phi(0)$ be a symmetric random initialization. Then, for any $t\in\{1,2,\ldots,T\}$, we have
    % \begin{align*}
    % \lim_{m\rightarrow\infty}\sum_{i=1}^m\LL(\LL\langle\del_{\Theta_i} F_t(\X;\Phi(0)),\del_{\Theta_i} F_t(\X';\Phi(0))\RR\rangle+{\LL\langle\del_{c_i} F_t(\X;\Phi(0)),\del_{c_i} F_t(\X';\Phi(0))\RR\rangle}\RR)=\fK_t(\mathbf{X}, \mathbf{X}'),
    % \end{align*}
{
    \begin{align*}
\lim_{m\rightarrow\infty}\underbrace{\sum_{i=1}^m\begin{bmatrix}\del_{\Theta_i} F_t(\X;\Phi(0))\\\del_{c_i} F_t(\X;\Phi(0))\end{bmatrix}^\top\begin{bmatrix}\del_{\Theta_i} F_t(\X';\Phi(0))\\\del_{c_i} F_t(\X';\Phi(0))\end{bmatrix}}_{=\del_\Phi^\top F_t(\X;\Phi(0))\del_\Phi F_t(\X;\Phi(0))}=\fK_t(\mathbf{X}, \mathbf{X}'),
    \end{align*}}
    almost surely, where
    \begin{equation*}
        \fK_t(\mathbf{X},\mathbf{X}')=\fK_t^w(\X,\X')+\fK_t^u(\X,\X'){+\fK_t^c(\X,\X')},
    \end{equation*}
    with
    \begin{align*}
        \fK_t^u(\X,\X') &= \bE\big[\sum_{k,\ell=0}^{t-1}w_0^{k+\ell}X_{t-k-1}^\top {X}'_{t-\ell-1}\prod_{s=0}^kI_{t-s-1}^0(\mathbf{X})\prod_{s=0}^\ell I_{t-s-1}^0(\mathbf{X}')\big],\\
        \fK_t^w(\X,\X') &= \bE\big[\sum_{k,\ell=0}^{t-1}w_0^{k+\ell}h_{t-k-1}^0(\mathbf{X})h^0_{t-\ell-1}(\mathbf{X}')\prod_{s=0}^kI_{t-s-1}^0(\mathbf{X})\prod_{s=0}^\ell I_{t-s-1}^0(\mathbf{X}')\big],\\
        {\fK_t^c(\X,\X')} &{= \bE[h_t^0(\X)h_t^0(\X')]},
    \end{align*}
    where the expectations are over $w_0\sim\mathsf{Rad}_\alpha, {c_0\sim\mathsf{Rad}_1, u_0\sim\mathcal{N}(0,I_d)}$, with $$I_t^0(\mathbf{X})=\sigma'(w_0h_t^0(\mathbf{X})+u_0^\top X_t),$$ and $h_t^0(\mathbf{X})=\sigma(w_0h_{t-1}^0(\mathbf{X})+u_0^\top X_{t-1})$ for the initial condition $h_0^0(\mathbf{X})=0$.
    \label{prop:ntk}
\end{proposition}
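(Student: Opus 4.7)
The plan is to substitute the explicit gradient expressions from Proposition \ref{prop:gradient} into the finite-$m$ sum and then identify the almost-sure limit via a direct strong law of large numbers argument, exploiting the fact that under symmetric initialization each per-neuron summand depends only on $(V_i, U_i(0))$ and is uniformly bounded. Expanding gives
\begin{equation*}
\sum_{i=1}^m \langle \del_{\Theta_i} F_t(\X;\Phi(0)), \del_{\Theta_i} F_t(\X';\Phi(0))\rangle = \frac{1}{m}\sum_{i=1}^m c_i^2(0) \sum_{k,\ell=0}^{t-1} V_i^{k+\ell}\bigl[H^{(i)}_{t-k-1}H'^{(i)}_{t-\ell-1} + X_{t-k-1}^\top X'_{t-\ell-1}\bigr]\prod_{s=0}^{k}I^{(i)}_{t-s-1}\prod_{s=0}^{\ell}I'^{(i)}_{t-s-1},
\end{equation*}
and $\sum_{i}\del_{c_i}F_t(\X;\Phi(0))\del_{c_i}F_t(\X';\Phi(0)) = \tfrac{1}{m}\sum_{i=1}^m H_t^{(i)}(\X;\Theta(0))H_t^{(i)}(\X';\Theta(0))$. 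Since $c_i^2(0)=1$, the summand $\Xi_t^{(i)}(\X,\X')$ in each case is a deterministic function only of $(V_i,U_i(0))$, because $H_t^{(i)}$ and $I_t^{(i)}$ evolve using only $W_{ii}$ and $U_i$.

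Next, I would invoke the symmetric pairing in Definition \ref{def:sym-initialization}: for $i>m/2$, $(V_i, U_i(0))=(V_{i-m/2}, U_{i-m/2}(0))$, which forces $\Xi_t^{(i)} = \Xi_t^{(i-m/2)}$. Hence the full sum equals $\tfrac{2}{m}\sum_{i=1}^{m/2}\Xi_t^{(i)}$, where $\{\Xi_t^{(i)}\}_{i\le m/2}$ are i.i.d. By \eqref{eqn:sigma}, $|H_t^{(i)}|\le\sigma_0$, $|I_t^{(i)}|\le\sigma_1$; combined with $|V_i|=\alpha$ and $\|X_t\|_2,\|X'_t\|_2\le 1$, the summand $\Xi_t^{(i)}$ is bounded by a deterministic constant depending only on $(\sigma_0,\sigma_1,\alpha,t)$. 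Kolmogorov's SLLN therefore yields
\begin{equation*}
\frac{2}{m}\sum_{i=1}^{m/2}\Xi_t^{(i)}(\X,\X') \longrightarrow \bE[\Xi_t^{(1)}(\X,\X')]\quad\text{almost surely as }m\to\infty.
\end{equation*}

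The remaining step is to identify $\bE[\Xi_t^{(1)}]$ with $\fK_t^w(\X,\X')+\fK_t^u(\X,\X')+\fK_t^c(\X,\X')$. Setting $(w_0,u_0,c_0):=(V_1,U_1(0),c_1(0))\sim\mathsf{Rad}_\alpha\otimes\cN(0,I_d)\otimes\mathsf{Rad}_1$, the recursion for $H_t^{(1)}$ is identical in distribution to that of $h_t^0(\X)=\sigma(w_0 h_{t-1}^0(\X)+u_0^\top X_{t-1})$ with $h_0^0=0$, and likewise $I_t^{(1)}\stackrel{d}{=}I_t^0(\X)=\sigma'(w_0 h_t^0(\X)+u_0^\top X_t)$; moreover the same realization of $(w_0,u_0)$ drives both the $\X$- and $\X'$-trajectories. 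Applying linearity of expectation term-by-term to the $(k,\ell)$-indexed sum cleanly separates into the $H\cdot H'$ piece (matching $\fK_t^w$), the $X\cdot X'$ piece (matching $\fK_t^u$), and the $\del_c$ contribution (matching $\fK_t^c$), completing the identification.

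I do not foresee a substantial obstacle; the only delicate bookkeeping is verifying that the summand really has no residual dependence on $c_i(0)$ (which relies on $c_i(0)^2=1$) so that the sum over $i$ reduces to a genuine i.i.d. average, and that the symmetric pairing does not destroy this structure. Uniform boundedness of $\sigma,\sigma'$ then trivializes the convergence step, and matching expectations is a direct rewriting.
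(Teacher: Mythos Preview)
Your proposal is correct and follows essentially the same approach as the paper: substitute the gradient formulas from Proposition~\ref{prop:gradient}, use $c_i(0)^2=1$ and the symmetric pairing $\Theta_i(0)=\Theta_{i+m/2}(0)$ to rewrite the sum as $\tfrac{2}{m}\sum_{i\le m/2}$ of bounded i.i.d.\ terms, then apply the strong law of large numbers and identify the limit. The paper's proof is terser (it spells out only the $\fK_t^u$ piece and declares the other two identical), whereas you carry out the bookkeeping for all three pieces at once, but there is no substantive difference in strategy.
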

{\begin{proof}
    Using Prop. \ref{prop:gradient} and the properties of symmetric initialization (i.e., $|c_i(0)|=1$ for all $i \leq m$, and $\Theta_i(0)=\Theta_{i+\frac{m}{2}}(0)$ for $i\leq \frac{m}{2}$), we have 
    \begin{multline}\sum_{i=1}^m\del_{U_i}^\top F_t(\X;\Phi(0))\del_{U_i} F_t(\X';\Phi(0))\\=\frac{2}{m}\sum_{i=1}^{m/2}\sum_{k,\ell=0}^{t-1}W_{ii}^{k+\ell}(0)X_{t-k-1}^\top X'_{t-\ell-1}\prod_{s=0}^kI_{t-s-1}^\uind{i}(\X;\Theta(0))\prod_{s=0}^\ell I_{t-s-1}^\uind{i}(\X';\Theta(0)).
    \label{eqn:ntrf}
    \end{multline}
    Since the summand for each $i=1,\ldots,\frac{m}{2}$ in \eqref{eqn:ntrf} is bounded and iid, strong law of large numbers implies $\del_{U_i}^\top F_t(\X;\Phi(0))\del_{U_i} F_t(\X';\Phi(0))\rightarrow \kappa_t^u(\X,\X')$ almost surely as $m\rightarrow\infty$. The proofs for the almost-sure convergence of $\del_{\Wii}F_t(\X;\Phi(0))\del_{\Wii}F_t(\X';\Phi(0))$ to $\kappa_t^w(\X,\X')$ and $\del^\top_{c_i}F_t(\X;\Phi(0))\del_{c_i}F_t(\X';\Phi(0))$ to $\kappa_t^c(\X,\X')$ follow from identical arguments.
\end{proof}}
It is easy to verify that the neural tangent kernel (NTK) $\fK_t$ is a Mercer kernel for every $t\in\bN$, therefore it has an associated reproducing kernel Hilbert space (RKHS) $\cH_{\fK_t}$. $\del_{\Phi} F_t(\cdot;\Phi(0))$ acts as a random feature at a finite-width $m$, and leads to the NTK in the infinite-width limit. 

\begin{remark}[Reduction to feedforward neural networks]
    Note that in the specific case of $\alpha = 0,\nb_w=0$ and $T=1$, we obtain the neural tangent kernel for the single-hidden-layer feedforward neural networks with smooth activation functions. In that respect, our work generalizes the kernel analysis for shallow neural networks to RNNs with memory.
\end{remark}

The class of dynamical systems that can be represented in these RKHSs can be characterized by using the concept of transportation mappings \cite{rahimi2008uniform, ji2019neural}.

\subsection{Infinite-Width Limit of RNNs}
Let $\bm{v}_c,\bm{v}_w:\bR\rightarrow\bR$ and $\bm{v}_u:\bR^d\rightarrow\bR^d$ be transportation mappings such that
\begin{align*}
    \bE[|\bv_w(w_0)|^2]&=\frac{1}{2}\Big(|\bv_w(\alpha)|^2+|\bv_w(-\alpha)|^2\Big)<\infty,\\
    \bE[\|\bv_u(u_0)\|_2^2]&=\frac{1}{(2\pi)^{d/2}}\int_{\bR^d}\|\bv_u(u)\|_2^2\exp(-\frac{1}{2}\|u\|_2^2)du<\infty,\\
    {\bE[|\bv_c(c_0)|^2]}&{=\frac{1}{2}\Big(|\bv_c(1)|^2+|\bv_c(-1)|^2\Big)<\infty}.
\end{align*}
{Then, given a transportation mapping $\bv(\phi) := \begin{bmatrix}
    \bv_w(w)\\\bv_u(u)\\\bv_c(c)
\end{bmatrix}$ for $\phi=\begin{bmatrix}
    w\\u\\c
\end{bmatrix}\in\bR^{d+2}$ with $w,c\in\bR$ and $u\in\bR^d$, we consider mappings of the form
\begin{equation}
    \tilde{F}_t(\X,\bv)=\bE\Big[\sum_{k=0}^{t-1}w_0^k\begin{bmatrix}
        \bv_w(w_0)\\\bv_u(u_0)
    \end{bmatrix}^\top \begin{bmatrix}
        h_{t-k-1}^0\\X_{t-k-1}
    \end{bmatrix}\prod_{s=0}^kI_{t-s-1}^0+h_t^0\bv_c(c_0)\Big].
    \label{eqn:inf-width}
\end{equation}}
It can be easily shown that, given any $t\in\bZ_+$, the completion of the function class $$\{[\mathbf{X}]_t\mapsto\tilde{F}_t([\mathbf{X}]_t;\bv):\bE|\bv_w(w_0)|^2<\infty,\bE\|\bv_u(u_0)\|_2^2<\infty,\bE|\bv_c(c_0)|^2<\infty\},$$ is the RKHS associated with the NTK $\cH_{\fK_t}$ \cite{rahimi2008uniform, ji2019neural}. For any $t\in \{1,2,\ldots,T\}$, the inner product of two functions in $\cH_{\fK_t}$ associated with the transportation mappings $\bv$ and $\bv'$ is $$\LL\langle \tilde{F}_t(\cdot,\bv),\tilde{F}_t(\cdot,\bv')\RR\rangle_{\cH_{\fK_t}}=\bE\LL[\langle\bv(\phi_0),\bv'(\phi_0)\rangle\RR]~\mbox{where}~\phi_0^\top=(w_0,u_0^\top,c_0).$$ As such, the RKHS norm of any $f\in\cH_{\fK_t}$ is $\|f\|_{\cH_{\fK_t}}=\sqrt{\bE[\|\bv(\phi_0)\|^2]}$. Furthermore, since the kernel $\fK_t$ is universal, the associated RKHS $\cH_{\fK_t}$ is dense in the space of continuous functions on a compact set \cite{ji2019neural}.

Upon these observations, we consider the class of dynamical systems that map the input sequence $\mathbf{X}$ to $\tilde{\bm{F}}(\mathbf{X};\bv)=(\tilde{F}_t(\X;\bv))_{1\leq t\leq T}$ where $\tilde{F}_t\in\mathscr{H}_{\kappa_t}$ with sup-norm constrained $\bv$:
\begin{equation}
    \cF_{\bm{\nb}}=\Big\{\X\mapsto\tilde{F}_t(\X;\bv):\bv\in\mathcal{V}_{\bm{\nb}},t=1,\ldots,T\Big\},
    \label{eqn:class}
\end{equation}
for $\bm{\nb}=(\nb_w,\nb_u,\nb_c)^\top\in\bR_{+}^3$ where $$\mathcal{V}_{\bm{\nb}}:=\{\bv:\bR^{d+2}\rightarrow\bR^{d+2}:\sup_{w_0\in\bR}|v_w(w_0)|\leq \bar{\nu}_w,\sup_{u_0\in\bR^d}\|v_u(u_0)\|_2\leq \bar{\nu}_u,\sup_{c\in\bR}|\bv_{c}(c)|\leq \nb_c\}.$$ $\tF_t$ is defined in \eqref{eqn:inf-width}. It is straightforward to see that, for any $t\in\{1,2,\ldots,T\}$, we have $\|\tilde{F}_t\|_{\cH_{\fK_t}}\leq \|\bm{\nb}\|$, and this constraint on the sup-norm will provide significant simplification in the analysis of the gradient descent algorithm with finite-width neural networks.

\subsection{Approximating $\cF_{\nb}$ by Randomly-Initialized RNNs}
In this subsection, we provide approximation results concerning the approximation of $\iwl$ for any given $\bm{\nb}$ by using randomly-initialized RNNs of a finite number of neurons. Let
\begin{align}
        \mu_{0,t}(z) := \sum_{k=0}^{t-1}z^k,~\mbox{and}~\mu_{1,t}(z) := \sum_{k=0}^{t-1}(k+1)z^k,
        \label{eqn:zeta}
\end{align}
for any $z\in\bR_\gz$ and $t\in \bN$. Also, an important concept throughout this work is compactly-supported set of parameters {$\Omega_{\bm{\rho}},~\bm{\rho}=(\rho_w,\rho_u,\rho_c)^\top\in\bR^3_+$,} where
\begin{align*}
    \Omega_{\bm{\rho}} = \prod_{i=1}^m\LL(\mathcal{W}_{\rho_w,i}\times\mathcal{U}_{\rho_u,i}\times\mathcal{C}_{\rho_c,i}\RR),
\end{align*}
for the sets $\mathcal{W}_{\rho_w,i} = \LL\{w\in\bR:|w-\Wii(0)|\leq \frac{\rho_w}{\sqrt{m}}\RR\},
    \mathcal{U}_{\rho_w,i} = \LL\{u\in\bR^d:\|u-U_i(0)\|\leq \frac{\rho_u}{\sqrt{m}}\RR\}$ and $\mathcal{C}_{\rho_c,i} = \LL\{c\in\bR:|c-c_i(0)|\leq \frac{\rho_c}{\sqrt{m}}\RR\}$ for $i=1,2,\ldots,m$.

% \begin{align*}
%     \mathcal{W}_\rho&=\{\W=\mathbf{diag}(W_{11},\ldots,W_{mm}):\max_{1\leq i\leq m}|{W}_{ii}-\Wii(0)|\leq \rho/\sqrt{m}\},\\
%     \mathcal{U}_\rho&=\{\U\in\bR^{m\times d}:\max_{1\leq i \leq m}\|U_{i}-U_i(0)\|\leq \rho/\sqrt{m}\},
% \end{align*}
% for any $\rho \geq 0$.

We start with bounding the approximation error for the linear model $\Lin$.
\begin{proposition}[Approximating $\tilde{F}_t$ with $F_t^\mathsf{Lin}$]
    Let $\tilde{\bm{F}}(\cdot;\bv)\in\cF_{\bm{\nb}}$ for $\bm{\nb}\in\bR_+^3$, $\Phi(0)$ be a symmetric random initialization, and $\mathbf{X}\in\bR^{d\times T}$ with $\max_{1\leq t\leq T}\|X_t\|_2\leq 1$. Then for any $\delta \in (0,1)$ and $t\in\{1,2,\ldots,T\}$,{
    \begin{equation*}
        \inf_{\Phi\in\Omega_{\bm{\nb}}}\Big|\tilde{F}_t(\X;\bv)-F_t^\mathsf{Lin}(\X;\Phi)\Big| \leq 2\|\bm{\nb}\|_2\Big(\sigma_1\cdot\mu_{0,t}(\alpha\sigma_1)\cdot(1+\sigma_0)+\sigma_0\Big)\sqrt{\frac{\log(\frac{2}{\delta})}{m}},
    \end{equation*}}
    holds with probability at least $1-\delta$ over the random initialization.
    \label{prop:lin-ntrf}
\end{proposition}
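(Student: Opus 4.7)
The plan is to exhibit a concrete $\Phi \in \Omega_{\bm{\nb}}$ for which $F_t^{\mathsf{Lin}}(\X;\Phi)$ takes the form of an empirical average of bounded random variables whose mean is exactly $\tilde{F}_t(\X;\bv)$, and then close the gap by Hoeffding's inequality. The construction is dictated by matching the gradient formulas of Proposition \ref{prop:gradient}: given $\bv = (\bv_w, \bv_u, \bv_c) \in \mathcal{V}_{\bm{\nb}}$, I would set
\begin{equation*}
W_{ii} - W_{ii}(0) = \tfrac{c_i(0)}{\sqrt{m}}\bv_w(W_{ii}(0)),\quad U_i - U_i(0) = \tfrac{c_i(0)}{\sqrt{m}}\bv_u(U_i(0)),\quad c_i - c_i(0) = \tfrac{1}{\sqrt{m}}\bv_c(c_i(0)),
\end{equation*}
for $i = 1,\ldots,m$. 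Because $|c_i(0)|=1$ under the symmetric initialization and $\bv \in \mathcal{V}_{\bm{\nb}}$, the componentwise perturbation bounds defining $\Omega_{\bm{\nb}}$ are satisfied (taking $\bm{\rho} = \bm{\nb}$), so this $\Phi$ is a valid competitor for the infimum.

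Substituting this $\Phi$ into the definition of $F_t^{\mathsf{Lin}}$ and invoking \eqref{eqn:output-gradient}, the factor $c_i^2(0)=1$ appearing in each term $\langle \del_{\Theta_i}F_t(\X;\Phi(0)), \Theta_i - \Theta_i(0)\rangle$ collapses the two $1/\sqrt{m}$ factors into a single $1/m$, yielding $F_t^{\mathsf{Lin}}(\X;\Phi) = \tfrac{1}{m}\sum_{i=1}^m A_i$ with
\begin{equation*}
A_i := \sum_{k=0}^{t-1} W_{ii}^k(0)\bigl(H_{t-k-1}^{(i)}\bv_w(W_{ii}(0)) + X_{t-k-1}^\top \bv_u(U_i(0))\bigr)\prod_{s=0}^k I_{t-s-1}^{(i)} + H_t^{(i)}\bv_c(c_i(0)).
\end{equation*}
The hidden state $H_t^{(i)}$ depends only on $(W_{ii}(0), U_i(0))$ and is therefore independent of $c_i(0)$, so $\bE[H_t^{(i)}\bv_c(c_i(0))] = \bE[h_t^0]\,\bE[\bv_c(c_0)] = \bE[h_t^0 \bv_c(c_0)]$; matching the remaining terms against the definition \eqref{eqn:inf-width} of $\tilde{F}_t$ then gives $\bE[A_i] = \tilde{F}_t(\X;\bv)$.

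It remains to bound $|A_i|$ uniformly and apply a concentration inequality. Using $|W_{ii}(0)| = \alpha$, $|H_t^{(i)}|\leq\sigma_0$, $|I_t^{(i)}|\leq\sigma_1$, $\|X_t\|_2\leq 1$, and the sup-norm bounds on $\bv$, a direct estimate together with $\sigma_0\bar\nu_w + \bar\nu_u \leq (1+\sigma_0)\|\bm{\nb}\|_2$ gives $|A_i| \leq M := \|\bm{\nb}\|_2\bigl(\sigma_1 \mu_{0,t}(\alpha\sigma_1)(1+\sigma_0) + \sigma_0\bigr)$. The main subtlety, which I anticipate being the only genuine obstacle, is that the $A_i$ are not jointly independent: the symmetric initialization couples indices $i$ and $i+m/2$ via $c_{i+m/2}(0) = -c_i(0)$ and $\Theta_{i+m/2}(0) = \Theta_i(0)$. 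I would handle this by pairing. The variables $B_i := (A_i + A_{i+m/2})/2$ for $i = 1,\ldots, m/2$ are iid with $|B_i|\leq M$, and the odd-symmetry of $\mathsf{Rad}_1$ combined with the $c_0$-independence of $h_t^0$ ensures $\bE[B_i] = \tilde{F}_t(\X;\bv)$; moreover, $\frac{1}{m/2}\sum_{i=1}^{m/2} B_i = F_t^{\mathsf{Lin}}(\X;\Phi)$. Hoeffding's inequality on these $m/2$ iid samples in $[-M, M]$ then produces the bound $|F_t^{\mathsf{Lin}}(\X;\Phi) - \tilde{F}_t(\X;\bv)| \leq 2M\sqrt{\log(2/\delta)/m}$ with probability at least $1-\delta$, which is precisely the claim.
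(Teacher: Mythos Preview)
Your proposal is correct and follows essentially the same approach as the paper: the same explicit construction $\bar\Phi$ via the transportation mapping, the same reduction of $F_t^{\mathsf{Lin}}(\X;\bar\Phi)$ to an average of bounded terms with mean $\tilde F_t(\X;\bv)$, and the same Hoeffding bound on the $m/2$ independent blocks. Your pairing via $B_i=(A_i+A_{i+m/2})/2$ is in fact slightly more careful than the paper's statement that ``the second half is a replica of the first half,'' since the $\bv_c$-term makes $A_i\neq A_{i+m/2}$ in general; your averaging collapses that term to the deterministic quantity $\tfrac12(\bv_c(1)+\bv_c(-1))=\bE[\bv_c(c_0)]$, which is exactly what is needed.
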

\noindent We note that the above approximation result holds pointwise for each $\X$, thus one needs to use uniform convergence bounds to ensure that it holds for all data points $\X^\uind{j},j\in\{1,2,\ldots,n\}$.

\begin{proof}[Proof of Proposition \ref{prop:lin-ntrf}]
    Given $\tilde{\bm{F}}(\cdot; \bv)$ with $\bv\in\mathcal{V}_{\bm{\nb}}$ and random initialization $\Phi(0)$, let 
    \begin{align}
        \bar{\Phi}_i = \begin{bmatrix}
            \bar{W}_{ii}\\
            \bar{U}_i\\
            \bar{c}_i
        \end{bmatrix} = \Phi_i(0) + \frac{1}{\sqrt{m}}\begin{bmatrix}
            c_i(0)\bv_w(\Wii(0))\\
            c_i(0)\bv_u(U_i(0))\\
            \bv_c(c_i(0))
        \end{bmatrix},
        \label{eqn:projection}
    \end{align}
    % \begin{align*}
    %     \bar{W}_{ii}&=\Wii(0)+\frac{1}{\sqrt{m}}c_i(0)\bv_w(\Wii(0)),\\
    %     \bar{U}_i&=U_i(0)+\frac{1}{\sqrt{m}}c_i(0)\bv_u(U_i(0)),\\
    %     \bar{c}_i &= c_i(0) + \frac{1}{\sqrt{m}}\bv_c(c_i(0)),
    % \end{align*}
    which implies $|\bar{W}_{ii}-\Wii(0)|\leq \frac{|\bv_w(\Wii(0))|}{\sqrt{m}}\leq \frac{\nb_w}{\sqrt{m}}$, $\|\bar{U}_i-U_i(0)\|_2\leq \frac{\|\bv_u(U_i(0))\|_2}{\sqrt{m}}\leq \frac{\nb_u}{\sqrt{m}}$ and $|\bar{c}_i-c_i(0)|\leq \frac{|\bv_c(c_i(0))|}{\sqrt{m}}\leq \frac{\nb_c}{\sqrt{m}}$ for any $i=1,2,\ldots,m$, thus $\bar{\Phi}\in\Omega_{\bm{\nb}}$. Consequently, we have
    \begin{align*}
        \Lin(\X;\bar{\Phi})=\frac{1}{m}\sum_{i=1}^mZ_t^\uind{i}(\X;\Phi(0)),
    \end{align*}
    where {
    % \begin{equation*}
        $Z_t^\uind{i}(\X;\Phi(0))=\bv^\top(\Phi_i(0))\begin{bmatrix}
            \sum\limits_{k=0}^{t-1}W^k_{ii}(0)\begin{bmatrix}
            H_{t-k-1}^\uind{i}(\X;\Theta(0))\\
            X_{t-k-1}
        \end{bmatrix}\prod\limits_{s=0}^kI^\uind{i}_{t-s-1}(\X;\Theta(0)) \\
        H_t^\uind{i}(\X;\Theta_i(0))
        \end{bmatrix}.$}
    % \end{equation*} 
    Note that $\bE\Zt=\tilde{F}_t(\X;\bv)$, and 
    % $$|Z_t^\uind{i}(\X;\Phi(0))|\leq \sigma_1\sum_{k=0}^{t-1}(\alpha\sigma_1)^k(\nb_u\|X_{t-k-1}\|_2+\nb_w\sigma_0)\leq \mu_{0,t}(\alpha_m\sigma_1)\sigma_1(\nb_u+\sigma_0\nb_w),$$ 
    {
    \begin{align}\nonumber |Z_t^\uind{i}(\X;\Phi(0))|&\leq \|\bm{\nb}\|_2\LL(\sigma_1\sum_{k=0}^{t-1}(\alpha\sigma_1)^k(\|X_{t-k-1}\|_2+\sigma_0)+\sigma_0\RR),\\
    &\leq \|\bm{\nb}\|_2\big(\sigma_1\cdot\mu_{0,t}(\alpha\sigma_1)\cdot(1+\sigma_0)+\sigma_0\big)~a.s.
    \label{eqn:y-bound}
    \end{align}}
    for all $i\in\{1,\ldots,m\}$, which follows from Cauchy-Schwarz and triangle inequalities. Furthermore, $\Zt$ is an independently and identically distributed (iid) sequence for $i\leq m/2$, where the second half is a replica of the first half due to the symmetric initialization. As such,
    \begin{align*}
        \Lin(\X;\bar{\Phi})-\tilde{F}_t(\X;\bv)&=\frac{1}{m}\sum_{i=1}^m\LL(\Zt-\bE\Zt\RR)\\
        &=\frac{2}{m}\sum_{i=1}^{m/2}\LL(\Zt-\bE\Zt\RR).
    \end{align*}
    Hence, by using Hoeffding's inequality, for any $\delta\in(0,1)$,
    \begin{equation*}
        |\Lin(\X;\bar{\Phi})-\tilde{F}_t(\X;\bv)| \leq 2\|\bm{\nb}\|_2\big(\sigma_1\cdot\mu_{0,t}(\alpha\sigma_1)\cdot(1+\sigma_0)+\sigma_0\big)\sqrt{\frac{\log(\frac{2}{\delta})}{m}},
    \end{equation*}
    with probability at least $1-\delta$.
\end{proof}

% The next proposition bounds the error between $F_t$ and its linear approximation $\Lin$.
\begin{proposition}[Bounding the linearization error]
    Let $\Phi\in\Omega_{\bm{\rho}}$ for some $\rho\in\bR_+^3$. Then,
    \begin{equation*}
        |F_t(\X;\Phi)-\Lin(\X;\Phi)| \leq \frac{2}{\sqrt{m}}\Big(\Lambda_t^2\sigma_2+\gamma_t\sigma_1\Big)\|\Theta-\Theta(0)\|_2^2+{\frac{2L_t}{\sqrt{m}}\|c-c(0)\|_2\|\Theta-\Theta(0)\|_2},
    \end{equation*}
    surely for any $t\in\{1,2,\ldots,T\}$, where $\Lambda_t$ and $\gamma_t$ are defined in Lemma \ref{lemma:smoothness} such that {the functions $\Theta_i\mapsto\Wii H_t^\uind{i}(\X;\Theta)$ are $\Lambda_t$-Lipschitz and $\gamma_t$-smooth.}
    % \begin{align}
    %     \begin{aligned}
    %         \Lambda_t &= \sqrt{2(\sigma_0^2+\alpha_m^2\sigma_1^2})\zeta_t,\\
    %         \gamma_t &= \cO\LL(d\zeta_t^3(\sigma_0^2\sigma_1\sigma_2+\sigma_1^2)\RR).
    %     \end{aligned}
    %     \label{eqn:lipschitz-constants}
    % \end{align}
    \label{prop:lin-ntk}
\end{proposition}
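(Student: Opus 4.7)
The plan is to reduce the bound to per-neuron second-order Taylor remainders of $\Theta_i\mapsto H_t^\uind{i}(\X;\Theta)$ plus a bilinear cross term coupling the deviations of $c$ and $\Theta$, then to estimate each piece using the Lipschitz and smoothness constants supplied by Lemma~\ref{lemma:smoothness} together with Cauchy--Schwarz.

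First, I would exploit the symmetric initialization (which gives $F_t(\X;\Phi(0))=0$ and hence $\sum_i c_i(0)H_t^\uind{i}(\X;\Theta(0))=0$) together with the gradient identities
\[
\del_{\Theta_i}F_t(\X;\Phi(0))=\tfrac{c_i(0)}{\sqrt{m}}\,\del_{\Theta_i}H_t^\uind{i}(\X;\Theta(0)),\qquad \del_{c_i}F_t(\X;\Phi(0))=\tfrac{1}{\sqrt{m}}H_t^\uind{i}(\X;\Theta(0))
\]
from Proposition~\ref{prop:gradient}. Writing $c_i=c_i(0)+(c_i-c_i(0))$ inside $c_iH_t^\uind{i}(\X;\Theta)$ and adding and subtracting $c_i(0)H_t^\uind{i}(\X;\Theta(0))$, the difference decomposes into
\[
\sqrt{m}\bigl(F_t(\X;\Phi)-\Lin(\X;\Phi)\bigr) = S_1 + S_2,
\]
where $S_1=\sum_{i=1}^m c_i(0)\, R_i$ with
\[
R_i:=H_t^\uind{i}(\X;\Theta)-H_t^\uind{i}(\X;\Theta(0))-\langle\del_{\Theta_i}H_t^\uind{i}(\X;\Theta(0)),\Theta_i-\Theta_i(0)\rangle,
\]
is a weighted sum of per-neuron Taylor remainders, and $S_2=\sum_{i=1}^m(c_i-c_i(0))\bigl[H_t^\uind{i}(\X;\Theta)-H_t^\uind{i}(\X;\Theta(0))\bigr]$ is the cross term.

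Next, I would bound $S_1$ via second-order Taylor's theorem applied to $\Theta_i\mapsto H_t^\uind{i}(\X;\Theta)$. Since $H_t^\uind{i}=\sigma(\Wii H_{t-1}^\uind{i}+U_i^\top X_{t-1})$ and $U_i\mapsto U_i^\top X_{t-1}$ is affine, the chain-rule formula $\nabla_{\Theta_i}^2(\sigma\circ g)=\sigma''(g)\nabla g(\nabla g)^\top+\sigma'(g)\nabla^2 g$ combined with Lemma~\ref{lemma:smoothness} yields a Hessian bound of the form $\Lambda_t^2\sigma_2+\gamma_t\sigma_1$ on the segment $[\Theta_i(0),\Theta_i]\subset\Omega_{\bm{\rho}}$. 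The integral form of Taylor's theorem then produces $|R_i|\le 2(\Lambda_t^2\sigma_2+\gamma_t\sigma_1)\|\Theta_i-\Theta_i(0)\|_2^2$, and using $|c_i(0)|=1$ with $\sum_i\|\Theta_i-\Theta_i(0)\|_2^2=\|\Theta-\Theta(0)\|_2^2$ gives the first summand of the claimed bound. For $S_2$, the Lipschitz bound $|H_t^\uind{i}(\X;\Theta)-H_t^\uind{i}(\X;\Theta(0))|\le L_t\|\Theta_i-\Theta_i(0)\|_2$ from Lemma~\ref{lemma:smoothness} combined with Cauchy--Schwarz yields $|S_2|\le L_t\|c-c(0)\|_2\,\|\Theta-\Theta(0)\|_2$. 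Dividing $|S_1|+|S_2|$ by $\sqrt{m}$ and applying the triangle inequality recovers the claim.

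The main obstacle is extracting precisely the form $\Lambda_t^2\sigma_2+\gamma_t\sigma_1$ for the Hessian bound, rather than a looser estimate in which $\Lambda_t$ and $\gamma_t$ accumulate multiplicatively through the recursion. This requires jointly tracking the Lipschitz and smoothness constants of $\Theta_i\mapsto\Wii H_t^\uind{i}$ over time steps, which is exactly what Lemma~\ref{lemma:smoothness} is designed to do; any slack in the interaction of these two constants would propagate directly into the per-neuron width and iteration-complexity bounds in the main convergence theorems, so this step is what makes the sharp scaling possible.
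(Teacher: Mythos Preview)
Your proposal is correct and follows essentially the same route as the paper: the same decomposition into the cross term $S_2$ (bounded via $L_t$-Lipschitz continuity of $\Theta_i\mapsto H_t^{(i)}$ and Cauchy--Schwarz) and the per-neuron Taylor remainder $S_1$. The only cosmetic difference is that the paper obtains the bound on $R_i$ by an explicit add-and-subtract step that separates the $\sigma_2$-smoothness of $\sigma$ (their $A_{1,i}$) from the $\gamma_t$-smoothness of $\Gamma_{t-1}^{(i)}$ (their $A_{2,i}$), whereas you package the same computation into the chain-rule Hessian identity $\nabla^2(\sigma\circ g)=\sigma''(g)\nabla g\nabla g^\top+\sigma'(g)\nabla^2 g$; both yield $|R_i|\le(\sigma_2(\Lambda_t^2+1)+\tfrac{1}{2}\sigma_1\gamma_t)\|\Theta_i-\Theta_i(0)\|_2^2\le 2(\Lambda_t^2\sigma_2+\gamma_t\sigma_1)\|\Theta_i-\Theta_i(0)\|_2^2$ after using $\Lambda_t\ge1$. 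One minor correction: the $L_t$-Lipschitz bound on $H_t^{(i)}$ you invoke for $S_2$ comes from Lemma~\ref{lemma:lipschitz}, not Lemma~\ref{lemma:smoothness}.
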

% \noindent Note that both $\Lambda_t$ and $\gamma_t$ are nondecreasing in $t$.

\begin{proof}[Proof of Prop. \ref{prop:lin-ntk}]
    For any $\Phi \in \bR^{m(d+2)}$, let \begin{align}
        \label{eqn:pact}\vpt(\Theta)&:=\Wii H_{t}^\uind{i}(\X;\Theta),\\
        \nonumber \vvpt(\Theta) &:= \Wii H_t^\uind{i}(\X;\Theta)+U_i^\top {X_{t}},\\
        \nonumber \check{\varepsilon}_t^\mathsf{Lin}(\Phi)&:=F_t(\X; \Phi)-\Lin(\X; \Phi).
    \end{align}
    {Note that $$F_t^\mathsf{Lin}(\X;\Phi)=\sum_{i=1}^m\frac{1}{\sqrt{m}}c_i(0)\LL\langle\del_{\Theta_i} H_t^\uind{i}(\X;\Theta(0)),\Theta_i-\Theta_i(0)\RR\rangle+\sum_{i=1}^m\frac{1}{\sqrt{m}}(c_i-c_i(0))H_t^\uind{i}(\X;\Theta(0)).$$
    Then, we have the following decomposition from 
    \begin{align*}
        \check{\varepsilon}_t^\mathsf{Lin}(\Phi) &= \underbrace{\frac{1}{\sqrt{m}}\sum_{i=1}^mc_i(0)\LL(H_t^\uind{i}(\X;\Theta)-\LL\langle\del_{\Theta_i} H_t^\uind{i}(\X;\Theta(0)),\Theta_i-\Theta_i(0)\RR\rangle\RR)}_{:= \check{\varepsilon}_{t,1}^\mathsf{Lin}(\Phi)} \\&+ \underbrace{\frac{1}{\sqrt{m}}\sum_{i=1}^m(c_i-c_i(0))\LL(H_t^\uind{i}(\X;\Theta)-H_t^\uind{i}(\X;\Theta(0))\RR)}_{:= \check{\varepsilon}_{t,2}^\mathsf{Lin}(\Phi)}.
    \end{align*}
    }
    {First, note that $\Theta_i\mapsto H_t^\uind{i}(\X;\Theta)$ is $L_t$-Lipschitz from Lemma \ref{lemma:lipschitz}. Therefore, we have
    \begin{align*}
        |\check{\varepsilon}_{t,2}^\mathsf{Lin}(\Phi)| &\leq \frac{1}{\sqrt{m}}\sum_{i=1}^m|c_i-c_i(0)|\LL|H_t^\uind{i}(\X;\Theta)-H_t^\uind{i}(\X;\Theta(0))\RR|\\ 
        &\leq \frac{L_t}{\sqrt{m}}\sum_{i=1}^m|c_i-c_i(0)|\cdot \|\Theta_i-\Theta_i(0)\|_2 \leq \frac{L_t}{\sqrt{m}}\|c-c(0)\|_2\cdot\|\Theta-\Theta(0)\|_2,
    \end{align*}
    where we used triangle and Cauchy-Schwarz inequalities in the first and last inequalities above, respectively. In the following, we bound $\check{\varepsilon}_{t,1}^\mathsf{Lin}(\Phi)$. Let $\Theta^0=\Theta(0)$ as a shorthand notation.}
    \begin{align}
    % \begin{aligned}
        \nonumber \check{\varepsilon}_{t,1}^\mathsf{Lin}(\Phi)&=\sum_{i=1}^m\frac{c_i(0)}{\sqrt{m}}\LL((\sigma(\qact{t-1}(\Theta))-\sigma(\qact{t-1}(\Theta^0))-\sigma'(\qact{t-1}(\Theta^0))\del_{\Theta_i}^\top \qact{t-1}(\Theta^0)(\Theta_i-\Theta_i(0))\RR)
        \\ \nonumber &=\sum_{i=1}^m\frac{c_i(0)}{\sqrt{m}}\underbrace{\LL(\sigma(\qact{t-1}(\Theta))-\sigma(\qact{t-1}(\Theta^0))-\sigma'(\qact{t-1}(\Theta^0))\LL[\qact{t-1}(\Theta)-\qact{t-1}(\Theta^0)\RR]\RR)}_{A_{1,i}}
        \\&+\frac{1}{\sqrt{m}}\sum_{i=1}^mc_i(0)\sigma'(\qact{t-1}(\Theta^0))\underbrace{\LL(\pact{t-1}(\Theta)-\pact{t-1}(\Theta^0)-\del_{\Theta_i}^\top\pact{t-1}(\Theta^0)(\Theta_i-\Theta_i(0))\RR)}_{A_{2,i}}.
        % \end{aligned}
        \label{eqn:lino}
    \end{align}

For the first term, we have 
    \begin{align}
        \nonumber |A_{1,i}| &\overset{(\spadesuit)}{\leq} \frac{\sigma_2}{2}\LL|\qact{t-1}(\Theta)-\qact{t-1}(\Theta(0))\RR|^2 
        \\ \nonumber  &\overset{(\clubsuit)}{\leq} \sigma_2\LL(\LL|\pact{t-1}(\Theta)-\pact{t-1}(\Theta(0))\RR|^2+\|U_i-U_i(0)\|_2^2\RR)
        \\ &\overset{(\diamondsuit)}{\leq}\sigma_2\LL(\Lambda_t^2\|\Theta_i-\Theta_i(0)\|_2^2+\|U_i-U_i(0)\|_2^2\RR),
        \label{eqn:a1-lin}
    \end{align}
    where $(\spadesuit)$ follows from $\sigma_2$-smoothness of the activation function $\sigma$ and {Taylor's theorem}, $(\clubsuit)$ follows from $\max_{t=1,\ldots,T}\|X_t\|_2\leq 1$ and the inequality $(x+y)^2\leq 2(x^2+y^2)$, and finally $(\diamondsuit)$ follows from $\Lambda_t$-Lipschitz continuity of $\pact{t-1}$, which is established in Lemma \ref{lemma:smoothness}.

    For the second term,
    \begin{align}
        |A_{2,i}|\leq \frac{\gamma_t}{2}\|\Theta_i-\Theta_i(0)\|_2^2,
        \label{eqn:a2-lin}
    \end{align}
    from $\gamma_t$-smoothness of $\Theta_i\mapsto \pact{t-1}(\Theta)$, which is proved in Lemma \ref{lemma:smoothness}. By substituting the bounds in \eqref{eqn:a1-lin} and \eqref{eqn:a2-lin} into \eqref{eqn:lino}, and by using $|c_i(0)|=1$ and $\max_{z\in\bR}|\sigma'(z)|\leq \sigma_1$,
    \begin{align*}
        |\check{\varepsilon}_{t,1}^\mathsf{Lin}(\Phi)| &\leq \frac{1}{\sqrt{m}}\sum_{i=1}^m\LL(\sigma_2(1+\Lambda_t^2)\|U_i-U_i(0)\|_2^2+\sigma_2\Lambda_t^2|\Wii-\Wii(0)|^2+\sigma_1\gamma_t\|\Theta_i-\Theta_i(0)\|_2^2\RR)\\
        &\leq \frac{1}{\sqrt{m}}\LL(\sigma_2(1+\Lambda_t^2)\|\Theta_i-\Theta_i(0)\|_2^2+\sigma_1\gamma_t\|\Theta_i-\Theta_i(0)\|_2^2\RR).
    \end{align*}
    The result follows by $|\check{\varepsilon}_t^\mathsf{Lin}(\Phi)| \leq |\check{\varepsilon}_{t,1}^\mathsf{Lin}(\Phi)|+|\check{\varepsilon}_{t,2}^\mathsf{Lin}(\Phi)|$ and the bounds for $|\check{\varepsilon}_{t,1}^\mathsf{Lin}(\Phi)|$ and $|\check{\varepsilon}_{t,2}^\mathsf{Lin}(\Phi)|$.
    % \begin{align*}
    %     F_t(\Xt; &\Theta)-\Lin(\Xt; \Theta)=\frac{1}{\sqrt{m}}\sum_{i=1}^mc_i\Big(\sigma(\vpt(\Theta)+U_i^\top X_{t-1})-\sigma(\vpt(\Theta(0))+U_i^\top(0) X_{t-1})\\&-\sigma'(\vpt(\Theta(0))+U_i^\top(0) X_{t-1})\del^\top_{\Theta_i}\vpt(\Theta(0))(\Theta_i-\Theta_i(0))\\&+ \sigma'(\vpt(\Theta(0))+U_i^\top(0) X_{t-1})\LL(\vpt(\Theta)-\vpt(\Theta(0))-\del_{\Theta_i}^\top\vpt(\Theta(0))(\Theta_i-\Theta_i(0))\RR)\Big).
    % \end{align*}
    % For every $i\in\{1,\ldots,m\}$, from the $\sigma_2$-smoothness of the activation function $\sigma$, we have
    % \begin{multline*}
    %     |\sigma(\vpt(\Theta)+U_i^\top X_{t-1})-\sigma'(\vpt(\Theta(0))+U_i^\top(0) X_{t-1})\del^\top_{\Theta_i}\vpt(\Theta(0))(\Theta_i-\Theta_i(0))\\-\sigma(\vpt(\Theta(0))+U_i^\top(0) X_{t-1})|\leq \sigma_2\frac{(\vpt(\Theta)-\vpt(\Theta(0))-(U_i-U_i(0))^\top X_{t-1})^2}{2}.
    % \end{multline*}
    % Since $\Theta_i\mapsto \vpt(\Theta)$ is $\Lambda_t$-Lipschitz, we have 
\end{proof}

\begin{remark}[Challenges in bounding the linearization error for RNNs]
    The weight-sharing in RNNs, which implies that the same parameter $\Phi$ is used at every time-step $t\in\{1,2,\ldots,T\}$, enables RNNs to learn dynamical systems, however this comes at the expense of a complicated analysis. In the case of feedforward neural networks, $\Theta_i\mapsto\qact{t-1}(\Theta)$ is a linear mapping in $\Theta_i$, thus smoothness of the activation function $\sigma$ would directly yield the linearization error \cite{mjt_dlt}. In contrast, in our case, $\Theta_i\mapsto \qact{t-1}(\Theta)$ is highly nonlinear due to the weight-sharing in RNNs, thus it is uncertain how to establish the upper bound on the linearization error $|\epsilon_t^\mathsf{Lin}(\Theta)|$. Our analysis reveals that the smoothness of $\Theta_i\mapsto \pact{t-1}(\Theta)=\Wii H_{t-1}^\uind{i}(\X;\Theta)$ rather than its linearity suffices to establish this result, thus resolving this significant challenge in RNNs. 
    
    % We also show that the moduli of smoothness and Lipschitz continuity for the mapping $\Theta_i\mapsto \qact{t-1}(\Theta)$ is bounded as $m\rightarrow \infty$, yielding a favorable result (see Lemmas \ref{lemma:lipschitz}-\ref{lemma:smoothness}).
\end{remark}

Using Prop. \ref{prop:lin-ntrf}-\ref{prop:lin-ntk}, we can obtain an approximation error bound for approximating a dynamical system $\tilde{\bm{F}}\in\cF_{\bm{\nb}}$ by using a randomly-initialized RNN of width $m$ of the form \eqref{eqn:rnn}.

\begin{proposition}
    Let $\bm{\nb}\in\bR_+^3$. For any $\tilde{\bm{F}}\in\cF_{\bm{\nb}}$, $\delta\in(0,1)$, training set $\cS$ and width $m\in\bN$, there exists $\Phi\in\Omega_{\bm{\nb}}$ such that
    {\begin{align*}
        \Big|\tilde{F}_t(\X^\uind{j};\bm{\nu})-F_t(\X^\uind{j};\Phi)\Big| &\leq 2\|\bm{\nb}\|_2\Big(\sigma_1\cdot\mu_{0,t}({\alpha}\sigma_1)\cdot(1+\sigma_0)+\sigma_0\Big)\sqrt{\frac{\log(\frac{2nT}{\delta})}{m}}\\&+\frac{2}{\sqrt{m}}\Big(\Lambda_T^2\sigma_2+\gamma_T\sigma_1\Big)(\nb_u^2+\nb_w^2) + \frac{2L_t}{\sqrt{m}}\nb_c\sqrt{\nb_u^2+\nb_w^2},
    \end{align*}}
holds simultaneously for all $j\in\{1,\ldots,n\}$ and $t\in\{1,\ldots,T\}$ with probability at least $1-\delta$ over the initialization for $\Lambda_t,\gamma_t$ given in Lemma \ref{lemma:smoothness}.
    \label{thm:approx}
\end{proposition}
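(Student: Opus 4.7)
The plan is to combine Propositions \ref{prop:lin-ntrf} and \ref{prop:lin-ntk} via triangle inequality with $F_t^{\mathsf{Lin}}$ as pivot, and then take a union bound over the $nT$ pairs $(j,t)\in\{1,\dots,n\}\times\{1,\dots,T\}$. Concretely, for a given transportation map $\bv\in\cV_{\bm{\nb}}$ representing $\tilde{\bm{F}}\in\cF_{\bm{\nb}}$, I would re-use exactly the candidate $\bar\Phi$ constructed in \eqref{eqn:projection} in the proof of Proposition \ref{prop:lin-ntrf}, namely
\begin{equation*}
\bar\Phi_i = \Phi_i(0) + \frac{1}{\sqrt{m}}\begin{bmatrix}c_i(0)\bv_w(W_{ii}(0))\\ c_i(0)\bv_u(U_i(0))\\ \bv_c(c_i(0))\end{bmatrix},\qquad i=1,\dots,m.
\end{equation*}
The sup-norm constraints defining $\cV_{\bm{\nb}}$ give $|\bar W_{ii}-W_{ii}(0)|\le \nb_w/\sqrt m$, $\|\bar U_i-U_i(0)\|_2\le \nb_u/\sqrt m$, and $|\bar c_i-c_i(0)|\le \nb_c/\sqrt m$, so $\bar\Phi\in\Omega_{\bm{\nb}}$ and it is a valid witness for the infimum in Proposition \ref{prop:lin-ntrf}.

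Next I would decompose
\begin{equation*}
\bigl|\tilde F_t(\X^{(j)};\bv)-F_t(\X^{(j)};\bar\Phi)\bigr|\ \le\ \underbrace{\bigl|\tilde F_t(\X^{(j)};\bv)-F_t^{\mathsf{Lin}}(\X^{(j)};\bar\Phi)\bigr|}_{\mathrm{(I)}}\ +\ \underbrace{\bigl|F_t^{\mathsf{Lin}}(\X^{(j)};\bar\Phi)-F_t(\X^{(j)};\bar\Phi)\bigr|}_{\mathrm{(II)}},
\end{equation*}
and bound the two summands independently. For (I), Proposition \ref{prop:lin-ntrf} applied with confidence parameter $\delta/(nT)$ gives, for each fixed $(j,t)$,
\begin{equation*}
\mathrm{(I)}\ \le\ 2\|\bm{\nb}\|_2\bigl(\sigma_1\cdot\mu_{0,t}(\alpha\sigma_1)\cdot(1+\sigma_0)+\sigma_0\bigr)\sqrt{\tfrac{\log(2nT/\delta)}{m}},
\end{equation*}
with probability at least $1-\delta/(nT)$; a union bound over the $nT$ pairs produces the first term in the stated bound and concentrates all the randomness on a single event of probability $\ge 1-\delta$. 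For (II), Proposition \ref{prop:lin-ntk} is a sure (deterministic) inequality, so no additional probability budget is spent. Applied at $\bar\Phi\in\Omega_{\bm{\nb}}$ with the explicit bounds $\|\bar\Theta-\Theta(0)\|_2^2\le \nb_w^2+\nb_u^2$ and $\|\bar c-c(0)\|_2\le \nb_c$ (obtained by summing the per-coordinate bounds above), it yields
\begin{equation*}
\mathrm{(II)}\ \le\ \tfrac{2}{\sqrt m}\bigl(\Lambda_t^2\sigma_2+\gamma_t\sigma_1\bigr)(\nb_w^2+\nb_u^2)+\tfrac{2L_t}{\sqrt m}\nb_c\sqrt{\nb_w^2+\nb_u^2}.
\end{equation*}
Replacing $\Lambda_t$ and $\gamma_t$ by $\Lambda_T\ge\Lambda_t$ and $\gamma_T\ge\gamma_t$ (monotonicity in $t$ of these accumulated Lipschitz/smoothness constants, which follows directly from their definitions in Lemma \ref{lemma:smoothness}) gives the last two terms of the proposition in a form that is uniform in $t\in\{1,\dots,T\}$.

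The result is a pure assembly of two already-established lemmas, so there is no real obstacle; the only bookkeeping points are (a) checking that $\bar\Phi\in\Omega_{\bm{\nb}}$, which is immediate from the sup-norm definition of $\cV_{\bm{\nb}}$, (b) paying $\log(nT)$ rather than a constant in the Hoeffding term by using $\delta/(nT)$ and union-bounding, and (c) the monotonicity of $\Lambda_t,\gamma_t$ used to pull the smoothness constants out of the maximum over $t$. The uniformity over the training set $\cS$ is what makes this proposition useful downstream, since the empirical risk in \eqref{eqn:erm} averages squared errors over exactly the $(j,t)$ pairs bounded here.
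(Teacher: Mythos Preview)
Your proposal is correct and follows essentially the same route as the paper's proof: define $\bar\Phi$ via \eqref{eqn:projection}, split by triangle inequality with $F_t^{\mathsf{Lin}}$ as pivot, apply Propositions \ref{prop:lin-ntrf} and \ref{prop:lin-ntk}, and take a union bound over the $nT$ pairs $(j,t)$. Your write-up is in fact more explicit than the paper's about the bookkeeping (the $\delta/(nT)$ allocation, the per-coordinate norm bounds, and the monotonicity of $\Lambda_t,\gamma_t$), but the argument is identical.
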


\begin{proof}[Proof of Prop. \ref{thm:approx}]
Let $\bar{\Phi}$ be defined as \eqref{eqn:projection}, which ensures that $\bar{\Phi}\in\Omega_{\bm{\nb}}$. For any $j\in\{1,2,\ldots,n\},~t\in\{1,2,\ldots,T\}$ and $\delta\in(0,1)$, by using triangle inequality,
\begin{align*}
\Big|\tilde{F}_t(\X^\uind{j};\bm{\nb})-F_t(\X^\uind{j};\bar{\Phi})\Big| \leq \Big|\tilde{F}_t(\X^\uind{j};\bm{\nb})-\Lin(\X^\uind{j};\bar{\Phi})\Big|+\Big|\Lin(\X^\uind{j};\bar{\Phi})-F_t(\mathbf{X}^\uind{j};\bar{\Phi})\Big|.
\end{align*}
  By using Propositions \ref{prop:lin-ntrf}-\ref{prop:lin-ntk} to bound the terms on the right-hand side of the above inequality, and the union bound for $j\in\{1,\ldots,n\}$ {and $t=1,\ldots,T$}, the result follows.
\end{proof}

% The linearization bound for RNNs in Theorem \ref{thm:approx} is particularly challenging due to the history-dependency at each stage, unlike the case of feedforward neural networks, where the bound follows from the smoothness of the activation function $\sigma$. In our proof, in order to achieve a sharp dependence on $m, n, T,\delta$, we establish an interesting smoothness result for the mapping $\Theta_i\mapsto \Wii H_t^\uind{i}(\X;\Theta)$ and use it alongside the smoothness of $\sigma$.

\begin{remark}[Transportation mappings and approximation]
    Given a dynamical system $\tilde{\bm{F}}\in\iwl$, the construction $\bar{\Phi}$ in \eqref{eqn:projection} yields the approximation bounds in Proposition \ref{thm:approx}. As such, the mapping $\bm{\nb}$ transforms a randomly chosen parameter $\Theta(0)$ to a point closer to the projection of $\tilde{\bm{F}}$ onto the class of finite-width, randomly-initialized dynamical systems with high probability, which is the reason they are called transportation mappings.
    \label{rem:transport}
\end{remark}

% \begin{remark}[Transportation mappings and approximation]
%     Given a dynamical system $\tilde{\bm{F}}\in\iwl$, one can construct $(\W,\U)\in\cW_{\nb_w}\times\cU_{\nb_u}$ that yields the approximation bounds in Theorem \ref{thm:approx} as follows:
%     \begin{align*}
%         \Wii&=\Wii(0)+\frac{1}{\sqrt{m}}c_iv_w(\Wii(0)),\\
%         U_i&=U_i(0)+\frac{1}{\sqrt{m}}c_iv_u(U_i(0)),
%     \end{align*}
%     for $i=1,2,\ldots,m$. As such, the pair $(v_u,v_w)$ of mappings transform $\Theta(0)$ to the projection of $\tilde{\bm{F}}$ onto the class of finite-width, randomly-initialized dynamical systems, which is the reason they are called transportation mappings.
%     \label{rem:transport}
% \end{remark}

\section{Convergence of GD for RNNs: Rates and Analysis}\label{sec:analysis}
Continuity and smoothness properties of $\Theta_i\mapsto H_t^\uind{i}(\X_t;\Theta)$ as well as $\Theta_i\mapsto \Wii H_t^\uind{i}(\X;\Theta)$ play an important role in establishing the convergence rates for RNNs under gradient descent methods. 
\subsection{Local Lipschitz Continuity and Smoothness of the Hidden State}
In the following, we first state the results on these properties to set the notation and conceptual background, then we present the main convergence results in full detail.
    % \begin{equation}
    %     \zeta_t(\alpha)=\begin{cases}
    %         \frac{1}{1-\sigma_1\alpha},&~\mbox{if }\alpha<1/\sigma_1,
            
    %         \\
    %         \frac{(\alpha\sigma_1)^t-1}{\alpha\sigma_1-1},~&\mbox{otherwise.}
    %     \end{cases}
    %     \label{eqn:zeta}
    %      \end{equation}
         The following results will be important in the characterization of long-term dependencies.
\begin{lemma}[Local Lipschitz continuity and smoothness of $\Theta_i\mapsto H_t^\uind{i}$]
    For $\rho\in\bR_+^3$, let $\alpha_m:=\alpha+\frac{\rho_w}{\sqrt{m}}$. Then,
        \begin{enumerate}
            \item $\Theta_i\mapsto H_t^\uind{i}(\X;\Theta)$ is (locally) $L_t$-Lipschitz in $\Omega_{\bm{\rho}}$ with $L_t=\ms^2(\alpha_m\sigma_1)\cdot(\sigma_0^2+1)\cdot\sigma_1^2,$
            \item $\Theta_i\mapsto H_t^\uind{i}(\X;\Theta)$ is (locally) $\beta_t$-smooth in $\Omega_{\bm{\rho}}$ with $\beta_t=\cO\big(d\cdot \ms(\alpha_m\sigma_1)\cdot \mb(\alpha_m\sigma_1)\big).$
        \end{enumerate}
        \label{lemma:lipschitz}
\end{lemma}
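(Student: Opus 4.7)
The plan is to use the explicit gradient formula from Proposition \ref{prop:gradient} as the starting point, and to (i) bound the gradient directly by term-wise estimates for the Lipschitz claim, and (ii) set up a linear recursion in $t$ on the operator norm of the Hessian for the smoothness claim, exploiting the recurrence $H_t^\uind{i}=\sigma(W_{ii}H_{t-1}^\uind{i}+U_i^\top X_{t-1})$ rather than the closed form.

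For part (1), I would begin with the formulas
\begin{equation*}
    \partial_{W_{ii}}H_t^\uind{i}=\sum_{k=0}^{t-1}W_{ii}^k H_{t-k-1}^\uind{i}\prod_{s=0}^{k}I_{t-s-1}^\uind{i},\quad \partial_{U_i}H_t^\uind{i}=\sum_{k=0}^{t-1}W_{ii}^k X_{t-k-1}\prod_{s=0}^{k}I_{t-s-1}^\uind{i}.
\end{equation*}
On $\Omega_{\bm{\rho}}$ the triangle inequality gives $|W_{ii}|\leq|W_{ii}(0)|+\rho_w/\sqrt{m}=\alpha_m$ (using $|W_{ii}(0)|=\alpha$ by the symmetric initialization), together with $|I_{t-s-1}^\uind{i}|\leq\sigma_1$, $|H_{t-k-1}^\uind{i}|\leq\sigma_0$ and $\|X_{t-k-1}\|_2\leq 1$ from the activation/data assumptions. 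Applied term-by-term these bound the $k$-th summand by $\sigma_0\sigma_1(\alpha_m\sigma_1)^k$ and $\sigma_1(\alpha_m\sigma_1)^k$ respectively; summing the geometric series yields $|\partial_{W_{ii}}H_t^\uind{i}|\leq\sigma_0\sigma_1\mu_{0,t}(\alpha_m\sigma_1)$ and $\|\partial_{U_i}H_t^\uind{i}\|_2\leq\sigma_1\mu_{0,t}(\alpha_m\sigma_1)$, so squaring and adding gives $\|\nabla_{\Theta_i}H_t^\uind{i}\|_2^2\leq(\sigma_0^2+1)\sigma_1^2\mu_{0,t}^2(\alpha_m\sigma_1)=L_t$ uniformly on $\Omega_{\bm{\rho}}$, which is the Lipschitz claim.

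For part (2), I would differentiate the recurrence itself. With $z_t=W_{ii}H_{t-1}^\uind{i}+U_i^\top X_{t-1}$, the chain rule yields
\begin{equation*}
    \nabla^2_{\Theta_i}H_t^\uind{i}=\sigma''(z_t)\,\nabla z_t(\nabla z_t)^\top+\sigma'(z_t)\,\nabla^2 z_t,
\end{equation*}
where $\nabla z_t=(H_{t-1}^\uind{i},X_{t-1}^\top)^\top+W_{ii}\nabla H_{t-1}^\uind{i}$, and one more derivative produces
\begin{equation*}
    \nabla^2 z_t=\begin{pmatrix}2\partial_{W_{ii}}H_{t-1}^\uind{i} & (\partial_{U_i}H_{t-1}^\uind{i})^\top\\ \partial_{U_i}H_{t-1}^\uind{i} & 0\end{pmatrix}+W_{ii}\nabla^2 H_{t-1}^\uind{i}.
\end{equation*}
Writing $\beta_t=\sup_{\Omega_{\bm{\rho}}}\|\nabla^2_{\Theta_i}H_t^\uind{i}\|_{\mathrm{op}}$, this display produces a linear recursion $\beta_t\leq A_t+\alpha_m\sigma_1\,\beta_{t-1}$, where $A_t=\cO(d\cdot\mu_{0,t}(\alpha_m\sigma_1))$ once the first-order quantities $\|\nabla z_t\|_2$ and the block-matrix norm above are controlled using Part~(1) (the factor $d$ enters when passing between operator and Frobenius norms on the $U_i$-blocks). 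Unrolling gives $\beta_t\leq\sum_{k=0}^{t-1}(\alpha_m\sigma_1)^k A_{t-k-1}$, and using $\sum_{k=0}^{t-1}(k+1)(\alpha_m\sigma_1)^k=\mu_{1,t}(\alpha_m\sigma_1)$ delivers the target $\beta_t=\cO\!\bigl(d\cdot\mu_{0,t}(\alpha_m\sigma_1)\,\mu_{1,t}(\alpha_m\sigma_1)\bigr)$.

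The central obstacle is the smoothness bound, specifically tracking how weight sharing couples the Hessian across time: because $H_{t-1}^\uind{i}$ itself depends on $\Theta_i$, $\nabla^2 z_t$ inherits the term $W_{ii}\nabla^2 H_{t-1}^\uind{i}$ that propagates, and the first-order cross-derivatives feed back into the forcing $A_t$. The appearance of $\mu_{1,t}$ rather than $\mu_{0,t}^2$ is a direct consequence of this coupling: each summand of $\nabla H_t^\uind{i}$ carries $k+1$ nested $\sigma'$-factors, so product-ruling through them produces $k+1$ contributions whose geometric-like sum is exactly $\mu_{1,t}(\alpha_m\sigma_1)$. Keeping the Hessian recursion \emph{linear} in $\beta_{t-1}$ (rather than letting it become quadratic, which would destroy the cutoff), and verifying that the geometric factor $\alpha_m\sigma_1$ governs the same dichotomy as in Part~(1)---namely $\mu_{0,t},\mu_{1,t}=\cO(1)$ when $\alpha_m\sigma_1<1$ and $e^{\Omega(T)}$ otherwise---are the two pieces of bookkeeping that require the most care.
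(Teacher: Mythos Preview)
Your Part~(1) is exactly the paper's argument: term-wise bounds on the closed-form gradient from Proposition~\ref{prop:gradient}, summed into the geometric series $\mu_{0,t}(\alpha_m\sigma_1)$.

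For Part~(2) you take a genuinely different route. The paper differentiates the closed-form expression for $\partial_{\Theta_i}H_t^\uind{i}$ once more and bounds each scalar Hessian entry $\partial^2_{U_{ir},U_{is}}H_t^\uind{i}$, $\partial^2_{W_{ii},U_{ir}}H_t^\uind{i}$, $\partial^2_{W_{ii}}H_t^\uind{i}$ separately via product-rule expansions (this is where $\mu_{1,t}$ enters, since differentiating a product of $k+1$ factors $I^\uind{i}_{\cdot}$ generates $k+1$ terms), then aggregates the $(d+1)^2$ entries into a Frobenius bound --- the factor $d$ is an artifact of that entrywise aggregation. Your recursive approach on $\|\nabla^2 H_t^\uind{i}\|_{\mathrm{op}}$ is cleaner and, carried through in operator norm, would actually yield a dimension-free bound; the $d$ is not something your argument needs, so your stated reason for it (``passing between operator and Frobenius norms'') is off.

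There is one slip in your forcing term: $A_t$ is not $\cO(d\cdot\mu_{0,t})$. The outer-product piece $\sigma''(z_t)\,\nabla z_t(\nabla z_t)^\top$ has operator norm $\sigma_2\|\nabla z_t\|_2^2$, and Part~(1) gives $\|\nabla z_t\|_2\leq\sqrt{\sigma_0^2+1}\,\mu_{0,t}(\alpha_m\sigma_1)$, so this term is $\cO(\mu_{0,t}^2)$, not $\cO(\mu_{0,t})$. Consequently the unrolled sum is $\sum_{k}(\alpha_m\sigma_1)^k\mu_{0,t-k}^2$ rather than $\sum_k(\alpha_m\sigma_1)^k\mu_{0,t-k}$, and invoking the definition $\sum_k(k+1)z^k=\mu_{1,t}$ alone does not close it. The fix is short: use $\mu_{0,t-k}\leq\mu_{0,t}$ to pull one factor out, then the convolution identity $\sum_{k=0}^{t-1}(\alpha_m\sigma_1)^k\mu_{0,t-k}(\alpha_m\sigma_1)=\mu_{1,t}(\alpha_m\sigma_1)$ gives $\sum_k(\alpha_m\sigma_1)^k\mu_{0,t-k}^2\leq\mu_{0,t}\cdot\mu_{1,t}$, and your recursion delivers $\beta_t=\cO(\mu_{0,t}\mu_{1,t})$ as required.
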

\noindent The complete expression for $\beta_t$ and the proof can be found in Appendix \ref{app:ht-lipschitz}.

An important structural result that would be critical in proving the convergence of gradient descent and linearization in the kernel regime is the local Lipschitz continuity and smoothness of $\pact{t}$, which is defined in \eqref{eqn:pact}, over a compact subset around the initial parameter $\Theta(0)$.
\begin{lemma}[Local Lipschitz continuity and smoothness of $\Theta_i\mapsto \pact{t}(\Theta)$]
    For $\bm{\rho}\in\bR_+^3$, let $\alpha_m:=\alpha+\frac{\rho_w}{\sqrt{m}}$. Then,
        \begin{enumerate}
            \item $\Theta_i\mapsto \pact{t}(\Theta)$ is (locally) $\Lambda_t$-Lipschitz in $\Omega_{\bm{\rho}}$ with $\Lambda_t=\sqrt{2}(\sigma_0+1+\alpha_m L_t),$
            \item $\Theta_i\mapsto \pact{t}(\Theta)$ is (locally) $\gamma_t$-smooth in $\Omega_{\bm{\rho}}$ with $\gamma_t=\sqrt{2}\LL(L_t+\alpha_m\beta_t\RR),$
        \end{enumerate}
        where $L_t,\beta_t$ are given in Lemma \ref{lemma:lipschitz}.
        \label{lemma:smoothness}
\end{lemma}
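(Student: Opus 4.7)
I would reduce both claims to the Lipschitz and smoothness estimates on $\Theta_i\mapsto H_t^{(i)}(\mathbf{X};\Theta)$ already established in Lemma \ref{lemma:lipschitz}, exploiting the simple product structure $\Gamma_t^{(i)}(\Theta)=W_{ii}\cdot H_t^{(i)}(\mathbf{X};\Theta)$. Throughout, two facts valid on $\Omega_{\bm{\rho}}$ will be used: the parameter bound $|W_{ii}|\leq \alpha+\rho_w/\sqrt{m}=\alpha_m$ (from the definition of $\mathcal{W}_{\rho_w,i}$) and the uniform bound $|H_t^{(i)}|\leq \sigma_0$ from \eqref{eqn:sigma}.

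For the Lipschitz bound, the product-rule decomposition
\begin{equation*}
\Gamma_t^{(i)}(\Theta)-\Gamma_t^{(i)}(\Theta')=W_{ii}\bigl[H_t^{(i)}(\mathbf{X};\Theta)-H_t^{(i)}(\mathbf{X};\Theta')\bigr]+(W_{ii}-W_{ii}')\,H_t^{(i)}(\mathbf{X};\Theta')
\end{equation*}
allows me to bound the first summand using $|W_{ii}|\leq \alpha_m$ together with the $L_t$-Lipschitz continuity of $H_t^{(i)}$, and the second using $|H_t^{(i)}|\leq\sigma_0$ together with $|W_{ii}-W_{ii}'|\leq\|\Theta_i-\Theta_i'\|_2$. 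A quadratic grouping of the two summands (using $(a+b)^2\leq 2(a^2+b^2)$) then yields the claimed form $\Lambda_t=\sqrt{2}(\sigma_0+1+\alpha_m L_t)$, where the additive constant absorbs rounding in the grouping.

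For the smoothness bound, differentiate to obtain $\nabla_{\Theta_i}\Gamma_t^{(i)}(\Theta)=H_t^{(i)}(\mathbf{X};\Theta)\,e_W+W_{ii}\,\nabla_{\Theta_i}H_t^{(i)}(\mathbf{X};\Theta)$, where $e_W\in\bR^{d+1}$ is the standard basis vector in the $W_{ii}$-coordinate. An add--subtract argument then produces $\nabla_{\Theta_i}\Gamma_t^{(i)}(\Theta)-\nabla_{\Theta_i}\Gamma_t^{(i)}(\Theta')=A_1+A_2+A_3$ with
\begin{equation*}
A_1=\bigl[H_t^{(i)}(\Theta)-H_t^{(i)}(\Theta')\bigr]e_W,\quad A_2=W_{ii}\bigl[\nabla_{\Theta_i}H_t^{(i)}(\Theta)-\nabla_{\Theta_i}H_t^{(i)}(\Theta')\bigr],\quad A_3=(W_{ii}-W_{ii}')\,\nabla_{\Theta_i}H_t^{(i)}(\Theta').
\end{equation*}
Here $\|A_1\|$ and $\|A_3\|$ are controlled by the $L_t$-Lipschitz continuity of $H_t^{(i)}$ (which in particular bounds $\|\nabla_{\Theta_i}H_t^{(i)}\|\leq L_t$), while $\|A_2\|$ is controlled by $|W_{ii}|\leq\alpha_m$ combined with the $\beta_t$-smoothness of $H_t^{(i)}$, i.e.\ the $\beta_t$-Lipschitzness of $\nabla_{\Theta_i}H_t^{(i)}$. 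Summing and again applying a $\sqrt{2}$-type quadratic grouping delivers the stated $\gamma_t=\sqrt{2}(L_t+\alpha_m\beta_t)$.

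\textbf{Main obstacle.} There is no deep conceptual difficulty; the proof is essentially careful bookkeeping around the product rule and the add--subtract trick. The principal care required is in grouping the three summands so that the resulting constants match the announced $\sqrt{2}$-factorized forms rather than the looser expressions a naive triangle inequality would yield. A useful structural simplification is that, because $\mathbf{W}$ is diagonal, $\Gamma_t^{(i)}$ depends on $\Theta$ only through $\Theta_i$, so all cross-coordinate derivatives vanish and the scalar bounds $L_t,\beta_t$ from Lemma \ref{lemma:lipschitz} are exactly what is needed to close the argument.
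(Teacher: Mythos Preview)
Your approach is correct and essentially the same as the paper's: both exploit the product structure $\Gamma_t^{(i)}=W_{ii}H_t^{(i)}$ together with Lemma~\ref{lemma:lipschitz}, the only difference being that the paper bounds $\|\partial_{\Theta_i}\Gamma_t^{(i)}\|_2$ and $\|\partial_{\Theta_i}^2\Gamma_t^{(i)}\|_F$ directly (via $(a+b)^2\le 2a^2+2b^2$ and $\sqrt{a+b}\le\sqrt{a}+\sqrt{b}$) whereas you bound increments via add--subtract. One small caveat: your triangle-inequality constants come out as $\sigma_0+\alpha_m L_t$ and $2L_t+\alpha_m\beta_t$, and the ``quadratic grouping'' you invoke does not literally recover the stated $\Lambda_t=\sqrt{2}(\sigma_0+1+\alpha_m L_t)$ and $\gamma_t=\sqrt{2}(L_t+\alpha_m\beta_t)$; the first is dominated by $\Lambda_t$, while the second can exceed $\gamma_t$ when $L_t$ is large relative to $\alpha_m\beta_t$. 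This is harmless for the downstream analysis (only the order matters), but if you want the exact constants as stated you should follow the paper and bound the derivative norms rather than the increments.
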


We have the following bounds on the neural network output.
\begin{proposition}[Norm bounds on $F_t$ and $F_t^\star$]
    Given any $\bm{\rho}\in\bR_+^3$, let $\Phi \in \Omega_{\bm{\rho}}$. Then,
    \begin{align*}
        \sup_{\substack{\X\in\bR^{d\times T}\\\|X_i\|_2\leq 1,i=1,\ldots,m}}\max_{t=1,2,\ldots,T}\LL|F_t(\X;\Phi)\RR|\leq L_T\sqrt{\rho_w^2+\rho_u^2} {+ \sigma_0\rho_c} =: F_T^{\mathsf{max}}(\bm{\rho}),
    \end{align*}
    where $L_T\geq L_t$ is a modulus of Lipschitz continuity of $\Theta_i\mapsto H_t^\uind{i}(\X;\Theta)$ on $\Omega_{\bm{\rho}}$ for $t=1,\ldots,T$.

    Let $(F_t^\star)_{t\in\bZ_+}\in\iwl$ for $\bm{\nb}\in\bR_+^3$. Then, we also have $$\sup_{{\X\in\bR^{d\times T}:\|X_i\|_2\leq 1,\forall i}}~\max_{t=1,2,\ldots,T}~|F_t^\star(\X)|\leq \|\bm{\nb}\|_2\big(\sigma_1\cdot\mu_{0,t}(\alpha\sigma_1)\cdot(1+\sigma_0)+\sigma_0\big) =: y_T^{\mathsf{max}}(\bm{\nb}).$$
    \label{prop:norm-bounds}
\end{proposition}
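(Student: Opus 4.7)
I would handle the two bounds separately; both reduce to elementary triangle/Cauchy--Schwarz bookkeeping once the right decomposition is in hand.

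For the bound on $|F_t(\X;\Phi)|$, the plan is to exploit the fact that the symmetric initialization forces $F_t(\X;\Phi(0))=0$, so we can split
\begin{equation*}
F_t(\X;\Phi) = \frac{1}{\sqrt{m}}\sum_{i=1}^m c_i(0)\bigl(H_t^{(i)}(\X;\Theta)-H_t^{(i)}(\X;\Theta(0))\bigr) + \frac{1}{\sqrt{m}}\sum_{i=1}^m \bigl(c_i-c_i(0)\bigr)H_t^{(i)}(\X;\Theta).
\end{equation*}
On the first piece, $|c_i(0)|=1$ almost surely and Lemma~\ref{lemma:lipschitz} gives that $\Theta_i\mapsto H_t^{(i)}(\X;\Theta)$ is $L_T$-Lipschitz on $\Omega_{\bm{\rho}}$; Cauchy--Schwarz across the $m$ neurons then converts the sum into $L_T\|\Theta-\Theta(0)\|_2\le L_T\sqrt{\rho_w^2+\rho_u^2}$, using the Frobenius-norm projection constraints $\|\W-\W(0)\|_F\le\rho_w$ and $\|\U-\U(0)\|_F\le\rho_u$. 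On the second piece, the entrywise bound $|c_i-c_i(0)|\le\rho_c/\sqrt{m}$ combined with $\|\sigma\|_\infty\le\sigma_0$ (hence $|H_t^{(i)}|\le\sigma_0$) absorbs the $1/\sqrt{m}$ normalization and yields $\sigma_0\rho_c$. Summing the two contributions produces $L_T\sqrt{\rho_w^2+\rho_u^2}+\sigma_0\rho_c$.

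For the bound on $F_t^\star\in\iwl$, I would write $F_t^\star=\tilde{F}_t(\cdot;\bv)$ for some $\bv\in\mathcal{V}_{\bm{\nb}}$ and apply the triangle inequality inside the expectation defining $\tilde F_t$ in \eqref{eqn:inf-width}. Using $|w_0|\le\alpha$, $\|X_t\|_2\le1$, the almost-sure hidden-state bound $|h_t^0|\le\sigma_0$, the derivative bound $|I^0_\cdot|\le\sigma_1$, and the sup-norm constraints on $\bv$, the $k$-th summand is bounded in absolute value by $(\alpha\sigma_1)^k\sigma_1(\nb_w\sigma_0+\nb_u)$. Applying the crude estimate $\nb_w\sigma_0+\nb_u\le\max(\nb_w,\nb_u)(1+\sigma_0)\le\|\bm{\nb}\|_2(1+\sigma_0)$ and summing the geometric series produces the factor $\sigma_1\mu_{0,t}(\alpha\sigma_1)(1+\sigma_0)\|\bm{\nb}\|_2$, while the read-out term $\bE[h_t^0\bv_c(c_0)]$ contributes an additive $\sigma_0\nb_c\le\sigma_0\|\bm{\nb}\|_2$. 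Combining the two gives the stated bound.

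The technical content here is mild: both estimates are sharp triangle/Cauchy--Schwarz calculations with no hidden difficulty, so I do not anticipate any real obstacle. The only subtlety worth flagging is keeping the $1/\sqrt{m}$ normalizations consistent with the componentwise projection radii---the bound on $F_t$ is $m$-independent precisely because we use the entrywise constraint $|c_i-c_i(0)|\le\rho_c/\sqrt{m}$ and the Frobenius-norm constraints on $\W-\W(0)$ and $\U-\U(0)$ in tandem. Everything else follows from Lemma~\ref{lemma:lipschitz} and the definition of $\iwl$.
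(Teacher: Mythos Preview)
Your proposal is correct and essentially mirrors the paper's own proof: the same decomposition $F_t(\X;\Phi)=F_t(\X;\Phi)-F_t(\X;\Phi(0))$ followed by the $c_i(0)/(c_i-c_i(0))$ split, Lipschitzness from Lemma~\ref{lemma:lipschitz}, and Cauchy--Schwarz over the neurons; for $F_t^\star$ the paper simply invokes the bound \eqref{eqn:y-bound} already established in the proof of Proposition~\ref{prop:lin-ntrf}, which is exactly the triangle-inequality computation you outline (there via Cauchy--Schwarz on $\bv(\Phi_i(0))$ rather than your $\nb_w\sigma_0+\nb_u\le\max(\nb_w,\nb_u)(1+\sigma_0)$ estimate, but both yield the identical bound).
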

As a shorthand notation, we denote $\varphi_T^\mathsf{max}(\bm{\rho},\bm{\nb}):=F_T^\mathsf{max}(\bm{\rho})+y_T^\mathsf{max}(\bm{\nb}).$
\begin{proof}[Proof of Prop. \ref{prop:norm-bounds}]
    {As a result of the symmetric initialization, we have $F_t(\X;\Phi(0))=0$ for $t=1,\ldots,T.$ Also, we have $\Phi(0)\in\Omega_{\bm{\rho}}$. Thus, by using (local) Lipschitz continuity of $\Theta_i\mapsto H_t^\uind{i}(\cdot;\Theta)$ given in Lemma \ref{lemma:lipschitz}, for all $t=1,2,\ldots,T$, we have 
    \begin{align*}
        |F_t(X;\Phi)| &= |F_t(\X;\Phi)-F_t(\X;\Phi(0))|\\
        &\leq \LL|\frac{1}{\sqrt{m}}\sum_{i=1}^mc_i(0)\LL(H_t^\uind{i}(\X;\Theta)-H_t^\uind{i}(\X;\Theta(0))\RR)\RR|+\Big|\frac{1}{\sqrt{m}}\sum_{i=1}^m(c_i-c_i(0))H_t^\uind{i}(\X;\Theta)\Big|\\ 
        &\leq \frac{1}{\sqrt{m}}\sum_{i=1}^m|H_t^\uind{i}(\X;\Theta)-H_t^\uind{i}(\X;\Theta(0))| + \frac{\sigma_0}{\sqrt{m}}\sum_{i=1}^m|c_i-c_i(0)|\\
        &\leq \frac{L_t}{\sqrt{m}}\sum_{i=1}^m \|\Theta_i-\Theta_i(0)\|_2 + \frac{\sigma_0}{\sqrt{m}}\sum_{i=1}^m\frac{\rho_c}{\sqrt{m}} \leq L_t\sqrt{\rho_w^2+\rho_u^2} + \sigma_0\rho_c.
    \end{align*}
    We conclude the first proof by the monotonicity of $L_t$, which implies $L_t\leq L_T$ for all $t\leq T$.
    
    The second part of the claim follows from \eqref{eqn:y-bound}. Note that
    \begin{align*}
        F_t^\star(\X) = \bE[Z_t^\uind{1}(\X;\Phi(0))],
    \end{align*}
    thus $|F_t^\star(\X)| \leq \bE|Z_t^\uind{1}(\X;\Phi(0))|\leq \|\bm{\nb}\|_2\big(\sigma_1\cdot\mu_{0,t}(\alpha\sigma_1)\cdot(1+\sigma_0)+\sigma_0\big)$ by \eqref{eqn:y-bound}.}
    % In order to prove the second part of the claim, note that 
    % \begin{align*}
    %     |\tilde{F}_t(\X,v_w,v_u)| &\overset{(\spadesuit)}{\leq} \bE\Big[\Big|\sum_{k=0}^{t-1}w_0^k\big(v_u^\top(u_0)X_{t-k-1}+v_w(w_0)h_{t-k-1}^0\big)\prod_{s=0}^kI_{t-s-1}^0\Big|\Big],\\
    %     &\overset{(\diamondsuit)}{\leq} \sum_{k=0}^{t-1}\alpha^k\big(\|v_u(u_0)\|_2\|X_{t-k-1}\|_2+|v_w(w_0)|\sigma_0\big)\sigma_1^{k+1},\\
    %     &\overset{(\clubsuit)}{\leq} \sigma_1(\nb_u+\sigma_0\nb_w)\sum_{k=0}^{t-1}(\alpha\sigma_1)^k=\sigma_1(\nb_u+\sigma_0\nb_w)\mu_{0,t}(\alpha\sigma_1),
    % \end{align*}
    % where $(\spadesuit)$ above follows from the triangle inequality, $(\diamondsuit)$ is due to $w_0\sim\mathsf{Rad}_\alpha$, H\"{o}lder's inequality and the properties of $\sigma$ given in \eqref{eqn:sigma}, and finally $(\clubsuit)$ follows from the sup-bounds on $(v_w,v_u)$ that $\sup_{u'}\|v_u(u')\|_2\leq \nb_u$ and $\sup_{w'}|v_w(w')|\leq\nb_w$.
\end{proof}

\subsection{Convergence of Projected Gradient Descent for RNNs}

We provide the formal upper bound for the empirical risk under projected gradient descent in the following.
\begin{theorem}[Convergence of projected gradient descent for RNNs]
    Let $(F_t^\star)_{t\in\bZ_+}\in\iwl$ for $\bm{\nb}\in\bR_+^3$. Then, with projection radius $\bm{\rho}\succeq \bm{\nb}$, projected gradient descent with step-size $\eta = \frac{1}{T\sqrt{\tau}}$ yields for any $\delta\in(0,1)$ and $\tau\geq 1$,
    \begin{equation*}
        \min_{0\leq s<\tau}\er(\Phi(s))\leq \frac{T}{\sqrt{\tau}}\LL(\frac{\|\bm{\nb}\|_2^2}{4}+\Big(\bd(\bm{\rho},\bm{\nb})\big(1+\frac{\rho_c}{\sqrt{m}}\big)L_T + \sigma_0\Big)^2\RR)+\frac{T\bd(\bm{\rho},\bm{\nb})}{\sqrt{m}}\mathsf{err},
    \end{equation*}
    with probability at least $1-\delta$ over the random initialization, where
    \begin{multline}
        \mathsf{err}={(\|\bm{\nb}\|_2+\|\bm{\rho}\|_2)\LL((\beta_T+L_T)\sqrt{\rho_w^2+\rho_u^2}+L_T\rho_c\RR)}+{L_T\rho_c\sqrt{\rho_w^2+\rho_u^2}}\\+\Big(\Lambda_T^2\sigma_2+\gamma_T\sigma_1\Big)(\rho_w^2+\rho_u^2)+\|\bm{\nb}\|_2\big(\sigma_1\mu_{0,T}(\alpha_m\sigma_1)(1+\sigma_0)+\sigma_0\big)\sqrt{\log\LL(\frac{2nT}{\delta}\RR)}.
        \label{eqn:err}
    \end{multline}
    \label{thm:proj-gd}
\end{theorem}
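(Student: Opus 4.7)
The plan is an online projected-gradient regret analysis against a fixed comparator drawn from the transportation-map construction. \emph{Step 1 (comparator).} Let $\bar{\Phi}\in\Omega_{\bm{\nb}}$ be the displacement \eqref{eqn:projection} driven by the transportation mappings of $\tilde{\bm{F}}\in\iwl$. Since $\bm{\rho}\succeq\bm{\nb}$, this $\bar{\Phi}$ lies in the feasible set $\Omega_{\bm{\rho}}$ of the projection, and $\|\Phi(0)-\bar{\Phi}\|_2^2\leq\|\bm{\nb}\|_2^2$ by construction. Proposition \ref{thm:approx} then bounds $|F_t(\X^\uind{j};\bar{\Phi})-F_t^\star(\X^\uind{j})|$ uniformly over $\cS$ by $\cO(1/\sqrt{m})$ up to $\sqrt{\log(nT/\delta)}$ factors, so $\er(\bar{\Phi})$ is controlled with probability at least $1-\delta$.

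\emph{Step 2 (approximate-convexity one-step bound).} Non-expansiveness of projection onto the product of Euclidean balls $\Omega_{\bm{\rho}}$ gives, with $g_s=\del_\Phi\er(\Phi(s))$,
\[
\|\Phi(s+1)-\bar{\Phi}\|_2^2 \leq \|\Phi(s)-\bar{\Phi}\|_2^2 - 2\eta\langle g_s,\Phi(s)-\bar{\Phi}\rangle + \eta^2\|g_s\|_2^2.
\]
Convexity of $z\mapsto (z-y)^2$ in the prediction then implies
\[
\er(\Phi(s))-\er(\bar{\Phi}) \leq \frac{2}{n}\sum_{j=1}^n\sum_{t=1}^T\bigl(F_t(\X^\uind{j};\Phi(s))-Y_t^\uind{j}\bigr)\bigl(F_t(\X^\uind{j};\Phi(s))-F_t(\X^\uind{j};\bar{\Phi})\bigr).
\]
Writing $F_t(\X^\uind{j};\Phi(s))-F_t(\X^\uind{j};\bar{\Phi})=\langle\del_\Phi F_t(\X^\uind{j};\Phi(s)),\Phi(s)-\bar{\Phi}\rangle+R_{s,t}^\uind{j}$ and bounding the Taylor remainder via the local smoothness of Lemmas \ref{lemma:lipschitz}--\ref{lemma:smoothness} (mirroring the proof of Proposition \ref{prop:lin-ntk}, but centered at $\Phi(s)$ rather than $\Phi(0)$) gives $|R_{s,t}^\uind{j}|=\cO\bigl(((\Lambda_T^2\sigma_2+\gamma_T\sigma_1)(\rho_w^2+\rho_u^2)+L_T\rho_c\sqrt{\rho_w^2+\rho_u^2})/\sqrt{m}\bigr)$, and hence $\er(\Phi(s))\leq\er(\bar{\Phi})+\langle g_s,\Phi(s)-\bar{\Phi}\rangle+\rho_s$ with $\rho_s=\cO(\bd\cdot\max_{s,t,j}|R_{s,t}^\uind{j}|)$.

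\emph{Step 3 (telescope and tune).} Combining with the projection inequality, telescoping $s=0,\ldots,\tau-1$, and dividing by $\tau$ yields
\[
\min_{0\leq s<\tau}\er(\Phi(s)) \leq \er(\bar{\Phi}) + \frac{\|\Phi(0)-\bar{\Phi}\|_2^2}{2\eta\tau} + \frac{\eta}{2\tau}\sum_{s<\tau}\|g_s\|_2^2 + \max_s\rho_s.
\]
The residual bound of Proposition \ref{prop:norm-bounds} together with the Lipschitz bound on $\del_\Phi F_t$ derived from Lemma \ref{lemma:lipschitz} yields a uniform $\|g_s\|_2^2\leq\cO(T^2(\bd L_T(1+\rho_c/\sqrt{m})+\sigma_0)^2)$. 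Plugging $\eta=1/(T\sqrt{\tau})$ collapses the middle two terms into a $T/\sqrt{\tau}$ prefactor multiplying the bracketed expression in the theorem, while $\er(\bar{\Phi})+\max_s\rho_s$ contributes the $(T\bd/\sqrt{m})\cdot\mathsf{err}$ term via Proposition \ref{thm:approx}.

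\emph{Main obstacle.} The delicate step is the Taylor remainder in Step 2: Proposition \ref{prop:lin-ntk} bounds the linearization error only around $\Phi(0)$, whereas this expansion is around the moving iterate $\Phi(s)\in\Omega_{\bm{\rho}}$. Replaying that proof at an arbitrary base point is nontrivial because weight-sharing makes $\Theta_i\mapsto\Wii H_t^\uind{i}(\X;\Theta)$ genuinely nonlinear in $\Theta_i$, unlike the affine pre-activations in feedforward networks; controlling the remainder requires the local smoothness $\gamma_T$ of $\Theta_i\mapsto\Wii H_t^\uind{i}$ established in Lemma \ref{lemma:smoothness}. It is precisely the $\alpha_m\sigma_1$ dependence of $L_T,\beta_T,\Lambda_T,\gamma_T$ that produces the cutoff at $1/\sigma_1$ and the $\mu_T=\cO(1)$ vs.\ $\mu_T=e^{\Omega(T)}$ dichotomy in the final bound.
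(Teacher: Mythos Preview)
Your overall scaffold—Lyapunov/regret analysis with comparator $\bar{\Phi}$ from the transportation construction, non-expansiveness of projection, telescoping, and the gradient norm bound from Lemma~\ref{lemma:lipschitz}—matches the paper's. The route you take to control $\langle g_s,\Phi(s)-\bar{\Phi}\rangle$, however, is genuinely different. You invoke convexity of the square loss and then Taylor-expand $F_t(\cdot;\bar{\Phi})-F_t(\cdot;\Phi(s))$ around the \emph{moving} iterate $\Phi(s)$, which forces you to replay Proposition~\ref{prop:lin-ntk} at a base point other than $\Phi(0)$ (this is the obstacle you flag, and it does go through since the smoothness constants of Lemmas~\ref{lemma:lipschitz}--\ref{lemma:smoothness} hold uniformly over $\Omega_{\bm{\rho}}$). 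The paper instead performs a \emph{change-of-feature} step: it replaces $\del_\Phi F_t(\cdot;\Phi(s))$ by $\del_\Phi F_t(\cdot;\Phi(0))$ at the cost of an error $\epsilon^{\mathsf{CoF}}$ controlled by the $\beta_T$-smoothness of $H_t$, and then exploits the exact linearity of $\Phi\mapsto F_t^{\mathsf{Lin}}(\cdot;\Phi)$ to write $\langle\del_\Phi F_t(\cdot;\Phi(0)),\bar{\Phi}-\Phi(s)\rangle=F_t^{\mathsf{Lin}}(\cdot;\bar{\Phi})-F_t^{\mathsf{Lin}}(\cdot;\Phi(s))\approx F_t^\star-F_t(\cdot;\Phi(s))$ via Propositions~\ref{prop:lin-ntrf}--\ref{prop:lin-ntk} as stated. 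This yields the sharper drift $\langle g_s,\bar{\Phi}-\Phi(s)\rangle\leq -2\er(\Phi(s))+O(1/\sqrt{m})$ directly, without the additive $\er(\bar{\Phi})$ term your convexity step produces.

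The practical difference is in the error accounting: in the paper the probabilistic piece from Proposition~\ref{prop:lin-ntrf} enters \emph{linearly} as $\bd\cdot\|\bm{\nb}\|_2(\ldots)\sqrt{\log(2nT/\delta)}/\sqrt{m}$, which is exactly the last summand of $\mathsf{err}$, and the $(\|\bm{\nb}\|_2+\|\bm{\rho}\|_2)((\beta_T+L_T)\sqrt{\rho_w^2+\rho_u^2}+L_T\rho_c)$ summand is precisely the $\epsilon^{\mathsf{CoF}}$ contribution. Your route would instead place the probabilistic error inside $\er(\bar{\Phi})=O(T\log(nT/\delta)/m)$ (squared, hence higher order) and would produce a second-order remainder scaling like $(\|\bm{\nb}\|_2+\|\bm{\rho}\|_2)^2$ rather than the CoF form. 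So your argument delivers the same $O(T/\sqrt{\tau})+O(T\bd/\sqrt{m})$ rate, but the resulting $\mathsf{err}$ would not coincide term-by-term with \eqref{eqn:err}; the paper's change-of-feature trick is what yields the precise decomposition stated and, incidentally, sidesteps the re-centering issue you identify as the main obstacle.
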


Before we present the proof of Theorem \ref{thm:proj-gd}, we have two remarks on the impact of long/short-term dependencies in the dynamical system and the required network width.

\begin{remark}[Exploding gradient problem in the kernel regime]
     The terms $\Lambda_T$ and $\gamma_T$ that appear in Theorem \ref{thm:proj-gd} have a polynomial dependency in $\mu_T:=\mu_{0,T}(\sigma_1\alpha_m)\mu_{1,T}(\sigma_1\alpha_m)$, where $\alpha_m=\alpha+\frac{\rho_w}{\sqrt{m}}$ is an upper bound on $\max_{s=1,2,\ldots}\|\W(s)\|_{\infty,\infty}$. As such, $1/\sigma_1$ acts as a cutoff point for $\alpha_m$, which determines the memory in the system. For systems with long-term memory that require $\alpha_m \geq 1/\sigma_1$, Theorem \ref{thm:proj-gd} indicates that both $m$ and $\tau$ grow at a rate $\exp(\Omega(T))$ to guarantee a given target error. This pathological case is also characterized by exponentially growing magnitude for $\del_\Theta\er(\Theta)$, leading to the name \textit{exploding gradient problem}. In the benign case, i.e., for systems with short-term memory such that $\alpha_m < 1/\sigma_1$, the dependency of $m$ and $\tau$ to the sequence length $T$ is only polynomial since $\mu_T=\cO(1)$. 
     
     For systems with only a short-term memory, characterized by $\alpha +\frac{\nb_w}{\sqrt{m}} < 1/\sigma_1$, (i) max-norm projection with an accurate prior knowledge to select a projection radius $\rho_w\approx \nb_w$, and (ii) sufficiently large $m$ would yield $\alpha_m < 1/\sigma_1$, avoiding the exploding gradient problem. Several methods, including the celebrated long-short-term memory (LSTM), are introduced to address this issue for systems with long-term memory in practice \cite{graves2012long, goodfellow2016deep, hochreiter1997long}.
    \label{rem:lstm}
\end{remark}

\begin{remark}[Dependency of the $m$ on $n$]
     We observe in Theorem \ref{thm:proj-gd} that $m$ has only a logarithmic dependency on $n$, the number of training samples, to guarantee a given target error, unlike the existing works that require a polynomial dependency with massive exponents \cite{allen2019convergence}. The complexity of the problem in our analysis is measured in terms of $\|\bm{\nb}\|_2$, which bounds the RKHS norm of $F_t^\star,~t=1,2,\ldots,T$, and thus (i) the Lipschitz constant of the functions $F_t^\star$ in the RKHS $\cH_{\kappa_t}$, and (ii) how far $\Theta(s),s=1,2,\ldots$ should move away from $\Theta(0)$.
\end{remark}

% \begin{remark}
%     Intuitively, the first two terms in the coefficient $\mathsf{err}(T,n,\delta)$ correspond to (i) linearization of $F_t$ (see Prop. \ref{prop:lin-ntk}) and (ii) linear approximation of $F_t^\star$ (see Prop. \ref{prop:lin-ntrf}), while the last term stems from a change-of-feature argument from $\del_{\Theta}F_t(\cdot;(s))$ to $\del_{\Theta}F_t(\cdot;\Theta(0))$ in the finite-width regime.
% \end{remark}

\begin{proof}[Proof of Theorem \ref{thm:proj-gd}]
    For $(F_t^\star(\X))_{t\in\bZ_+}\in\iwl$, let $\bar{\Phi}$ be as defined in \eqref{eqn:projection} (see also Remark \ref{rem:transport}). By using this as the point of attraction, we use the following Lyapunov function:
\begin{equation*}
    \cL(\Phi)=\sum_{i=1}\|\Phi-\bar{\Phi}\|_2^2.
\end{equation*}
Since $\rho \succeq \bm{\nb}$, it is easy to see that $\bar{\Phi}\in\Omega_{\bm{\rho}}$. By the non-expansivity of projection onto a convex set \cite{brezis2011functional}, and the Pythagorean theorem, for any $s\in\bN$, we have 
\begin{align}
    \nonumber\cL\LL(\Phi(s+1)\RR)&\leq \cL\LL(\tilde{\Phi}(s)\RR)\\
    &=\cL\LL(\Phi(s)\RR)+2\eta\del_{\Phi}^\top\er(\Phi(s))\LL(\bar{\Phi}-\Phi(s)\RR)+\eta^2\|\del_{\Phi}\er(\Phi(s))\|_2^2,
    \label{eqn:drift}
\end{align}
where $\tilde{\Phi}(s) = \Phi(s)-\eta\del_\Phi\er(\Phi(s))$ is the pre-projection iterate. 

{First, we will show that $\del_\Phi^\top\er(\Phi(s))(\bar{\Phi}-\Phi(s)) = -2\er(\Phi(s)) + \mathcal{O}\LL(\frac{1}{\sqrt{m}}\RR)$, which yields the negative drift in \eqref{eqn:drift} with a controllable error term.} For any $j\in\{1,2,\ldots,n\}$,
\begin{equation*}
\del_{\Phi}^\top\er(\Phi(s))\LL(\bar{\Phi}-{\Phi}(s)\RR) = \frac{{2}}{n}\sum_{j, t}\big(F_t(\X^\uind{j};\Phi(s))-F_t^\star(\X^\uind{j})\big)\del_{\Phi}^\top F_t(\X^\uind{j};\Phi(s))\big(\bar{\Phi}-\Phi(s)\big).
\end{equation*}
{
For any $j\in\{1,2,\ldots,n\}$, a change-of-feature step yields the following decomposition:
\begin{equation*}
    \del_{\Phi}^\top F_t(\X^\uind{j};\Phi(s))\LL(\bar{\Phi}-\Phi(s)\RR)=\del_{\Phi}^\top F_t(\X^\uind{j};\Phi(0))\LL(\bar{\Phi}-\Phi(s)\RR)+\epsilon^\mathsf{CoF}_{j, t}(\Phi(s)),~j=1,\ldots,n,
\end{equation*}
where
\begin{equation*}
    \epsilon_{j,t}^\mathsf{CoF}(\Phi(s)):=\LL(\del_{\Phi} F_t(\X^\uind{j};\Phi(s))-\del_{\Phi} F_t(\X^\uind{j};\Phi(0))\RR)^\top\LL(\bar{\Phi}-\Phi(s)\RR),~j=1,\ldots,n.
\end{equation*}
Furthermore, since
\begin{equation*}
    \del_{\Phi}^\top F_t(\X;\Phi(0))\LL(\bar{\Phi}-\Phi(s)\RR)=\Lin(\X;\bar{\Phi})-\Lin(\X;\Phi(s))~\mbox{for any}~\X\in\bR^{d\times T},
\end{equation*}
we have the following error decomposition:
\begin{equation}
    \del_{\Phi}^\top F_t(\X;\Phi(0))\LL(\bar{\Phi}-\Phi(s)\RR) = F_t^\star(\X^\uind{j}) - F_t(\X^\uind{j};\Phi(s)) + \epsilon_{j,t}^\mathsf{Lin}(\Phi(s)),
\end{equation}
where the linearization error is
\begin{align*}
    \epsilon_{j,t}^\mathsf{Lin}(\Phi(s)) := \LL(F_t^\star(\X^\uind{j})-F_t^\mathsf{Lin}(\X^\uind{j};\bar{\Phi})\RR)+\LL(F_t(\X^\uind{j};\Phi(s))-F_t^\mathsf{Lin}(\X^\uind{j};\Phi(s)\RR).
\end{align*}
First, we provide an upper bound for $|\epsilon_{j,t}^\mathsf{Lin}(\Phi(s))|$.

% By using a change-of-feature step, we have
% \begin{equation*}
%     \del_{\Phi}^\top F_t(\X^\uind{j};\Phi(s))\LL(\bar{\Phi}-\Phi(s)\RR)=\del_{\Phi}^\top F_t(\X^\uind{j};\Phi(0))\LL(\bar{\Phi}-\Phi(s)\RR)+\frac{\mathsf{err}^\mathsf{CoF}(\Phi(s))}{\sqrt{m}},~j=1,\ldots,n,
% \end{equation*}
% where
% \begin{equation*}
%     \mathsf{err}^\mathsf{CoF}(\Phi(s))=\sqrt{m}\LL(\del_{\Phi} F_t(\X^\uind{j};\Phi(s))-\del_{\Phi} F_t(\X^\uind{j};\Phi(0))\RR)^\top\LL(\bar{\Phi}-\Phi(s)\RR)~j=1,\ldots,n.
% \end{equation*}
As an important result that stems from the max-norm projection, we have a compact and convex parameter space $\Omega_{\bm{\rho}}$ such that $\Phi(s),\bar{\Phi}\in\Omega_{\bm{\rho}}$ for any $s\in\{0,1,\ldots\}$, which implies the applicability of Lemma \ref{lemma:lipschitz}-\ref{lemma:smoothness} with $\alpha_m=\alpha + \frac{\rho_w}{\sqrt{m}}$. 
% Now, note that for any $\X$,
% \begin{equation*}
%     \del_{\Phi}^\top F_t(\X;\Phi(0))\LL(\bar{\Phi}-\Phi(s)\RR)=\Lin(\X;\bar{\Phi})-\Lin(\X;\Phi(s)).
% \end{equation*}
By Prop. \ref{prop:lin-ntrf}-\ref{prop:lin-ntk}, 
for all $j=1,\ldots,n$,
\begin{align*}
    |\Lin(\X^\uind{j};\Phi(s))-F_t(\X^\uind{j};\Phi(s))| &\leq \frac{2}{\sqrt{m}}\Big(\Lambda_T^2\sigma_2+\gamma_T\sigma_1\Big)(\rho_w^2+\rho_u^2)+{\frac{2L_T}{\sqrt{m}}\rho_c\sqrt{\rho_w^2+\rho_u^2}}\\
    &=: \mathsf{err}_{1,app},
\end{align*}
and, for $\mu_0:=\mu_{0,T}(\alpha_m\sigma_1)$,
\begin{align*}
    |F^\star_t(\X^\uind{j})-\Lin(\X^\uind{j};\bar{\Phi})|&\leq 2\|\bm{\nb}\|_2\Big(\sigma_1\mu_{0}(1+\sigma_0)+\sigma_0\Big)\sqrt{\frac{\log(2nT/\delta)}{m}}\\
    &=:\mathsf{err}_{1,lin},
\end{align*}
simultaneously, with probability at least $1-\delta$ over the random initialization.}
% As such, 
% \begin{equation}
%     \del_{\Phi}^\top F_t(\X^\uind{j};\Phi(0))\LL(\bar{\Phi}-\Phi(s)\RR)= F_t^\star(\X^\uind{j})-F_t(\X^\uind{j};\Phi(s))+{\epsilon^\mathsf{Lin}},
% \end{equation}
% for all $j=1,2,\ldots,n$, where {the linearization error is $$\epsilon^\mathsf{Lin}=\Big[F_t(\X^\uind{j};\Phi(s))-\Lin(\X^\uind{j};\Phi(s))\Big] + \Big[\Lin(\X^\uind{j};\bar{\Phi})-F^\star_t(\X^\uind{j})\Big],$$ and 
{Therefore,
\begin{align}\max_{\substack{j=1,\ldots,n\\t=1,\ldots,T}}|\epsilon_{j,t}^\mathsf{Lin}(\Phi(s))| &\leq \mathsf{err}_{1,app}+\mathsf{err}_{1,lin} =:  \frac{\mathsf{err}_1}{\sqrt{m}},
\label{eqn:err1}
\end{align}
with $\mathsf{err}_{1,app}$ and $\mathsf{err}_{1,lin}$ defined above. Hence, we have
\begin{equation}\del_\Phi^\top\er(\Phi(s))(\bar{\Phi}-\Phi(s)) = -2\er(\Phi(s)) + \frac{2}{n}\sum_{j,t}(F_t(\X^\uind{j};\Phi(s))-F_t^\star(\X^\uind{j}))(\epsilon_{j,t}^\mathsf{Lin}+\epsilon_{j,t}^\mathsf{CoF}(\Phi(s))),
\label{eqn:negative-drift}
\end{equation}
where $\max_{j,t}|\epsilon_{j,t}^\mathsf{Lin}(\Phi(s))| \leq \mathsf{err}_1/\sqrt{m}$. In the following, we bound the change-of-measure error $|\epsilon_{j,t}^\mathsf{CoF}(\Phi(s))|$. Recall that
\begin{align*}
    \del_{\Theta_i}F_t(\X;\Phi(s))-\del_{\Theta_i}F_t(\X;\Phi(0)) &= \frac{c_i(0)}{\sqrt{m}}\LL(\del_{\Theta_i}H_t^\uind{i}(\X;\Theta(s))-\del_{\Theta_i}H_t^\uind{i}(\X;\Theta(0))\RR)\\
    &+ \frac{c_i-c_i(0)}{\sqrt{m}}{\del_{\Theta_i}H_t^\uind{i}(\X;\Theta(s))},
\end{align*}
and $\del_{c_i}F_t(\X;\Phi(s))=\frac{1}{\sqrt{m}}H_t^\uind{i}(\X;\Theta(s))$. Then, by using the fact that $\Theta_i\mapsto H_t^\uind{i}(\X;\Theta)$ is $L_t$-Lipschitz and {$\beta_t$-smooth} for $\Phi\in\Omega_{\bm{\rho}}$ by Proposition \ref{lemma:lipschitz}, we have
\begin{align*}
    \big\|\del_{\Theta_i} F_t(\X^\uind{j};\Phi(s))-\del_{\Theta_i} F_t(\X^\uind{j};\Phi(0))\big\|_2 & \leq \frac{\beta_t\|\Theta_i(s)-\Theta_i(0)\|_2}{\sqrt{m}} + \frac{L_t|c_i(s)-c_i(0)|}{\sqrt{m}},\\
    |\del_{c_i}F_t(\X^\uind{j};\Phi(s))-\del_{c_i}F_t(\X^\uind{j};\Phi(0))| &\leq \frac{L_t}{\sqrt{m}}\|\Theta_i(s)-\Theta_i(0)\|_2,~j=1,2,\ldots,n,
\end{align*}
for all $i=\{1,2,\ldots,m\}$. Since $\Phi(s)\in\Omega_{\bm{\rho}}$ for all $s\in\bZ_+$ due to max-norm projection, we have $\|\Theta_i(s)-\Theta_i(0)\|_2^2\leq \frac{\rho_u^2+\rho_w^2}{\sqrt{m}}$ and $|c_i(s)-c_i(0)|\leq \frac{\rho_c}{\sqrt{m}}$ for all $i\in\{1,\ldots,m\}$. Thus,
\begin{equation*}
    \|\del_{\Phi_i}F_t(\X^\uind{j};\Phi(s))-\del_{\Phi_i}F_t(\X^\uind{j};\Phi(0))\|_2 \leq \frac{1}{m}\LL((\beta_t+L_t)\sqrt{\rho_w^2+\rho_u^2} + L_t\rho_c\RR),~i=1,\ldots,m.
\end{equation*}
Also, we have $|\bar{c}_i-c_i(0)|\leq \frac{\nb_c}{\sqrt{m}}$ and $\|\bar{\Theta}_i-\Theta_i(0)\|_2\leq \frac{\sqrt{\nb_w^2+\nb_u^2}}{\sqrt{m}}$ for all $i=1,2,\ldots,m$. Thus, 
\begin{align*}
\|\bar{\Phi}_i-\Phi_i(0)\|_2\leq \frac{\|\bm{\nb}\|_2}{\sqrt{m}}~\mbox{and}~\|{\Phi}_i(0)-\Phi_i(s)\|_2\leq \frac{\|\bm{\rho}\|_2}{\sqrt{m}}
,~i=1,2,\ldots,m.
\end{align*}
By using triangle inequality and Cauchy-Schwarz inequality,
\begin{align}
    \nonumber \LL|\epsilon_{j,t}^\mathsf{CoF}(\Phi(s))\RR| &=\LL\|\sum_{i=1}^m\LL(\del_{\Phi_i} F_t(\X^\uind{j};\Theta(s))-\del_{\Phi_i} F_t(\X^\uind{j};\Theta(0))\RR)^\top\LL(\Phi_i(s)-\bar{\Phi}_i\RR)\RR\|_2\\
    \nonumber &\leq \sum_{i=1}^m\|\del_{\Phi_i} F_t(\X^\uind{j};\Theta(s))-\del_{\Phi_i} F_t(\X^\uind{j};\Theta(0))\|_2\cdot\|\Phi_i(s)-\bar{\Phi}_i\|_2\\
    &\leq ((\beta_t+L_t)\sqrt{\rho_w^2+\rho_u^2}+L_t\rho_c)\cdot\frac{\LL(\|\bm{\nb}\|_2+\|\rho\|_2\RR)}{\sqrt{m}}=:\frac{1}{\sqrt{m}}\mathsf{err}_2,
    \label{eqn:err2}
\end{align}
for any $j=1,\ldots,n$ and $t = 1,\ldots, T$. This implies that \begin{align}
\nonumber\max_{\substack{j=1,\ldots,n\\t=1,\ldots,T}}~{|}\epsilon_{j,t}^\mathsf{CoF}(\Phi(s)){|}\leq \frac{1}{\sqrt{m}}\mathsf{err}_2&,
\end{align}
since $\beta_t\leq \beta_T$ {(see \eqref{eqn:ht-smoothness} for the monotonicity of $t\mapsto\beta_t$)}. Hence, by using the bounds on $F_t(\cdot,\Theta(s))$ and $F_t^\star(\cdot)$ given in Prop. \ref{prop:norm-bounds},
\begin{equation}
    \del_\Phi^\top\er(\Phi(s))\LL(\bar{\Phi}-\Phi(s)\RR)\leq -2\er(\Phi(s))+
    \frac{2{T}}{\sqrt{m}}\varphi_T^\mathsf{max}(\bm{\rho},\bm{\nb})\LL(\mathsf{err}_1+\mathsf{err}_2\RR),
    \label{eqn:neg-drift}
\end{equation}
where $$\varphi_T^\mathsf{max}(\bm{\rho},\bm{\nb}) := F_T^\mathsf{max}(\bm{\rho})+y_T^\mathsf{max}(\bm{\nb}) \geq \max_{j=1,2,\ldots,n}|F_t(\X^\uind{j};\Theta(s))-F_t^\star(\X^\uind{j})|,$$ for all $t\in\{1,\ldots,T\}$ and {$s\in\bN$}. Note that the regularization to control the magnitude of $\Phi(s)-\Phi(0)$ is essential to establish the linearization results, as well as the norm bounds, both used above. Here, we establish this control directly via max-norm projection. }

The remaining part of the proof is to bound $\|\del_\Theta\er(\Theta(s))\|_2^2$. From Proposition \ref{prop:gradient},
\begin{align}
    \nonumber \|\del_{\Phi_i}F_t(\X^\uind{j};\Phi(s))\|_2&\leq \sqrt{\frac{c_i^2(s)}{m}\|\del_{\Theta_i}H_t^\uind{i}(\X^\uind{j};\Theta(s)\|_2^2 + \frac{1}{m}\sigma_0^2}\\
    &\leq \frac{1}{\sqrt{m}}\sqrt{\LL(1+\frac{\rho_c}{\sqrt{m}}\RR)^2L_t^2 + \sigma_0^2}\leq \frac{\LL(1+\frac{\rho_c}{\sqrt{m}}\RR)L_t + \sigma_0}{\sqrt{m}},\label{eqn:norm-bound-F}
\end{align}
for $i\in\{1,\ldots,m\},~j\in\{1,\ldots,n\}$, since $|c_i(s)|\leq |c_i(s)-c_i(0)|+|c_i(0)|\leq \frac{\rho_c}{\sqrt{m}}+1$. Therefore,
\begin{align}
    \nonumber \|\del_{\Phi_i}\er(\Phi(s))\|_2&\leq\frac{2}{n}\sum_{j=1}^n\sum_{t=1}^T|F_t(\X^\uind{j};\Phi(s))-F_t^\star(\X^\uind{j})|\cdot\|\del_{\Phi_i}F_t(\X^\uind{j};\Phi(s))\|_2\\
    &\leq \frac{2}{n}\sum_{j=1}^n\sum_{t=1}^T\bd\frac{(1+\frac{\rho_c}{\sqrt{m}})L_T+\rho_0}{\sqrt{m}}=2T\bd(\rho,\bm{\nb})\frac{(1+\frac{\rho_c}{\sqrt{m}})L_T+\sigma_0}{\sqrt{m}},
    \label{eqn:variability}
\end{align}
since $L_t\leq L_T$ for any $t \leq T$. Therefore, we have
\begin{equation*}
    \sup_{s\in\bZ_+}\|\del_{\Phi}\er(\Phi(s))\|_2^2\leq 4T^2\LL(\bd(\bm{\rho},\bm{\nb})\LL(1+\frac{\rho_c}{\sqrt{m}}\RR)L_T + \sigma_0\RR)^2 =: \mathsf{err}_3.
\end{equation*}
Substituting the bounds in \eqref{eqn:neg-drift} and \eqref{eqn:variability} into the Lyapunov drift bound in \eqref{eqn:drift}, we obtain
\begin{align}
    \nonumber \Delta(\Phi(s))&=\cL(\Phi(s+1))-\cL(\Phi(s)),\\
    &\leq -{4}\eta\er(\Phi(s))+4\eta T\bd(\bm{\rho},\bm{\nb})\frac{\mathsf{err}_1+\mathsf{err}_2}{\sqrt{m}}+\eta^2\mathsf{err}_3,
    \label{eqn:ly-drift}
\end{align}
for every $s=0,1,\ldots,\tau-1$. Then, by noting that $\sum_{s<\tau}\Delta(\Phi(s))=\cL(\Phi(\tau))-\cL(\Phi(0))$,
\begin{equation*}
   \frac{\cL(\Phi(\tau))-\cL(\Phi(0))}{{4}\eta\tau}\leq -\frac{1}{\tau}\sum_{s<\tau}\er(\Phi(s))+ T\bd(\bm{\rho},\bm{\nb})\frac{\mathsf{err}_1+\mathsf{err}_2}{\sqrt{m}}+\frac{\eta}{{4}} \cdot \mathsf{err}_3.
\end{equation*}
By re-arranging terms, we obtain
\begin{equation*}
    \frac{1}{\tau}\sum_{s<\tau}\er(\Phi(s))\leq \frac{\cL(\Phi(0))}{{4}\eta\tau}  +T\bd(\bm{\rho},\bm{\nb})\frac{\mathsf{err}_1+\mathsf{err}_2}{\sqrt{m}}+\frac{\eta}{{4}}\cdot\mathsf{err}_3.
\end{equation*}
The result follows from substituting $\cL(\Phi(0))\leq \|\bm{\nb}\|_2^2$ and the step-size choice $\eta = \frac{1}{T\sqrt{\tau}}$ into the above inequality. Consequently, we get
\begin{equation}
    \frac{1}{\tau}\sum_{s<\tau}\er(\Phi(s))\leq \frac{T}{4\sqrt{\tau}}\|\bm{\nb}\|_2^2+T\bd(\bm{\rho},\bm{\nb})\frac{\mathsf{err}_1+\mathsf{err}_2}{ \sqrt{m}}+\frac{\mathsf{err}_3}{{4}T\sqrt{\tau}}.
    \label{eqn:proj-gd-error}
\end{equation}
The final result is obtained by noting that $$\min_{0\leq s<\tau}\er(\Phi(s))\leq \frac{1}{\tau}\sum_{s=0}^{\tau-1}\er(\Phi(s)),$$ and using the upper bound \eqref{eqn:proj-gd-error}.
\end{proof}

\begin{corollary}[Average-iterate convergence of projected gradient descent for RNNs]
    In the same setting as Theorem \ref{thm:proj-gd}, max-norm-projected gradient descent with projection radii $\bm{\rho} \succeq \bm{\nb}$ and step-size $\eta = \frac{1}{T\sqrt{\tau}}$ after $\tau>1$ iterations yields
    \begin{multline*}
        \er\LL(\frac{1}{\tau}\sum_{s<\tau}\Phi(s)\RR)\leq \frac{2T}{\sqrt{\tau}}\LL(\|\bm{\nb}\|_2^2+\Big(\bd(\bm{\rho},\bm{\nb})\big(1+\frac{\rho_c}{\sqrt{m}}\big)L_T + \sigma_0\Big)^2\RR)+\frac{4T\bd(\bm{\rho},\bm{\nb})}{\sqrt{m}}\mathsf{err}\\+\frac{24T}{m}\LL(\LL(\Lambda_t^2\sigma_2+\gamma_t\sigma_1\RR)(\rho_w^2+\rho_u^2)+L_T\rho_c\sqrt{\rho_w^2+\rho_u^2}\RR)^2,
    \end{multline*}
    with probability at least $1-\delta$ over the random initialization, where $\mathsf{err}$ is defined in \eqref{eqn:err}.
    \label{cor:avg-iterate-1}
\end{corollary}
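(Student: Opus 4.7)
The plan is to upper bound $\er(\bar\Phi_{\mathsf{avg}})$ at the averaged iterate $\bar\Phi_{\mathsf{avg}}:=\frac{1}{\tau}\sum_{s<\tau}\Phi(s)$ by comparing it to the \emph{averaged} empirical risk $\frac{1}{\tau}\sum_{s<\tau}\er(\Phi(s))$, which is already controlled by the Lyapunov-drift bound \eqref{eqn:proj-gd-error} inside the proof of Theorem \ref{thm:proj-gd}. The difficulty is that $\Phi\mapsto F_t(\X;\Phi)$ is nonconvex because of the weight-sharing in RNNs, so Jensen's inequality cannot be applied directly to $\er$. I would bridge through the linearized empirical risk
\[
\er^{\mathsf{Lin}}(\Phi) := \frac{1}{n}\sum_{j=1}^n\sum_{t=1}^T\bigl(\Lin(\X^\uind{j};\Phi)-F_t^\star(\X^\uind{j})\bigr)^2,
\]
which \emph{is} convex in $\Phi$ since $\Phi\mapsto \Lin(\X;\Phi)$ is affine. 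Note that $\bar\Phi_{\mathsf{avg}}\in\Omega_{\bm{\rho}}$ because $\Omega_{\bm{\rho}}$ is a product of balls (hence convex) and every $\Phi(s)\in\Omega_{\bm{\rho}}$ by the max-norm projection; this makes Prop.~\ref{prop:lin-ntk} applicable at $\bar\Phi_{\mathsf{avg}}$ as well.

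The first technical step is to establish a two-sided quadratic comparison between $\er$ and $\er^{\mathsf{Lin}}$ uniformly on $\Omega_{\bm{\rho}}$. Using the decomposition $F_t-F_t^\star=(F_t-\Lin)+(\Lin-F_t^\star)$ and the elementary inequality $(a+b)^2\le 2a^2+2b^2$ inside the squared loss, Prop.~\ref{prop:lin-ntk} together with the bounds $\|\Theta-\Theta(0)\|_2\le\sqrt{\rho_w^2+\rho_u^2}$ and $\|c-c(0)\|_2\le\rho_c$ valid on $\Omega_{\bm{\rho}}$ yields
\[
\max_{j,t}\bigl|F_t(\X^\uind{j};\Phi)-\Lin(\X^\uind{j};\Phi)\bigr|\le \varepsilon_m,\quad \varepsilon_m:=\tfrac{2}{\sqrt{m}}\LL((\Lambda_T^2\sigma_2+\gamma_T\sigma_1)(\rho_w^2+\rho_u^2)+L_T\rho_c\sqrt{\rho_w^2+\rho_u^2}\RR),
\]
for every $\Phi\in\Omega_{\bm{\rho}}$. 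Consequently both $\er(\Phi)\le 2\er^{\mathsf{Lin}}(\Phi)+2T\varepsilon_m^2$ and $\er^{\mathsf{Lin}}(\Phi)\le 2\er(\Phi)+2T\varepsilon_m^2$ hold uniformly on $\Omega_{\bm{\rho}}$.

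The endgame is a three-step chain: (i) apply the first comparison at $\bar\Phi_{\mathsf{avg}}$ to get $\er(\bar\Phi_{\mathsf{avg}})\le 2\er^{\mathsf{Lin}}(\bar\Phi_{\mathsf{avg}})+2T\varepsilon_m^2$; (ii) invoke Jensen's inequality on the convex function $\er^{\mathsf{Lin}}$ to obtain $\er^{\mathsf{Lin}}(\bar\Phi_{\mathsf{avg}})\le \frac{1}{\tau}\sum_{s<\tau}\er^{\mathsf{Lin}}(\Phi(s))$; (iii) apply the second comparison termwise to each $\Phi(s)\in\Omega_{\bm{\rho}}$. Chaining these gives
\[
\er(\bar\Phi_{\mathsf{avg}}) \le \frac{4}{\tau}\sum_{s<\tau}\er(\Phi(s))+6T\varepsilon_m^2,
\]
and substituting \eqref{eqn:proj-gd-error} for the average on the right-hand side produces the three terms in the statement: the $T/\sqrt{\tau}$ rate term (with $\|\bm{\nb}\|_2^2$ and the $\mathsf{err}_3$ piece from $\eta\|\del_\Phi\er\|_2^2$), the approximation/linearization error $T\bd\cdot\mathsf{err}/\sqrt{m}$, and the $T/m$ correction $\frac{24T}{m}(\cdot)^2$ coming directly from $6T\varepsilon_m^2$ since $\varepsilon_m^2=\frac{4}{m}(\cdot)^2$. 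The only real obstacle is careful bookkeeping of constants through the chain; no new analytic input beyond Prop.~\ref{prop:lin-ntk} and the already-established running bound \eqref{eqn:proj-gd-error} is required.
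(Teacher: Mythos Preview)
Your proposal is correct and essentially identical to the paper's own proof: both bridge through the convex linearized risk $\er^{\mathsf{Lin}}$, establish the two-sided comparison $\er\lessgtr 2\er^{\mathsf{Lin}}+2T\varepsilon_m^2$ on $\Omega_{\bm{\rho}}$ via Prop.~\ref{prop:lin-ntk}, apply Jensen's inequality to $\er^{\mathsf{Lin}}$, and then substitute the running bound \eqref{eqn:proj-gd-error}. Your explicit remark that $\bar\Phi_{\mathsf{avg}}\in\Omega_{\bm{\rho}}$ by convexity of the projection set is a useful clarification that the paper leaves implicit.
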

\begin{proof}
    Let $$\er^\mathsf{Lin}(\Phi)=\frac{1}{n}\sum_{j=1}^n\sum_{t=1}^T\LL(\Lin(\X^\uind{j};\Phi)-F_t^\star(\X^\uind{j})\RR)^2,$$
    be the empirical risk under the linearized model $\Lin$. Then, it is straightforward to see that
    \begin{align*}
        \frac{1}{\tau}\sum_{s=0}^{\tau-1}\er^\mathsf{Lin}(\Phi(s))&\leq \frac{2}{\tau}\sum_{s=0}^{\tau-1}\er(\Phi(s))+\frac{2}{n\tau}\sum_{j=1}^n\sum_{s<\tau}{\sum_{t=1}^T}\LL(F_t(\X^\uind{j};\Phi(s))-\Lin(\X^\uind{j};\Phi(s))\RR)^2\\
        &\leq \frac{2}{\tau}\sum_{s=0}^{\tau-1}\er(\Phi(s))+\frac{8T}{m}\LL(\LL(\Lambda_T^2\sigma_2+\gamma_T\sigma_1\RR)(\rho_w^2+\rho_u^2)+L_T\rho_c\sqrt{\rho_w^2+\rho_u^2}\RR)^2,
    \end{align*}
    where the first line above follows from the inequality $(x+y)^2\leq 2x^2+2y^2$, and the second line is a direct consequence of the projection step and the linearization error in Prop. \ref{prop:lin-ntk}. Using an identical argument, we also have
    \begin{align*}
        \er(\Phi)\leq 2\er^\mathsf{Lin}(\Phi)+\frac{8T}{m}\LL(\LL(\Lambda_T^2\sigma_2+\gamma_T\sigma_1\RR)(\rho_w^2+\rho_u^2)+L_T\rho_c\sqrt{\rho_w^2+\rho_u^2}\RR)^2.
    \end{align*}
    From Jensen's inequality, {since $\Phi\mapsto\er^\mathsf{Lin}(\Phi)$ is convex as it takes the form of a quadratic (loss) function of a linear function (predictor $F_t^\Sf{Lin})$ of $\Phi$}, we have $\er^\mathsf{Lin}\LL(\frac{1}{\tau}\sum_{s<\tau}\Phi(s)\RR)\leq \frac{1}{\tau}\sum_{s<\tau}\er^\mathsf{Lin}(\Phi(s)).$ From these three results, we obtain
    \begin{align}
        \nonumber \er\Big(\frac{1}{\tau}\sum_{s<\tau}\Phi(s)\Big)&\leq 2\er^\mathsf{Lin}\Big(\frac{1}{\tau}\sum_{s<\tau}\Phi(s)\Big)+ \frac{8T}{m}\LL(\LL(\Lambda_T^2\sigma_2+\gamma_T\sigma_1\RR)(\rho_w^2+\rho_u^2)+L_T\rho_c\sqrt{\rho_w^2+\rho_u^2}\RR)^2\\
        \nonumber &\leq \frac{2}{\tau}\sum_{s<\tau}\er^\mathsf{Lin}\LL(\Phi(s)\RR)+ \frac{8T}{m}\LL(\LL(\Lambda_T^2\sigma_2+\gamma_T\sigma_1\RR)(\rho_w^2+\rho_u^2)+L_T\rho_c\sqrt{\rho_w^2+\rho_u^2}\RR)^2\\
        &\leq \frac{4}{\tau}\sum_{s<\tau}\er\LL(\Phi(s)\RR)+ \frac{24T}{m}\LL(\LL(\Lambda_T^2\sigma_2+\gamma_T\sigma_1\RR)(\rho_w^2+\rho_u^2)+L_T\rho_c\sqrt{\rho_w^2+\rho_u^2}\RR)^2.
        \label{eqn:linearization-argument}
    \end{align}
    Finally, recall that the sequence $\{\Phi(s):s=0,1,\ldots\}$ is obtained by max-norm-projected gradient descent, and substituting \eqref{eqn:proj-gd-error} into the last inequality concludes the proof.
\end{proof}

\begin{corollary}[Convergence of projected stochastic gradient descent]
    In the same setting as Theorem \ref{thm:proj-gd}, max-norm-projected stochastic gradient descent with projection radii $\bm{\rho}\succeq\bm{\nb}$ and step-size $\eta = \frac{1}{T\sqrt{\tau}}$ after $\tau>1$ iterations yields
    \begin{multline*}
        \bE\Big[\er\Big(\frac{1}{\tau}\sum_{s<\tau}\Phi(s)\Big)\Big|\Phi(0)\Big]\leq \frac{2T}{\sqrt{\tau}}\LL(\|\bm{\nb}\|_2^2+\Big(\bd(\bm{\rho},\bm{\nb})\big(1+\frac{\rho_c}{\sqrt{m}}\big)L_T + \sigma_0\Big)^2\RR)\\+\frac{4T\bd(\rho,\bm{\nb})}{\sqrt{m}}\mathsf{err}+\frac{24T}{m}\LL(\LL(\Lambda_t^2\sigma_2+\gamma_t\sigma_1\RR)(\rho_w^2+\rho_u^2)+L_T\rho_c\sqrt{\rho_w^2+\rho_u^2}\RR)^2,
    \end{multline*}
    with probability at least $1-\delta$ over the random initialization, where $\mathsf{err}$ is defined in \eqref{eqn:err}.
    \label{cor:sgd-avg-iterate-1}
\end{corollary}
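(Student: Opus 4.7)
The plan is to replicate the Lyapunov-drift argument used in the proof of Theorem \ref{thm:proj-gd} and Corollary \ref{cor:avg-iterate-1}, but take conditional expectation with respect to the SGD sampling noise at each step. I would condition throughout on the high-probability event over the random initialization $\Phi(0)$ on which Propositions \ref{prop:lin-ntrf}--\ref{thm:approx} give the approximation/linearization bounds simultaneously for all $(j,t)\in\{1,\ldots,n\}\times\{1,\ldots,T\}$; this event has probability at least $1-\delta$ and is measurable with respect to $\varsigma(\Phi(0))$, so it is independent of the sampling sequence $(I_s)_{s\in\bN}$.

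Let $\bar{\Phi}$ be the transportation-induced point defined in \eqref{eqn:projection}, so $\bar{\Phi}\in\Omega_{\bm{\nb}}\subseteq\Omega_{\bm{\rho}}$. Use the Lyapunov function $\cL(\Phi)=\|\Phi-\bar{\Phi}\|_2^2$ and let $\cF_s=\varsigma(\Phi(0),I_0,\ldots,I_{s-1})$. By non-expansivity of projection onto the convex set $\Omega_{\bm{\rho}}$ and Pythagoras,
\begin{equation*}
\cL(\Phi(s+1))\leq \cL(\Phi(s))+2\eta\,\bm{G}_\Phi^\top(\Phi(s),s)(\bar{\Phi}-\Phi(s))+\eta^2\|\bm{G}_\Phi(\Phi(s),s)\|_2^2.
\end{equation*}
Taking $\bE[\cdot\mid\cF_s]$ and using the unbiasedness $\bE[\bm{G}_\Phi(\Phi(s),s)\mid\cF_s]=\del_\Phi\er(\Phi(s))$ (which follows because $I_s\sim\mathsf{Unif}(\{1,\ldots,n\})$ is independent of $\cF_s$), the first-order term becomes exactly the quantity handled in \eqref{eqn:negative-drift}. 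I would then reuse the change-of-feature and linearization error bounds \eqref{eqn:err1}--\eqref{eqn:err2} verbatim, since $\Phi(s)\in\Omega_{\bm{\rho}}$ surely by the projection, to obtain
\begin{equation*}
\bE\!\left[\bm{G}_\Phi^\top(\Phi(s),s)(\bar{\Phi}-\Phi(s))\,\middle|\,\cF_s\right]\leq -2\er(\Phi(s))+\frac{2T\bd(\bm{\rho},\bm{\nb})}{\sqrt{m}}\bigl(\mathsf{err}_1+\mathsf{err}_2\bigr).
\end{equation*}

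For the second-moment term, observe that for any realization of $I_s$ the same argument as in \eqref{eqn:norm-bound-F}--\eqref{eqn:variability} gives the surely valid bound $\|\bm{G}_\Phi(\Phi(s),s)\|_2\leq 2T\bd(\bm{\rho},\bm{\nb})\bigl((1+\rho_c/\sqrt{m})L_T+\sigma_0\bigr)$, because each per-sample gradient has the same structure as the full empirical gradient with $n=1$; hence $\bE[\|\bm{G}_\Phi(\Phi(s),s)\|_2^2\mid\cF_s]\leq \mathsf{err}_3$ with the same constant $\mathsf{err}_3$ as in the deterministic proof. Combining the two bounds, taking total expectation, summing over $s=0,1,\ldots,\tau-1$, telescoping $\sum_s \bE[\cL(\Phi(s+1))-\cL(\Phi(s))]$, dividing by $4\eta\tau$, and substituting $\cL(\Phi(0))\leq\|\bm{\nb}\|_2^2$ together with the step-size $\eta=1/(T\sqrt{\tau})$ yields the SGD analogue of \eqref{eqn:proj-gd-error}:
\begin{equation*}
\frac{1}{\tau}\sum_{s<\tau}\bE\!\left[\er(\Phi(s))\,\middle|\,\Phi(0)\right]\leq \frac{T}{4\sqrt{\tau}}\|\bm{\nb}\|_2^2+\frac{T\bd(\bm{\rho},\bm{\nb})}{\sqrt{m}}(\mathsf{err}_1+\mathsf{err}_2)+\frac{\mathsf{err}_3}{4T\sqrt{\tau}}.
\end{equation*}

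To pass from this average-of-iterates risk bound to the risk of the averaged iterate $\bar\Phi_\tau:=\tau^{-1}\sum_{s<\tau}\Phi(s)$, I would import the linearization-plus-Jensen argument of Corollary \ref{cor:avg-iterate-1} (see \eqref{eqn:linearization-argument}) without change: convexity of the linearized empirical risk $\er^\mathsf{Lin}$, combined with the Prop. \ref{prop:lin-ntk} bound $|F_t-F_t^\mathsf{Lin}|=\cO(1/\sqrt{m})$ valid on $\Omega_{\bm{\rho}}$ and hence applicable surely along the SGD trajectory, gives $\er(\bar\Phi_\tau)\leq \tfrac{4}{\tau}\sum_{s<\tau}\er(\Phi(s))+\tfrac{24T}{m}((\Lambda_T^2\sigma_2+\gamma_T\sigma_1)(\rho_w^2+\rho_u^2)+L_T\rho_c\sqrt{\rho_w^2+\rho_u^2})^2$ surely. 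Taking $\bE[\cdot\mid\Phi(0)]$ of both sides and using the previous display delivers the claimed bound. The main obstacle I expect is purely bookkeeping: carefully checking that the a.s.\ upper bound on $\|\bm{G}_\Phi(\Phi(s),s)\|_2$ and the linearization error bounds (which are proved uniformly on $\Omega_{\bm{\rho}}$ and hence apply pathwise) match the deterministic constants so that the SGD bound differs from the GD bound only via the unbiasedness step; there is no genuine variance blow-up because the projection makes $\|\bm{G}_\Phi\|_2^2$ bounded surely, not merely in expectation.
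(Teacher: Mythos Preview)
Your proposal is correct and follows essentially the same approach as the paper's proof: condition on the high-probability initialization event, use unbiasedness $\bE[\bm{G}_\Phi(\Phi(s),s)\mid\cF_s]=\del_\Phi\er(\Phi(s))$ together with the sure bound $\|\bm{G}_\Phi(\Phi(s),s)\|_2^2\leq\mathsf{err}_3$ to reproduce the drift inequality \eqref{eqn:ly-drift} in conditional expectation, telescope, and then invoke the linearization-plus-Jensen argument \eqref{eqn:linearization-argument} (which holds surely on $\Omega_{\bm{\rho}}$) before taking $\bE[\cdot\mid\Phi(0)]$. The only cosmetic difference is that the paper explicitly invokes the tower property to pass from $\cF_s$ to $\cF_0$, whereas you fold this into ``taking total expectation''; the content is identical.
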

\begin{proof}
    Let $\calF_s:=\varsigma(\Phi(0),J_0,\ldots,J_{s-1})$ for $s\in\bZ_+$ with $\calF_0:=\varsigma(\Phi(0))$. First, note that $\max_{i=1,\ldots,m}\|G_{\Phi_i}(\Phi(s),s)\|_2\leq \frac{\bd(\rho,\nb)}{\sqrt{m}}\Big((1+\rho_c/\sqrt{m})L_T+\sigma_0\Big)$ surely, which implies that $$\|\bm{G}_\Phi(\Phi(s),s)\|_2^2\leq 4T^2\LL[\LL(1+\frac{\rho_c}{\sqrt{m}}\RR)L_T+\sigma_0\RR]^2=\mathsf{err}_3,~\mbox{a.s.}$$ Also, $\bE[\bm{G}_\Phi(\Phi(s),s)|\calF_s]=\del_\Phi\er(\Phi(s))$ as $\Phi(s)$ is $\calF_s$-measurable $J_s\sim\mathsf{Unif}(\{1,\ldots,n\})$. Thus,
    \begin{align*}
        \bE[\Delta(\Phi(s))|\calF_s] &\leq \del_\Phi^\top\er(\Phi(s))\LL(\bar{\Phi}-\Phi(s)\RR)+\eta^2\mathsf{err}_3\\
        &\leq -{4}\eta\er(\Phi(s))+4\eta T\bd(\rho,\bm{\nb})\frac{\mathsf{err}_1+\mathsf{err}_2}{\sqrt{m}}+\eta^2\mathsf{err}_3,
    \end{align*} 
    on a set $A_0\in\mathcal{F}_0$ such that $\bP(A_0)\geq 1-\delta$, following identical steps as in \eqref{eqn:ly-drift} with identical $\mathsf{err}_{i},~i=1,2,3$. Using the tower property, for all $s\in\bN$ simultaneously, in $A_0\in\mathcal{F}_0$, we have
    $$
    \bE[\Delta(\Phi(s))|\calF_0] \leq -{4}\eta\bE[\er(\Phi(s))|\calF_0]+4\eta T\bd(\bm{\rho},\bm{\nb})\frac{\mathsf{err}_1+\mathsf{err}_2}{\sqrt{m}}+\eta^2\mathsf{err}_3.
    $$ Following similar steps after \eqref{eqn:ly-drift} in Theorem \ref{thm:proj-gd}, we obtain the same upper bound as Theorem \ref{thm:proj-gd} for $\frac{1}{\tau}\sum_{s<\tau}\bE[\er(\Phi(s))|\calF_0]$. Note that \eqref{eqn:linearization-argument} holds surely, thus taking $\bE[\cdot|\calF_0]$ on both sides and using the upper bound for $\frac{1}{\tau}\sum_{s<\tau}\bE[\er(\Phi(s))|\calF_0]$, we conclude the proof.
\end{proof}
\subsection{Convergence of Gradient Descent for RNNs (Projection-Free)}
In the following, we prove finite-time and finite-width convergence bounds for projection-free gradient descent. 
\begin{theorem}
    For any given target error $\epsilon > 0$, let $\Lambda_T,\gamma_T,L_T,\beta_T$ as in Lemmas \ref{lemma:lipschitz}-\ref{lemma:smoothness} with $\alpha_m = \alpha + \sqrt{2\epsilon}$. Also, let $\bm{\lambda} = (\lambda_w,\lambda_u,\lambda_c)^\top$ with $\lambda_w=\lambda_u=\frac{16\|\bm{\nb}\|_2^2\sqrt{T}L_T}{\sqrt{\epsilon}},$ $\lambda_c=\frac{2\sigma_0\sqrt{T}}{\sqrt{\epsilon}}$. For any $\delta \in (0,1)$, if $$m\geq \frac{T}{\epsilon}\LL(\mathsf{err}_1\big|_{\scriptsize \hskip -4.5pt \begin{array}{l}\bm{\rho} = \bm{\lambda}\\\mu_0=\mu_{0,T}(\alpha_\epsilon\sigma_1)\end{array}}+\mathsf{err}_2\big|_{\bm{\rho}=\bm{\lambda}}\RR) + \frac{\lambda_w^2}{2\epsilon} + \lambda_c^2,$$ where $\Sf{err}_1,\Sf{err}_2$ are defined in \eqref{eqn:err1} and \eqref{eqn:err2}, respectively, then gradient descent with step-size $
        \eta \leq \frac{\epsilon}{2T^2\LL(2\varphi_T^\mathsf{max}(\bm{\lambda},\bm{\nb})L_T + \sigma_0\RR)^2}$ after $\tau \geq \tau_0(\epsilon)=\frac{\|\bm{\nb}\|_2^2}{(\sqrt{2}+1)^2\eta\epsilon}$ iterations yields
    \begin{equation*}
        \min_{0\leq s<\tau}\er(\Phi(s))\leq 2\epsilon,
    \end{equation*}
    with probability at least $1-\delta$ over the random initialization.
    \label{thm:gd}
\end{theorem}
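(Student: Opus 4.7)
The plan is to mirror the Lyapunov drift argument used for projected gradient descent in Theorem \ref{thm:proj-gd}, substituting the role of max-norm projection with a bootstrap argument that keeps the iterate $\Phi(s)$ inside the compact set $\Omega_{\bm{\lambda}}$ throughout the trajectory. Whenever $\Phi(s)\in\Omega_{\bm{\lambda}}$, the local Lipschitz/smoothness results of Lemmas \ref{lemma:lipschitz}--\ref{lemma:smoothness} and the approximation/linearization bounds of Propositions \ref{prop:lin-ntrf}--\ref{prop:lin-ntk} apply with $\alpha_m=\alpha+\lambda_w/\sqrt{m}$. The width condition $m\geq\lambda_w^2/(2\epsilon)$ in the theorem is exactly what forces $\alpha_m\leq\alpha_\epsilon:=\alpha+\sqrt{2\epsilon}$, matching the instantiation of $\Lambda_T,\gamma_T,L_T,\beta_T$ in the statement.

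First I would fix the transportation reference $\bar{\Phi}$ from \eqref{eqn:projection} (so that $\bar{\Phi}\in\Omega_{\bm{\nb}}\subseteq\Omega_{\bm{\lambda}}$ and $\mathcal{L}(\Phi(0))\leq\|\bm{\nb}\|_2^2$ for $\mathcal{L}(\Phi):=\|\Phi-\bar{\Phi}\|_2^2$) and define the exit time $\tau_{\exit}:=\inf\{s\geq 0:\Phi(s)\notin\Omega_{\bm{\lambda}}\}$. For $s<\min(\tau,\tau_{\exit})$ the drift identity
\[
\mathcal{L}(\Phi(s+1))=\mathcal{L}(\Phi(s))+2\eta\,\del_\Phi^\top\er(\Phi(s))(\bar{\Phi}-\Phi(s))+\eta^2\|\del_\Phi\er(\Phi(s))\|_2^2
\]
holds with equality (there is no projection), and the inner-product bound \eqref{eqn:neg-drift} and the gradient-norm bound $\mathsf{err}_3$ from the proof of Theorem \ref{thm:proj-gd} both carry over verbatim with $\bm{\rho}$ replaced by $\bm{\lambda}$. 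Telescoping and writing $\tau_\star:=\min(\tau,\tau_{\exit})$ produces
\[
\frac{1}{\tau_\star}\sum_{s<\tau_\star}\er(\Phi(s))\leq\frac{\|\bm{\nb}\|_2^2}{4\eta\tau_\star}+\frac{T\,\bd(\bm{\lambda},\bm{\nb})(\mathsf{err}_1+\mathsf{err}_2)}{\sqrt{m}}+\frac{\eta\,\mathsf{err}_3}{4}.
\]

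The main obstacle is the bootstrap $\tau_{\exit}>\tau$. For this I would control per-coordinate excursions via $\|\Phi_i(s)-\Phi_i(0)\|_2\leq\eta\sum_{s'<s}\|\del_{\Phi_i}\er(\Phi(s'))\|_2$ by sharpening \eqref{eqn:norm-bound-F} through a data-side Cauchy--Schwarz to $\|\del_{\Phi_i}\er(\Phi(s'))\|_2\leq\tfrac{2\sqrt{T\,\er(\Phi(s'))}\,((1+\lambda_c/\sqrt{m})L_T+\sigma_0)}{\sqrt{m}}$, then applying a time-side Cauchy--Schwarz to turn $\sum_{s'<s}\sqrt{\er(\Phi(s'))}$ into $\sqrt{s\sum_{s'<s}\er(\Phi(s'))}$. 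Substituting the telescoped bound above and the iteration budget $\eta\tau\leq\|\bm{\nb}\|_2^2/((\sqrt{2}+1)^2\epsilon)$ implicit in $\tau_0(\epsilon)$, the excursion is at most $\mathcal{O}\!\left(\sqrt{T}L_T\|\bm{\nb}\|_2^2/\sqrt{m\epsilon}\right)$, which is strictly below $\lambda_w/\sqrt{m}=\lambda_u/\sqrt{m}$ (and analogously below $\lambda_c/\sqrt{m}$ for the read-out coordinates) by the prescribed values of $\bm{\lambda}$. Were $\tau_{\exit}\leq\tau$ to hold, the continuity of $s\mapsto\Phi(s)$ would force some coordinate to saturate its bound at $s=\tau_{\exit}$, contradicting the strict inequality just derived; hence $\tau_\star=\tau$.

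To conclude, I substitute into the telescoped bound the width lower bound (making the second term $\leq\epsilon/2$ via the choice of $m$), the step-size condition $\eta\leq\epsilon/(2T^2(2\bd(\bm{\lambda},\bm{\nb})L_T+\sigma_0)^2)$ (making the third term $\eta\,\mathsf{err}_3/4\leq\epsilon/2$), and $\tau\geq\tau_0(\epsilon)$ (making the first term $\leq\epsilon$); this yields $\frac{1}{\tau}\sum_{s<\tau}\er(\Phi(s))\leq 2\epsilon$, so a fortiori $\min_{s<\tau}\er(\Phi(s))\leq 2\epsilon$. The $1-\delta$ event enters only through the concentration step used inside $\mathsf{err}_1$ via \eqref{eqn:err1}, giving the claimed probability guarantee.
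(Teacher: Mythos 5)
Your overall architecture is the paper's: fix the transported reference $\bar{\Phi}$, define the exit time from $\Omega_{\bm{\lambda}}$, run the Lyapunov drift while the iterate is inside, and close a bootstrap showing the iterate cannot leave before the iteration budget is spent. However, there are two genuine gaps. The first is a direction-of-inequality error in the last step: you invoke ``the iteration budget $\eta\tau\leq\|\bm{\nb}\|_2^2/((\sqrt{2}+1)^2\epsilon)$ implicit in $\tau_0(\epsilon)$,'' but $\tau_0$ is a \emph{lower} bound on the number of iterations, so for $\tau\geq\tau_0$ you have $\eta\tau\geq\|\bm{\nb}\|_2^2/((\sqrt{2}+1)^2\epsilon)$ and $\eta\tau$ is unbounded. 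Since your excursion bound grows linearly in $\eta\tau$ (through $\sum_{s<\tau}\sqrt{\er(\Phi(s))}$), the bootstrap $\tau_{\exit}>\tau$ cannot be established for arbitrary $\tau\geq\tau_0$, and neither can the average-iterate claim $\frac{1}{\tau}\sum_{s<\tau}\er(\Phi(s))\leq 2\epsilon$. The fix is the paper's early-stopping argument: pin down $\eta\tau_0$ exactly via \eqref{eqn:tau-choice}, prove $\tau_{\exit}>\tau_0$ and $\frac{1}{\tau_0}\sum_{s<\tau_0}\er(\Phi(s))\leq 2\epsilon$, and then conclude $\min_{s<\tau}\er(\Phi(s))\leq\min_{s<\tau_0}\er(\Phi(s))\leq 2\epsilon$ for every $\tau\geq\tau_0$. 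This is an early-stopped minimum, not an average over all $\tau$ iterates.

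The second gap is in the treatment of the cross term and the resulting constants. You import the projected-case bound $\frac{2}{n}\sum_{j,t}|F_t-F_t^\star|\leq T\bd(\bm{\lambda},\bm{\nb})$ verbatim, whereas the paper bounds it by $2\sqrt{T\er(\Phi(s))}$ via Cauchy--Schwarz over the data and time indices. This is not cosmetic: the stated width condition $m\geq\frac{T}{\epsilon}(\mathsf{err}_1+\mathsf{err}_2)+\cdots$ is calibrated so that $\frac{\mathsf{err}_1+\mathsf{err}_2}{\sqrt{m}}\leq\sqrt{\epsilon/T}$, which combined with the $\sqrt{T\er(\Phi(s))}$ factor lets one complete the square, $\Delta(\Phi(s))\leq -4\eta\LL(\sqrt{\er(\Phi(s))}-\tfrac{\sqrt{\epsilon}}{2}\RR)^2+\tfrac{3\eta\epsilon}{2}$, and this completed square is what both (i) delivers the bound on $\sum_{s<\tau_{\exit}}\sqrt{\er(\Phi(s))}$ needed for the excursion estimates without your extra time-side Cauchy--Schwarz, and (ii) recovers the $\epsilon/4$ credit that makes the final constant close at $2\epsilon$. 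With your worst-case $\varphi_T^{\mathsf{max}}$ cross term, making the middle term $\leq\epsilon/2$ requires a different (larger) $m$ than the theorem states, and your three-term accounting does not reach $2\epsilon$ anyway: at $\tau=\tau_0$ the first term is $\frac{\|\bm{\nb}\|_2^2}{4\eta\tau_0}=\frac{(\sqrt{2}+1)^2}{4}\epsilon\approx 1.46\epsilon$, not $\leq\epsilon$. Finally, the appeal to ``continuity of $s\mapsto\Phi(s)$'' is out of place for a discrete iteration; the correct contradiction is that the excursion bound holds at $s=\tau_{\exit}$ itself (the step into $\tau_{\exit}$ uses only gradients evaluated at times $s<\tau_{\exit}$), which directly contradicts the definition of $\tau_{\exit}$.
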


\begin{proof}
    Given any $\epsilon > 0$, we define the exit time under gradient descent as 
    \begin{align*}
        \tau_{\exit}=\inf\Bigg\{s\geq 0:\exists i\in[m]~\mbox{s.t.}~|\Wii(s)-\Wii(0)|_2 > \frac{\lambda_w}{\sqrt{m}}~&\mbox{or~}\|U_i(s)-U_i(0)\|_2 > \frac{\lambda_u}{\sqrt{m}}\\&\mbox{or}~|c_i(s)-c_i(0)|\geq \frac{\lambda_c}{\sqrt{m}}\Bigg\}.
    \end{align*}
    The intuition behind the proof is that (i) the drift bound \eqref{eqn:drift}, and (ii) the local Lipschitz continuity and smoothness results in Lemmas \ref{lemma:lipschitz}-\ref{lemma:smoothness} simultaneously hold for $s<\tau_\exit$, and with a particular choice of $\tau_0$, one can show that $\tau_0<\tau_\exit$ and also $\min_{s<\tau_0}\er(\Theta(s))\leq 2\epsilon$. 
    % Below, we prove this.
    
    For $s<\tau_\exit$, the linearization results as well as norm bounds hold. {Thus, from \eqref{eqn:negative-drift},}
    \begin{equation*}
        \Delta(\Phi(s)) \leq -4\eta\er(\Phi(s))+4\eta{\sqrt{T}}\sqrt{\er(\Phi(s))}\frac{\mathsf{err}_1+\mathsf{err}_2}{\sqrt{m}}+\eta^2\mathsf{err}_3,
    \end{equation*}
    where $\Delta(\Phi(s))=\cL(\Phi(s+1))-\cL(\Phi(s))$, and $\mathsf{err}_i,i=1,2,3$ are defined in the proof of Theorem \ref{thm:proj-gd}. The choices of $\eta$ and $m$ in Theorem \ref{thm:gd} ensure that
    \begin{align*}
        \eta \cdot \mathsf{err}_3\big|_{\bm{\rho}=\bm{\lambda}}&\leq {\frac{\epsilon}{2}},\\
        \frac{1}{\sqrt{m}}\Big(\mathsf{err}_1\big|_{\scriptsize \hskip -4.5pt \begin{array}{l}
        \bm{\rho}=\bm{\lambda}\\\mu_0=\mu_{0,T}(\alpha_\epsilon\sigma_1)\end{array}}+\mathsf{err}_2\big|_{\bm{\rho}=\bm{\lambda}}\Big)&\leq \sqrt{\frac{\epsilon}{T}},\\
        \frac{\lambda_w}{\sqrt{m}}\leq \sqrt{2\epsilon}&~\mbox{and}~\frac{\lambda_c}{\sqrt{m}}\leq 1.
    \end{align*}
    Then, we have
    \begin{align*}
        \Delta(\Phi(s))&\leq -4\eta\er(\Phi(s)) + 4\eta\sqrt{\epsilon\cdot\er(\Phi(s))} + \frac{\eta\epsilon}{2} \\&= -4\eta\LL(\sqrt{\er(\Phi(s))}-\frac{\sqrt{\epsilon}}{2}\RR)^2+\frac{3\eta\epsilon}{2},
    \end{align*}
    which implies that $$\cL\LL(\Phi(\tau_\exit)\RR)-\cL(\Phi(0))\leq -4\eta\sum_{s<\tau_\exit}\LL(\sqrt{\er(\Phi(s))}-\frac{\sqrt{\epsilon}}{2}\RR)^2+\frac{3\eta\epsilon\tau_\exit}{2}.$$ Then, for any $\tau \leq \tau_\exit$, we have
    \begin{equation}
        \sum_{s<\tau}\LL(\sqrt{\er(\Phi(s))}-\frac{\sqrt{\epsilon}}{2}\RR)^2\leq\frac{\cL(\Phi(0))}{4\eta}+\frac{\epsilon\tau}{2}\leq \frac{\|\bm{\nb}\|_2^2}{2\eta}+\frac{\epsilon\tau}{2}.
        \label{eqn:exit-drift}
    \end{equation} {As $\Big(\frac{1}{\tau}\sum_{s<\tau}\sqrt{\er(\Phi(s))}-\frac{\sqrt{\epsilon}}{2}\Big)^2\leq \frac{1}{\tau}\sum_{s<\tau}\Big(\sqrt{\er(\Phi(s))}-\frac{\sqrt{\epsilon}}{2}\Big)^2,~\tau\leq\tau_\exit$ by Jensen inequality,}
    $$\sum_{s<\tau_\exit}\sqrt{\er(\Phi(s))}-\frac{\tau_\exit\sqrt{\epsilon}}{2}\leq \frac{\|\bm{\nb\|}_2\sqrt{\tau_\exit}}{\sqrt{2\eta}}+\tau_\exit\sqrt{\frac{\epsilon}{2}}.$$ Let $\tau_0\in\bZ_+$ be such that
    \begin{equation}
    \frac{\|\bm{\nb}\|_2^2}{\eta}= \frac{\epsilon\tau_0(1+\sqrt{2})^2}{2},
    \label{eqn:tau-choice}
    \end{equation} 
    for $\eta$ defined above. In the following, we will prove that $\tau_0< \tau_\exit$ and $\min_{s<\tau_0}\er(\Phi(s))\leq 2\epsilon$. Suppose to the contrary that $\tau_0>\tau_\exit$. Then, from the above inequality,
\begin{equation}\sum_{s<\tau_\exit}\sqrt{\er(\Phi(s))}\leq \sqrt{\epsilon}\tau_0\LL(\frac{1}{2}+\frac{1}{\sqrt{2}}\RR)+\frac{\|\bm{\nb}\|_2\sqrt{\tau_0}}{\sqrt{2\eta}} = (\sqrt{2}+1)\sqrt{\epsilon}\tau_0.
\label{eqn:sqrt-risk}
\end{equation}
    % Then, we have
    % $$\sum_{s<\tau_\exit}\sqrt{\er(\Theta(s))}\leq 2\sqrt{2\epsilon}\tau_0.$$ 
    Now, note that, for any $i\in\{1,\ldots,m\}$ and $k \leq \tau_\exit$,
    \begin{align*}
        |c_i(k)-c_i(0)| &\leq \sum_{s<\tau_\exit}|c_i(s+1)-c_i(s)|=\eta\sum_{s<\tau_\exit}|\del_{c_i}\er(\Phi(s))|\\
        &=\sum_{s<\tau_\exit}\eta\LL|\frac{2}{n}\sum_{j=1}^n\sum_{t=1}^T\frac{1}{\sqrt{m}}H_t^\uind{i}(\X^\uind{j};\Phi(s))\LL(F_t(\X^\uind{j};\Phi(s))-F_t^\star(\X^\uind{j})\RR)\RR|\\
        &\leq \frac{2\eta}{n\sqrt{m}}\sum_{j=1}^n\sum_{t=1}^T|H_t^\uind{i}(\X^\uind{j};\Phi(s))|\cdot|F_t(\X^\uind{j};\Phi(s))-F_t^\star(\X^\uind{j})|\\
        &\leq \frac{2\eta\sqrt{T}\sigma_0}{\sqrt{m}}\sum_{s<\tau_\exit}\sqrt{\er(\Phi(s))} \leq \frac{2\eta\sqrt{T}\sigma_0}{\sqrt{m}}(\sqrt{2}+1)\sqrt{\epsilon}\tau_0,
    \end{align*}
    where the fourth inequality follows from Cauchy-Schwarz inequality, and the last inequality is due to \eqref{eqn:sqrt-risk}. From \eqref{eqn:tau-choice}, $|c_i(k)-c_i(0)| \leq \frac{2\sigma_0\sqrt{T}}{\sqrt{\epsilon}\cdot\sqrt{m}} = \frac{\lambda_c}{\sqrt{m}}$, which implies $|c_i(k)| \leq 1 + \frac{\lambda_c}{\sqrt{m}}\leq 2$ for $k=0,1,\ldots,\tau_\exit$ since $m \geq \lambda_c^2$. For any $i=1,\ldots,m$,    
    \begin{align*}
        |\Wii(\tau_\exit)-\Wii(0)|&\leq \sum_{s<\tau_\exit}|\Wii(s+1)-\Wii(s)|\\&\leq \frac{2\eta}{n}\sum_{s<\tau_\exit}\frac{|c_i(s)| L_T}{\sqrt{m}}\sum_{j=1}^n\sum_{t=1}^T|F_t(\X^\uind{j};\Phi(s))-F_t^\star(\X^\uind{j})|\\
        &\leq \frac{2\eta L_T{\sqrt{T}}}{\sqrt{m}}\sum_{s<\tau_\exit}\LL(1+\frac{\lambda_c}{\sqrt{m}}\RR)\sqrt{\er(\Phi(s))}\\
        &\leq \LL(1+\frac{\lambda_c}{\sqrt{m}}\RR)\frac{4\sqrt{2}{\sqrt{T}}\eta L_T\sqrt{\epsilon}\tau_0}{\sqrt{m}}\\&\leq\LL(1+\frac{\lambda_c}{\sqrt{m}}\RR)\frac{8\|\bm{\nb}\|_2^2\sqrt{T} L_T}{\sqrt{m}\sqrt{\epsilon}}
        =\frac{\lambda_w}{\sqrt{m}},
    \end{align*}
    where the second line holds since $|\del_{\Wii} F_t|\leq \|\del_{\Theta_i} F_t\|_2$ and from the bound in Prop. \ref{lemma:lipschitz}, and the last inequality holds since $\lambda_c/\sqrt{m}\leq 1$. The same argument is followed to establish that $$\|U_i(\tau_\exit)-U_i(0)\|_2\leq \frac{16\|\bm{\nb}\|_2^2L_T\sqrt{T}}{\sqrt{\epsilon m}}= \frac{\lambda_u}{\sqrt{m}}.$$ This yields a contradiction with the definition of $\tau_\exit$, thus we prove that $\tau_\exit > \tau_0$. Using this result in \eqref{eqn:exit-drift}, since $z\mapsto(\sqrt{z}-\frac{\sqrt{\epsilon}}{2})^2$ is convex, we have
    \begin{equation*}
        \frac{1}{\tau_0}\sum_{s<\tau_0}\er(\Phi(s))-\frac{\epsilon}{4}\leq \frac{\|\bm{\nb}\|_2^2}{2\eta\tau_0}+\frac{\epsilon}{2}\leq \LL(\frac{(\sqrt{2}+1)^2}{4}+\frac{1}{2}\RR)\epsilon,
    \end{equation*} from the choice of $\tau_0$ in \eqref{eqn:tau-choice}. Thus, we conclude that 
    \begin{equation}
        \min_{0\leq s < \tau_0}\er(\Phi(s))\leq \frac{1}{\tau_0}\sum_{s<\tau_0}\er(\Phi(s))\leq 2\epsilon.
        \label{eqn:early-stopping}
    \end{equation}
    Hence, for any $\tau \geq \tau_0$, we obtain $\min_{0\leq s < \tau}\er(\Phi(s))\leq \min_{0\leq s < \tau_0}\er(\Phi(s)\leq 2\epsilon.$
\end{proof}

\begin{remark}[Average-iterate convergence under early stopping]
    It can be seen from \eqref{eqn:early-stopping} that an average-iterate convergence bound for gradient descent can be established by following identical steps as in Corollary \ref{cor:avg-iterate-1} for projected gradient descent. However, the number of iterations should be $\tau_0$ for the average-iterate bound in this case unlike Corollary \ref{cor:avg-iterate-1}, which stems from the early-stopping analysis that we establish here, inspired by \cite{cayci2023sample, ji2019polylogarithmic}.
\end{remark}

\begin{remark}[Representational assumption.]
    In our results, we consider the class of dynamical systems that can be represented by an infinitely-large RNN, characterized as $\cF_{\bm{\nb}}$, akin to \cite{ji2019polylogarithmic, cayci2023sample}. An alternative representational assumption, based on the strict positive-definiteness of the NTK matrices (which is effectively an assumption on the data points $\X^\uind{j}$) is also widely used in the NTK literature \cite{du2019gradient, chizat2019lazy}. Unlike these works, we do not have any assumptions on the data points, or the NTK matrix. Instead, we assume realizability in the class of infinite-width RNNs for simplicity. The elimination of this realizability assumption would lead to an approximation error, which was characterized for the special case of feedforward neural networks by the universality results in \cite{ji2019neural}.
\end{remark}

\section{Conclusion and Future Work}
In this work, we present a finite-time convergence analysis of RNNs trained with gradient descent, and proved improved bounds on the network size in terms of $n,T$ and $\delta$. Our analysis demonstrated the significant impact of the long/short-term dependencies on the required network size and iteration complexity.

Our analysis heavily exploited the smoothness of the nonlinear activation function $\sigma$. The extension of our work to the case of nondifferentiable $\sigma$, such as ReLU networks, necessitates new proof techniques, and is an interesting future direction.

In practical applications, long-short-term memory (LSTM) \cite{hochreiter1997long} and gated recurring unit (GRU) \cite{cho2014learning} mechanisms are incorporated into RNNs to learn long-term dependencies without suffering from the exploding gradient problem, which appeared as $\exp(\Omega(T))$ dependencies for $m$ and $\tau$ in our work (see Remark \ref{rem:lstm}). The convergence analysis of RNNs that employ these mechanisms is an interesting future direction as well.

\appendix
\section*{Acknowledgement}
Atilla Eryilmaz was supported by NSF grants:  NSF AI Institute (AI-EDGE) 2112471, CNS-NeTS-2106679,  CNS-NeTS-2007231; and the ONR Grant N00014-19-1-2621.
\section{Proofs for Lipschitz Continuity and Smoothness Results}
% For any $t\in\bN, z\in\bR_\gz$, let
% \begin{align}
% $    \mu_{0,t}(z) := \sum_{k=0}^{t-1}z^k,\mbox{ and }\mu_{1,t}(z) := \sum_{k=0}^{t-1}(k+1)z^k.$
% \end{align}
\subsection{Proof of Lemma \ref{lemma:lipschitz}}\label{app:ht-lipschitz}
\begin{proof}[Proof of Lemma \ref{lemma:lipschitz}] Throughout the proof, we consider $i\in\{1,\ldots,m\},~t\in\{1,\ldots,T\}$, and denote $H_t^\uind{i}=H_t^\uind{i}(\X;\Theta)$, $\alpha_m\geq |\Wii|$, {$\mu_{0,t,m}:=\ms(\alpha_m\sigma_1)$} and {$\mu_{1,t,m}:=\mb(\alpha_m\sigma_1)$} for notational simplicity, where $\ms$ and $\mb$ are defined in \eqref{eqn:zeta}. Recall from Prop. \ref{prop:gradient} that 
    \begin{align*}
        |\del_{\Wii} H_t^{\uind{i}}|&=\Big|\sum_{k=0}^{t-1}W_{ii}^kH_{t-k-1}^{(i)}I_{t-1}^\uind{i}\ldots I_{t-k-1}^\uind{i}\Big|\leq \sum_{k<t}\alpha_m^k\sigma_0\sigma_1^{k+1}\leq \sigma_0\sigma_1{\mu_{0,t,m}},\\
        \|\del_{U_i}H_t^\uind{i}\|_2&\leq \sigma_1{\mu_{0,t,m}}.
    \end{align*}
    from triangle inequality since $|H_t^\uind{i}|\leq\|\sigma\|_\infty\leq \sigma_0$, $|I_t^\uind{i}|\leq\|\sigma'\|_\infty\leq \sigma_1$ and $|\Wii|\leq |\Wii(0)|+|\Wii-\Wii(0)|\leq \alpha_m$. Therefore, $\|\del_{\Theta_i}H_t^\uind{i}\|_2^2=|\del_{\Wii} H_t^{\uind{i}}|^2+\|\del_{U_i}H_t^\uind{i}\|_2^2\leq {\mu_{0,t,m}^2}\cdot(\sigma_0^2+1)\cdot\sigma_1^2$, which concludes the proof for the Lipschitz continuity of $\Theta_i\mapsto H_t^\uind{i}(\X;\Theta)$.

    In order to prove local $\beta_t$-smoothness of $\Theta_i\mapsto H_t^\uind{i}$, we will prove an upper bound $\beta_t<\infty$ on $\|\del_{\Theta_i}^2 H_t^\uind{i}\|_F$ where $\del_{\Theta_i}^2$ denotes the Hessian matrix. For $s\in\{1,\ldots,d\}$, we have \begin{align*}
        \del_{U_{is}}I_{i,k}&=\LL(\Wii\del_{U_{is}}H_k^\uind{i}+X_{k,s}\RR)\sigma''(\Wii H_{k}^\uind{i}+U_i^\top X_{k}),\\
        \del_{\Wii}I_{i,k}&=\LL(H_k^\uind{i}+\Wii\del_{\Wii}H_k^\uind{i}\RR)\sigma''(\Wii H_{k}^\uind{i}+U_i^\top X_{k}),
        \end{align*} 
    Then, $\del_{U_{ir},U_{is}}^2H_t^\uind{i}=\sum_{k<t}\Wii^kX_{t-k-1,r}\sum_{l=0}^k\del_{U_{is}}I_{i,t-l-1}\prod_{\substack{j=0\\j\neq l}}^kI_{i,j},$
    which further implies, \begin{align*}|\del_{U_{ir},U_{is}}^2H_t^\uind{i}|\leq \underbrace{\sigma_2\LL[{\alpha_m}\sigma_1{\mu_{0,t,m}}+1\RR]{\mu_{1,t,m}}}_{(\clubsuit)},~\mbox{for any}~r,s\in\{1,\ldots,d\},
    \end{align*}
    since ${\mu_{0,t,m}}$ is monotonically increasing in $t$ for any $z\in\bR_\gz$. Likewise, for any $r\in\{1,\ldots,d\}$,
    \begin{align*}
        |{\del^2_{\Wii,U_{ir}}}H_t^\uind{i}|&\leq \underbrace{\sigma_1^2{\mu_{0,t,m}^2}+\sigma_0\sigma_2{(1+\mu_{0,t,m}\sigma_1\alpha_m)\mu_{1,t,m}}}_{(\diamondsuit)},\\
        |{\del^2_{U_{ir},\Wii}}H_t^\uind{i}|&\leq\underbrace{ {\mu_{1,t,m}}\sigma_1^2+\sigma_0\sigma_2(1+{\alpha_m}\sigma_1{\mu_{0,t,m}}){\mu_{1,t,m}}}_{(\heartsuit)}.
    \end{align*}
    Finally, we have
    % \begin{align*}
        $|\del_{\Wii}^2H_t^\uind{i}| \leq \underbrace{\sigma_1^2{\mu_{1,t,m}}+\sigma_1^2\sigma_0{\mu_{0,t,m}^2}+\sigma_0\sigma_2[1+{\alpha_m}\sigma_1{\mu_{0,t,m}}]{\mu_{1,t,m}}}_{(\spadesuit)}.$
    % \end{align*}
    From these,
    \begin{align*}
        \|{\del^2_{\Theta_i}}H_t^\uind{i}\|_F \leq d\cdot(\clubsuit)+\sqrt{d}\cdot\big[(\diamondsuit)+(\heartsuit)\big]+(\spadesuit)=:\beta_t,
    \end{align*}
    by using the inequality $\sqrt{a+b}\leq \sqrt{a}+\sqrt{b}$ for any $a,b\in\bR_{\geq 0}$. Thus, $\|\del_{\Theta_i}^2H_t^\uind{i}\|_F\leq\beta_t$ where 
    \begin{align}
        \beta_t := \sigma_1^2\mu_{0,t,m}^2(\sqrt{d}+d+\sigma_0) + \sqrt{d}\sigma_1^2\mu_{1,t,m}+\sigma_2(1+\alpha_m\sigma_1\mu_{0,t,m})\mu_{1,t,m}\big(2\sigma_0\sqrt{d}+d+\sigma_0\big).
        \label{eqn:ht-smoothness}
\end{align}
{Since $\mu_{0,t}$ and $\mu_{1,t}$ are both non-decreasing in $t\in\bN$, $t\mapsto\beta_t$ is non-decreasing.}
%\begin{multline}\beta_t=\sqrt{d}\Big(\sigma_1^2{\mu_{0,t,m}^2}+\sigma_1^2\mu_{1,t,m}+2\sigma_0\sigma_2(1+\alpha_m\sigma_1\mu_{0,t,m})\mu_{1,t,m}\Big)\\+d\cdot\sigma_2({\alpha_m}\sigma_1{\mu_{0,t,m}}+1){\mu_{1,t,m}}+\sigma_1^2{\mu_{0,t,m}^2}+\sigma_0\sigma_2{\mu_{1,t,m}}\LL(1+{\alpha_m}\sigma_1{\mu_{0,t,m}}\RR)+\sigma_1^2\sigma_0\mu_{0,t,m}^2.
    % \label{eqn:ht-smoothness}
    % \end{multline}
    % for any $t \in \{1,2,\ldots,T\}$.
\end{proof}

\begin{proof}[Proof of Lemma \ref{lemma:smoothness}]
    First, note that
    % \begin{equation}
        $\|\del_{\Theta_i}\pact{t}(\Theta)\|_2^2\leq 2\LL(\sigma_0^2+1+\alpha_m^2\|\del_{\Theta_i}H_t^\uind{i}\|_2^2\RR),$
    % \end{equation}
    which would yield $\|\del_{\Theta_i}\pact{t}(\Theta)\|_2\leq  \Lambda_t= \sqrt{2}(\sigma_0+1+\alpha_m L_t)$ from the inequalities $(a+b)^2\leq 2(a^2+b^2)$ and $\sqrt{a+b}\leq \sqrt{a}+\sqrt{b}$ for $a,b\in\bR_{\geq 0}$, and Lemma \ref{lemma:lipschitz}. This would conclude the proof of Lipschitz continuity of $\pact{t}$. For the smoothness, using the same inequalities, we have
    \begin{equation*}
        \|\del{^2}_{\Theta_i}\pact{t}\|_F^2 \leq 2\|\del_{\Theta_i}H_t^\uind{i}\|_2^2 + 2\Wii^2\|\del^2_{\Theta_i}H_t^\uind{i}\|_F^2.
    \end{equation*}
    Using Lemma \ref{lemma:lipschitz} in the above inequality, we conclude the proof with $\gamma_t = \sqrt{2}L_t+\sqrt{2}\alpha_m\beta_t.$
\end{proof}

\bibliographystyle{siamplain}
\bibliography{references}

\end{document}